\def\eqref#1{equation~\ref{#1}}
\def\1{\bm{1}}
\DeclareMathAlphabet{\mathsfit}{\encodingdefault}{\sfdefault}{m}{sl}
\SetMathAlphabet{\mathsfit}{bold}{\encodingdefault}{\sfdefault}{bx}{n}
\newcommand{\R}{\mathbb{R}}
\DeclareMathOperator*{\argmin}{arg\,min}
\title{Gradient Step Denoiser for convergent Plug-and-Play
}
\author{Samuel Hurault \thanks{Corresponding author: samuel.hurault@math.u-bordeaux.fr}, Arthur Leclaire \& Nicolas Papadakis \\
Univ. Bordeaux, Bordeaux INP, CNRS, IMB, UMR 5251,F-33400 Talence, France\vspace*{-0.3cm}}
\newcommand{\norm}[1]{||{#1}||}
\newcommand{\prox}{\operatorname{Prox}}
\newcommand{\id}{\operatorname{Id}}
\newcommand{\noise}{\xi}
\newcommand{\dime}{n}
\newcommand{\bfit}[1]{\textbf{\textit{#1}}}
\newtheorem{lemma}{Lemma}
\newtheorem{remark}{Remark}
\newtheorem{thm}{Theorem}
\newtheorem{definition}{Definition}
\newtheorem{proposition}{Proposition}
\begin{document}

    \maketitle

    \begin{abstract}
    Plug-and-Play (PnP) methods constitute a class of iterative algorithms for imaging problems where regularization is performed by an off-the-shelf denoiser. Although PnP methods can lead to tremendous visual performance for various image problems, the few existing convergence guarantees are based on unrealistic (or suboptimal) hypotheses on the denoiser, or limited to strongly convex data-fidelity terms. We propose a new type of PnP method, based on half-quadratic splitting, for which the denoiser is realized as a gradient descent step on a functional parameterized by a deep neural network. Exploiting convergence results for proximal gradient descent algorithms in the nonconvex setting, we show that the proposed PnP  algorithm is a convergent iterative scheme that targets stationary points of an explicit global functional. Besides, experiments show that it is possible to learn such a deep denoiser while not compromising the performance in comparison to other state-of-the-art deep denoisers used in PnP  schemes. We apply our proximal gradient algorithm to various ill-posed inverse problems, e.g. deblurring, super-resolution and inpainting. For all these applications, numerical results empirically confirm the convergence results. Experiments also show that this new algorithm reaches state-of-the-art performance, both quantitatively and qualitatively.%
    % Comment for anonymous publication
    % \footnote{Source code is available at \url{https://github.com/samuro95/GS-PnP}}
    \end{abstract}

    \section{Introduction}

    Image restoration (IR) problems can be formulated as inverse problems of the form
    \begin{equation}
        \label{eq:pb_general}
        x^*\in\argmin_x f(x)+\lambda g(x)
    \end{equation}
    where $f$ is a term measuring the fidelity to a degraded observation $y$, and $g$ is a regularization term weighted by a parameter $\lambda\geq 0$.
    Generally, the degradation of a clean image $\hat x$ can be modeled by a linear operation $y=A \hat x+\noise$, where $A$ is a degradation matrix and $\noise$ a white Gaussian noise.
    In this context, the maximum a posteriori (MAP) derivation relates the data-fidelity term to the likelihood $f(x)=-\log p(y|x) = \frac1{2 \sigma^2}\norm{Ax-y}^2$, while the regularization term is related to the chosen prior.

    Regularization is crucial since it tackles the ill-posedness of the IR task by bringing a priori knowledge on the solution.
    A lot of research has been dedicated to designing accurate priors~$g$.
    Among the most classical priors, one can single out total variation~\citep{ROF}, wavelet sparsity~\citep{mallat2009sparse} or patch-based Gaussian mixtures~\citep{zoran2011learning}.
    Designing a relevant prior $g$ is a difficult task and recent approaches rather apply deep learning techniques to directly learn a prior from a database of clean images~\citep{lunz2018adversarial,prost2021learning,gonzalez2021solving}.

    Generally, the problem~(\ref{eq:pb_general}) does not have a closed-form solution, and an optimization algorithm is required.
    First-order proximal splitting algorithms~\citep{CombettesPesquet} operate individually on $f$ and $g$ via the proximity operator
    \begin{equation}\label{def:prox}
    \prox_{f}(x)=\argmin_z \frac{1}{2}||x-z||^2+f(z).
    \end{equation} Among them, half-quadratic splitting~(HQS)~\citep{geman1995nonlinear} alternately applies the proximal operators of $f$ and $g$. Proximal methods are particularly
    useful when either $f$ or $g$ is nonsmooth.

    Plug-and-Play (PnP) methods~\citep{venkatakrishnan2013plug} build on proximal
    splitting algorithms by replacing the proximity operator of $g$ with a generic denoiser, \emph{e.g.} a pretrained deep network.
    These methods achieve state-of-the-art results~\citep{buzzard2018pnp,ahmad2020pnp,Yuan_2020_CVPR,zhang2021plug} in various IR problems. However, since a generic denoiser cannot generally be expressed as a proximal mapping~\citep{Moreau1965}, convergence results, which stem from the properties of the proximal operator, are difficult to obtain.
    Moreover, the regularizer $g$ is only made implicit via the denoising operation.
    Therefore, PnP algorithms do not seek the minimization of an explicit objective functional which strongly limits their interpretation and numerical control.

    In order to keep tractability of a minimization problem, \citet{romano2017little} proposed, with regularization by denoising (RED), an explicit prior $g$ that exploits a
    given generic denoiser~$D$ in the form $g(x) = \frac{1}{2}\langle x,x - D(x) \rangle$.
    With strong assumptions on the denoiser (in particular a symmetric Jacobian assumption), they show that it verifies
    \begin{equation}
        \label{eq:Tweedie_RED}
        \nabla_x g(x) = x - D(x).
    \end{equation}
    Such a denoiser is then plugged in gradient-based minimization schemes. Despite having shown very good results on various image restoration tasks,
    as later pointed out by~\citet{reehorst2018regularization} or~\citet{saremi2019approximating}, existing deep denoisers lack Jacobian symmetry. Hence, RED does not minimize an explicit functional and is not guaranteed to converge.

    \paragraph{Contributions.} In this work, we develop a PnP scheme with novel theoretical convergence guarantees and state-of-the-art IR performance.
    %Departing from a slight modification of the PnP-HQS framework
    Departing from the PnP-HQS framework, we plug a denoiser that inherently satisfies equation~(\ref{eq:Tweedie_RED}) without sacrificing the denoising performance.
    %Once plugged in a PnP framework,
    The resulting fixed-point algorithm is guaranteed to converge to a stationary point of an explicit functional.
    This convergence guarantee does not require strong convexity of the data-fidelity term, thus encompassing ill-posed IR tasks like deblurring, super-resolution or inpainting.

     \section{Related works} \label{sec:related_works}

   PnP methods have been successfully applied in the literature with various splitting schemes: HQS~\citep{zhang2017learning,zhang2021plug}, ADMM~\citep{romano2017little,ryu2019plug}, Proximal Gradient Descent (PGD) \citep{terris2020building}.
    First used with classical non deep denoisers such as BM3D~\citep{chan2016plug} and pseudo-linear denoisers~\citep{nair2021fixed,gavaskar2021plug}, more recent PnP approaches~\citep{meinhardt2017learning,ryu2019plug} rely on efficient off-the-shelf deep denoisers such as DnCNN~\citep{zhang2017beyond}.
     State-of-the-art IR results are currently obtained with denoisers that are specifically designed to be integrated in PnP schemes, like IRCNN~\citep{zhang2017learning} or  DRUNET~\citep{zhang2021plug}.
     Though providing excellent restorations, such schemes are not guaranteed to converge for all kinds of denoisers or IR tasks.

    Designing convergence proofs for PnP algorithms is an active research topic. \citet{sreehari2016plug} used the proximal theorem of Moreau~\citep{Moreau1965} to give sufficient conditions for the denoiser to be an explicit proximal map, which are applied to a pseudo-linear denoiser.
   The \hbox{convergence with} pseudo-linear denoisers have been extensively studied~\citep{gavaskar2020plug,nair2021fixed,chan2019performance}.
    However, state-of-the-art PnP results are obtained with deep denoisers.
    \hbox{Various} assumptions have been made to ensure the convergence of the related PnP schemes.
   With a ``bounded denoiser'' assumption, \citet{chan2016plug, gavaskar2019proof} showed convergence of PnP-ADMM with stepsizes decreasing to 0. RED \citep{romano2017little} and RED-PRO    \citep{cohen2021regularization} respectively consider the  classes of denoisers with symmetric Jacobian or  demi-contractive mappings, but these conditions are either too restrictive or hard to verify in practice. In Appendix~\ref{app:RED}, more details are given on RED-based methods.
    \hbox{Many works} focus on Lipschitz pro\-perties of  PnP operators. % and rely on Banach fixed-point theorem.
    Depending on the splitting algorithm in use, convergence can  be obtained by assuming the denoiser averaged~\citep{sun2019online}, firmly nonexpansive~\citep{sun2021scalable,terris2020building} or simply  nonexpansive~\citep{reehorst2018regularization,liu2021recovery}.
    These settings are unrealistic as deep denoisers do not generally satisfy such properties.
    \citet{ryu2019plug,terris2020building} propose different ways to train deep denoisers with constrained Lipschitz \hbox{constants, in} order to fit the technical properties required for convergence.
    But imposing hard Lipschitz constraints on the network alters its denoising \hbox{performance}~\citep{bohra2021learning,hertrich2021convolutional}.
    Yet,~\citet{ryu2019plug} manages to get a convergent PnP scheme without assuming the \hbox{nonexpansiveness of} $D$. This comes at the cost of imposing strong convexity on the data-fidelity term $f$, which excludes many IR tasks like deblurring, super-resolution or inpainting.
    Hence, given the ill-posedness of IR problems, looking for a unique solution via contractive operators is a restrictive assumption.
    In this work, we do not impose contractiveness, but still obtain convergence results with realistic \hbox{hypotheses}.

    One can relate the ideal deep denoiser to the ``true" natural image prior $p$ via Tweedie's Identity.
    In~\citep{efron2011tweedie}, it is indeed shown that the Minimum Mean Square Error~(MMSE) denoiser~$D^*_\sigma$ (at noise level $\sigma$) verifies ${D_\sigma(x) = x + \sigma^2 \nabla_x \log p_\sigma(x)}$ where $p_{\sigma}$ is the convolution of $p$ with the density of $\mathcal{N}(0,\sigma^2 \id)$.
    In a recent line of research~\citep{bigdeli2017deep, xu2020provable,laumont2021bayesian,kadkhodaie2020solving}, this relation is used to plug a denoiser in gradient-based dynamics.
    In practice, the MMSE denoiser cannot be computed explicitly and Tweedie's Identity does not hold for deep approximations of the MMSE.
   % Finally, let us mention that the form of prior considered here was already suggested in the RED papers~\citet{bigdeli2017image} and~\citet[Section 5.2]{romano2017little} but not exploited for convergence analysis.
In order to be as exhaustive as possible, we detailed the addressed limitations of existing PnP methods in Appendix~\ref{app:review}.

    \section{The Gradient Step Plug-and-Play }
    \label{sec:method}
    The proposed method is based on the PnP version of half-quadratic-splitting (PnP-HQS) that amounts to replacing the proximity operator of the prior $g$ with an off-the-shelf denoiser $D_\sigma$.
  In order to define a convergent PnP scheme, we first set up in Section~\ref{ssec:GSdenoiser} a Gradient Step (GS) denoiser. We then introduce the Gradient Step PnP (GS-PnP) algorithm in Section~\ref{ssec:PnPGS}.

    \subsection{Gradient Step denoiser}\label{ssec:GSdenoiser}

    We propose to plug a denoising operator $D_\sigma$ that takes the form of a gradient descent step
    \begin{equation}
        \label{eq:gradient_denoiser}
        D_\sigma = \id - \nabla g_\sigma ,
    \end{equation}
    with $g_\sigma : \mathbb{R}^n \to  \mathbb{R}$.
    Contrary to~\citet{romano2017little}, our denoiser exactly represents a conservative vector field.
    The choice of the parameterization of $g_{\sigma}$ is fundamental for the denoising performance.
    As already noticed in~\citet{salimans2021should}, we experimentally found that directly modeling $g_\sigma$ as a neural network ({\em e.g.} a standard network used for classification) leads to poor denoising performance.
    In order to keep the strength of state-of-the-art unconstrained denoisers, we rather use 
    \begin{align}
            \label{eq:g_model}
        g_\sigma(x) &= \frac{1}{2} \norm{x - N_\sigma(x)}^2,\\
        \label{eq:D_with_g_model}
         \textrm{which leads to }  D_\sigma(x) &= x - \nabla g_\sigma(x) = N_\sigma(x) + J_{N_\sigma}(x)^T(x-N_\sigma(x)),
    \end{align}
    where $N_{\sigma} : \mathbb{R}^n \to \mathbb{R}^n$ is parameterized by a neural network and $J_{N_\sigma}(x)$ is the Jacobian of~$N_\sigma$ at point $x$.
    As discussed in Appendix~\ref{app:regul}, the formulation (\ref{eq:g_model}) for  $g_\sigma$ has been proposed  in \cite[Section 5.2]{romano2017little} and~\citep{bigdeli2017image} for a distinct but related purpose, and not exploited for convergence analysis.
   Thanks to our  definition (\ref{eq:D_with_g_model}) for~$D_\sigma$, {\bf we can parameterize $N_\sigma$ with any differentiable neural network architecture}
    ${\mathbb{R}^n \to \mathbb{R}^n}$ that has proven efficient for image denoising.
    Although the representation power of the denoiser is limited by the particular form (\ref{eq:D_with_g_model}), we show (see Section~\ref{ssec:GDS-denoiser-exp}) that such parameterization still yields state-of-the-art denoising results.
    We train the denoiser $D_\sigma$ for Gaussian noise by minimizing the MSE loss function
    \begin{align}
        \label{eq:MSE_loss}
        \mathcal{L}(D_\sigma)&= \mathbb{E}_{x \sim p, \noise_\sigma \sim \mathcal{N}(0,\sigma^2I)}[\norm{D_\sigma
        (x+\noise_\sigma)-x}^2],\\
        \label{eq:MSE_loss2}
    \textrm{or }    \mathcal{L}(g_\sigma) &= \mathbb{E}_{x \sim p, \noise_\sigma \sim \mathcal{N}(0,\sigma^2I)}[\norm{\nabla g_\sigma
        (x+\noise_\sigma)-\noise_\sigma}^2],
    \end{align}
    when written  in terms of $g_{\sigma}$ using equation~(\ref{eq:gradient_denoiser}).%, it can be written in terms of $g_{\sigma}$:
     \begin{remark} \label{rem:tweedie}
         By definition, the optimal solution $g_\sigma^* \in \argmin \mathcal{L}$ is related to the MMSE denoiser~$D_{\sigma}^*$, that is, the best non-linear predictor of $x$ given $x+\noise_{\sigma}$.
    Therefore, it satisfies Tweedie's formula and ${\nabla g_\sigma^* = -\sigma^2 \nabla \log p_\sigma}$ \citep{efron2011tweedie} \emph{i.e.} $g_\sigma^* = -\sigma^2 \log p_\sigma + C$, for some
    $C \in \mathbb{R}$.
    Hence approximating the MMSE denoiser with a denoiser parameterized as (\ref{eq:gradient_denoiser}) is related to approximating the logarithm of the smoothed image prior of $p_\sigma$ with $-\frac{1}{\sigma^2}g_\sigma$.
    This relation was used for image generation with ``Denoising Score Matching'' by~\citet{saremi2019neural,bigdeli2020learning}.
    \end{remark}

    \subsection{A Plug-and-Play method for explicit minimization}\label{ssec:PnPGS}

    The  standard PnP-HQS operator is $T_{\text{PnP-HQS}} =D_{\sigma}\circ\prox_{\tau f}$, \emph{i.e.}  $(\id - \nabla g_\sigma) \circ \prox_{\tau f}$ when using the GS denoiser as $D_{\sigma}$.
    %For convergence analysis, it proves fruitful to fit the proximal gradient descent algorithm by switching  the proximal and gradient steps. We also propose to relax the denoising step with a parameter $\lambda\geq 0$.
    For convergence analysis, we wish to fit the proximal gradient descent (PGD) algorithm. We thus propose to switch the proximal and gradient steps and to relax the denoising step with a parameter $\lambda\geq 0$.
    Our PnP  algorithm with GS denoiser (GS-PnP) then writes
    \begin{equation}
      \label{eq:our_PnP}
     \begin{array}{rll} x_{k+1}  = T_\text{GS-PnP}^{\tau,\lambda}(x_k) \; \text{with} \;\;
            T_\text{GS-PnP}^{\tau,\lambda}  &=&\prox_{\tau f}\circ  ( \tau \lambda D_\sigma + (1-\tau \lambda)\id),\\
           &=&  \prox_{\tau f} \circ  (\id - \tau \lambda \nabla g_\sigma).%\vspace*{-.35cm}
           \end{array}
    \end{equation}

    Under suitable conditions on $f$ and $g_\sigma$ (see Lemma~\ref{lem:statio} in Appendix~\ref{app:PGD_1_proof}), fixed points of the PGD operator~$T_\text{GS-PnP}^{\tau,\lambda}$ correspond to
    critical points of a classical objective function in IR problems
    \begin{equation}
        \label{eq:PnP_pb}
        F(x)= f(x)+\lambda g_\sigma(x).
    \end{equation}
    Therefore, using the GS denoiser from equation~(\ref{eq:gradient_denoiser}) is equivalent to include an explicit
    regularization and thus leads to a tractable global optimization problem solved by the PnP algorithm.
    Our complete PnP scheme is presented in Algorithm~\ref{alg:PnP_algo}. It includes a backtracking procedure on the stepsize $\tau$ that will be detailed in Section~\ref{ssec:backtracking}.
    Also, after convergence, we found it useful to apply an extra gradient step $\id - \lambda \tau \nabla g_\sigma$
    in order to discard the residual noise brought by the last proximal step $\prox_{\tau f}$.

    \section{Convergence analysis}\label{sec:convergence-analysis}
    \begin{wrapfigure}[11]{r}{0.503\textwidth}\vspace{-1.6cm}
   \begin{algorithm}[H]
        \SetKwInOut{Input}{Input}\SetKwInOut{Output}{Output}\SetKwInOut{Data}{Param.}
        \caption{Plug-and-Play image restoration} \label{alg:PnP_algo}
        \Data{init. $z_0$, $\lambda>0$, $\sigma \geq 0$, $\epsilon>0$,  $\tau_0>0$, $K \in \mathbb{N}^*$, $\eta \in (0,1)$, $\gamma \in (0,1/2)$.}
        \Input{degraded image $y$.}
        \Output{restored image $\hat x$.}
        $k = 0$; $x_0 = \prox_{\tau f}(z_0)$; $\tau = \tau_0 / \eta$; $\Delta>\epsilon$ \;
        \While{$k < K$ and $\Delta>\epsilon$}{
        $z_k = \lambda \tau D_\sigma(x_{k}) + (1-\lambda \tau) x_{k} $\;
        $x_{k+1} = \prox_{\tau f}(z_k)$; \\
        \textbf{if} $F(x_{k})-F(x_{k+1}) < \frac{\gamma}{\tau}\norm{x_{k}-x_{k+1}}^2$ \;
        \textbf{then} $\tau = \eta \tau$\;
        \textbf{else}
        $\Delta= \frac{F(x_{k})-F(x_{k+1})}{F(x_0)}$; $k = k+1$ \;
      }
        $\hat x = \lambda \tau  D_\sigma(x_{K}) + (1 - \lambda \tau) x_{K} $\;
    \end{algorithm}
    \end{wrapfigure}

    In this section, we introduce conditions on $f$ and $D_\sigma$ that will ensure the convergence of the PnP iterations~(\ref{eq:our_PnP}) towards a solution of~(\ref{eq:PnP_pb}).
    For that purpose, we make use of the literature of convergence analysis~\citep{attouch2013convergence, beck2017first,beck2009gradient} of the PGD
    algorithm in the nonconvex setting.

    \subsection{Convergence results}\label{ssec:main-results}

    A common setting for image restoration is the convex smooth $L^2$ data-fidelity ${f(x) = \norm{Ax-y}^2}$.
    In order to cover the noiseless case or to deal with a broader range of common degradation models, like Laplace or Poisson noise, we only assume $f$ to be proper, lower semicontinous and convex.
    Next, the regularizer $g_\sigma$ is assumed to be differentiable with Lipschitz gradient, but not necessarily convex.
    This assumption on  $g_\sigma$ is reasonable from a practical perspective.
    Indeed, using a network $N_{\sigma}$ with differentiable activation functions, the function $g_\sigma$ introduced in Section~\ref{sec:method} is differentiable with Lipschitz gradient (details and proof are given in Appendix~\ref{app:lipschitz}).
    Without further assumptions on $f$ and $g_\sigma$, the following theorem establishes the convergence of both the objective function values and the residual
  for a large variety of IR tasks. %, our PnP scheme~(\ref{eq:our_PnP}) is guaranteed to converge to a stationary point of the targeted objective function $F$.
    \begin{thm}[Proof in Appendix~\ref{app:PGD_1_proof}]
        \label{thm:PGD_1}
        Let $f : \mathbb{R}^n \to \mathbb{R} \cup \{+\infty\}$ and $g_\sigma : \mathbb{R}^n \to \mathbb{R}$ be proper lower semicontinous functions with $f$ convex and $g_\sigma$  differentiable with $L$-Lipschitz gradient.
        Let $\lambda >0$, $F = f+ \lambda g_\sigma$ and assume that $F$ is bounded from below.
        Then, for $\tau < \frac{1}{\lambda L}$, the iterates $x_k$ given by the iterative scheme~(\ref{eq:our_PnP}) verify %\vspace{-0.1cm}
        \begin{itemize}
            %\vspace{-0.cm}
            \item[(i)] $(F(x_k))$ is non-increasing and converges.
            %\vspace{-0.1cm}
            \item[(ii)] The residual $\norm{x_{k+1}-x_k}$ converges to $0$.
            %\vspace{-0.1cm}
            \item[(iii)] All cluster points of the sequence $(x_k)$ are stationary points of~(\ref{eq:PnP_pb}).
        \end{itemize}
    \end{thm}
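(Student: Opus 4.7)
The plan is to follow the classical convergence analysis of proximal gradient descent in the nonconvex setting (\`a la Attouch--Bolte--Svaiter or Beck's textbook), noting that here the smooth part $\lambda g_\sigma$ is nonconvex while the nonsmooth part $f$ is convex. The central technical step is a sufficient-decrease inequality, after which parts (i), (ii), (iii) follow in sequence.

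\textbf{Step 1: sufficient decrease.} Since $\nabla(\lambda g_\sigma)$ is $\lambda L$-Lipschitz, the descent lemma gives
\[
\lambda g_\sigma(x_{k+1}) \leq \lambda g_\sigma(x_k) + \langle \lambda \nabla g_\sigma(x_k), x_{k+1}-x_k\rangle + \tfrac{\lambda L}{2}\norm{x_{k+1}-x_k}^2.
\]
On the other hand, writing the proximal step~(\ref{eq:our_PnP}) as the minimization
$x_{k+1} = \argmin_z f(z) + \tfrac{1}{2\tau}\norm{z - (x_k - \tau\lambda \nabla g_\sigma(x_k))}^2$
and comparing the value at $z=x_{k+1}$ with that at $z=x_k$ yields
\[
f(x_{k+1}) + \langle \lambda \nabla g_\sigma(x_k), x_{k+1}-x_k\rangle + \tfrac{1}{2\tau}\norm{x_{k+1}-x_k}^2 \leq f(x_k).
\]
Adding the two inequalities and cancelling the inner-product term gives
\[
F(x_{k+1}) \leq F(x_k) - \left(\tfrac{1}{2\tau}-\tfrac{\lambda L}{2}\right)\norm{x_{k+1}-x_k}^2.
\]
The assumption $\tau<1/(\lambda L)$ ensures the constant $c := \tfrac{1}{2\tau}-\tfrac{\lambda L}{2} > 0$.

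\textbf{Step 2: items (i) and (ii).} Item (i) follows immediately: $(F(x_k))$ is non-increasing and bounded below, hence convergent. For (ii), telescoping the sufficient-decrease inequality gives
\[
c\sum_{k=0}^{N}\norm{x_{k+1}-x_k}^2 \leq F(x_0) - F(x_{N+1}) \leq F(x_0) - \inf F < +\infty,
\]
so the series converges and in particular $\norm{x_{k+1}-x_k} \to 0$.

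\textbf{Step 3: item (iii), the main subtlety.} Let $x^*$ be a cluster point with $x_{k_j}\to x^*$. By (ii), $x_{k_j+1}\to x^*$ as well. The first-order optimality condition of the prox step reads
\[
\frac{x_{k_j}-x_{k_j+1}}{\tau} - \lambda \nabla g_\sigma(x_{k_j}) \in \partial f(x_{k_j+1}).
\]
The left-hand side converges to $-\lambda \nabla g_\sigma(x^*)$ by continuity of $\nabla g_\sigma$ (Lipschitz) and (ii). Since $f$ is proper lsc convex, the graph of $\partial f$ is sequentially closed (this is the place where convexity of $f$ is essential; in the fully nonconvex setting one would need the limiting subdifferential machinery instead). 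Passing to the limit yields $-\lambda\nabla g_\sigma(x^*)\in \partial f(x^*)$, i.e.\ $0\in\partial F(x^*)$, so $x^*$ is a stationary point of~(\ref{eq:PnP_pb}).

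The hard part is essentially Step 3: one must be careful that the subgradient inclusion survives the limit, which requires both the continuity of $\nabla g_\sigma$ (guaranteed by the Lipschitz-gradient hypothesis) and the closed graph of $\partial f$ (guaranteed by convexity and lower semicontinuity of $f$). Steps 1 and 2 are purely mechanical once the descent inequality is in place, and the role of the stepsize condition $\tau<1/(\lambda L)$ is precisely to make the quadratic constant $c$ positive.
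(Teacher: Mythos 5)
Your proof is correct. Steps 1 and 2 are, modulo notation, identical to the paper's argument: the paper packages the two inequalities into a majorizing function $Q_\tau(x,y)=g_\sigma(y)+\langle x-y,\nabla g_\sigma(y)\rangle+\frac{1}{2\tau}\norm{x-y}^2+f(x)$ whose minimizer in $x$ is $T_\tau(y)$, but the resulting sufficient-decrease inequality $F(x_{k+1})\leq F(x_k)-\big(\tfrac{1}{2\tau}-\tfrac{\lambda L}{2}\big)\norm{x_{k+1}-x_k}^2$ and the telescoping are the same as yours. Where you genuinely diverge is item (iii). The paper proves two small lemmas --- that fixed points of $T_\tau=\prox_{\tau f}\circ(\id-\tau\lambda\nabla g_\sigma)$ are exactly the stationary points of $F$, and that $T_\tau$ is $(1+\tau\lambda L)$-Lipschitz because $\prox_{\tau f}$ is nonexpansive for convex $f$ --- and then shows directly that any cluster point satisfies $x^*=T_\tau(x^*)$ via the triangle inequality $\norm{x^*-T_\tau(x^*)}\leq(2+\tau\lambda L)\norm{x^*-x_{k_j}}+\norm{x_{k_j}-T_\tau(x_{k_j})}$. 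You instead pass to the limit in the prox optimality condition $\frac{x_{k_j}-x_{k_j+1}}{\tau}-\lambda\nabla g_\sigma(x_{k_j})\in\partial f(x_{k_j+1})$ using the sequential closedness of the graph of $\partial f$. Both routes exploit convexity of $f$, but in different places: the paper through nonexpansiveness of the prox, you through the closed-graph property of the convex subdifferential. Your route is the one that scales to nonconvex $f$ (replacing $\partial f$ by the limiting subdifferential, at the cost of additionally verifying $f(x_{k_j+1})\to f(x^*)$), which is precisely the adaptation the paper defers to Li--Lin in its Remark 3; the paper's route has the conceptual advantage of exhibiting cluster points as fixed points of the PnP operator, which is the natural object in the plug-and-play picture and is reused elsewhere (Lemma 1 is cited in the appendix discussion of RED-PRO).
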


    \begin{remark}
        \label{rem:conv_rate}
        In the nonconvex setting, the quantity
        $\gamma_k = \min_{0 \leq i \leq k}\norm{x_{i+1}-x_i}^2$ is commonly used to analyze the convergence rate of the
        algorithm~\citep{beck2009gradient,ochs2014ipiano}. Following the  proof of Theorem~\ref{thm:PGD_1} in~Appendix~\ref{app:PGD_1_proof}, we can obtain
       $\gamma_k \leq \frac{1}{k} \frac{F(x_0)-\lim{F(x_k)}}{\frac{1}{2\tau} -\frac{L}{2}}$
        that is to say a $\mathcal{O}(\frac{1}{k})$ convergence rate for the squared $L^2$ residual.
    \end{remark}

    \begin{remark} \label{rem:nonconvex_f}
       Even if most data-fidelity terms $f$ encountered in image restoration are convex, Theorem~\ref{thm:PGD_1} can be extended to nonconvex $f$. The proof of (iii) requires technical adaptations
        that can be found in~\citet[Theorem 1]{li2015accelerated} (as the 1-Lipschitz property of $\prox_{\tau f}$ does not hold anymore). Such nonconvex $f$ appear for example in the context of phase retrieval~\citep{metzler2018prdeep}.
    \end{remark}
    We can further obtain convergence of the iterates by assuming that the generated sequence~\hbox{$(x_k)$ is}
    bounded and that $f$ and $g_\sigma$ verify the Kurdyka-Lojasiewicz (KL) property.
    The boundedness of $(x_k)$ is discussed in Appendix~\ref{app:bounded_seq}.
    The KL property (defined in Appendix~\ref{app:KL_property}) has been widely used to study the convergence of optimization algorithms in the nonconvex setting~\citep{attouch2010proximal, attouch2013convergence, ochs2014ipiano}.
    Very large classes of functions, in particular all the semi-algebraic functions, satisfy this property.
    In practice, in the extent of our analysis, the KL property is always satisfied.

    \begin{thm}[Proof in~\citet{attouch2013convergence}, Theorem 5.1]
        \label{thm:PGD_2}
        Let $f : \mathbb{R}^n \to \mathbb{R} \cup \{+\infty\}$ and ${g_\sigma : \mathbb{R}^n \to \mathbb{R}}$ be proper lower semicontinous functions with $f$ convex and $g_\sigma$  differentiable with $L$-Lipschitz gradient.
        Let $\lambda >0$, $F = f+ \lambda g_\sigma$ and assume that $F$ is bounded from below. Assume 
        that $F$ verify the KL property. Suppose that $\tau < \frac{1}{\lambda L}$.
        If the sequence $(x_k)$ given by the iterative scheme~(\ref{eq:our_PnP}) is bounded, then it converges, with finite length, to a critical point $x^*$ of $F$.
    \end{thm}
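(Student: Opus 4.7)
The plan is to invoke the abstract convergence framework of \citet{attouch2013convergence} for nonconvex composite minimization. That framework states that, when $F$ satisfies the KL property, any bounded sequence generated by an iterative scheme verifying three conditions --- sufficient decrease (H1), a relative error bound on a limiting subgradient (H2), and a continuity condition along a convergent subsequence (H3) --- has finite length and converges to a critical point of $F$. I would therefore reduce the proof to checking H1--H3 for the iterates $(x_k)$ produced by $T_\text{GS-PnP}^{\tau,\lambda}$, recycling as much as possible from the proof of Theorem~\ref{thm:PGD_1}.

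For (H1), I would combine the descent lemma applied to $\lambda g_\sigma$ (valid since $\nabla g_\sigma$ is $L$-Lipschitz) with the optimality inequality for $x_{k+1}=\prox_{\tau f}(x_k-\tau\lambda\nabla g_\sigma(x_k))$ tested at $u=x_k$, exactly as in the proof of Theorem~\ref{thm:PGD_1}. This yields $F(x_{k+1}) + \bigl(\tfrac{1}{2\tau}-\tfrac{\lambda L}{2}\bigr)\norm{x_{k+1}-x_k}^2 \leq F(x_k)$, with a strictly positive prefactor since $\tau<1/(\lambda L)$. For (H2), the first-order optimality of the proximal step gives $\tfrac{1}{\tau}(x_k-x_{k+1})-\lambda\nabla g_\sigma(x_k)\in\partial f(x_{k+1})$; adding $\lambda\nabla g_\sigma(x_{k+1})$ produces an element of $\partial F(x_{k+1})$ whose norm is bounded by $\bigl(\tfrac{1}{\tau}+\lambda L\bigr)\norm{x_{k+1}-x_k}$, again using the Lipschitz continuity of $\nabla g_\sigma$.

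For (H3), boundedness of $(x_k)$ yields, by Bolzano--Weierstrass, a subsequence $x_{k_j}\to \tilde x$. The key point is that $f$ is merely lower semicontinuous, so $f(x_{k_j+1})\to f(\tilde x)$ is not automatic. I would recover it by testing the prox inequality at $u=\tilde x$ and passing to the limit, using $\norm{x_{k+1}-x_k}\to 0$ from Theorem~\ref{thm:PGD_1} (hence $x_{k_j+1}\to \tilde x$) together with continuity of $\nabla g_\sigma$; this delivers $\limsup_j f(x_{k_j+1})\leq f(\tilde x)$, and lower semicontinuity then forces equality. Continuity of $g_\sigma$ completes H3, and Theorem~\ref{thm:PGD_1}\,(iii) identifies $\tilde x$ as a critical point of $F$.

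Once H1--H3 are in place, \citet[Theorem 5.1]{attouch2013convergence} delivers $\sum_k \norm{x_{k+1}-x_k} < \infty$ and convergence of the whole sequence $(x_k)$ to some $x^*\in\operatorname{crit}(F)$. The main obstacle will be H3: because $f$ is not assumed continuous, the shifted index $k_j+1$ must be handled through the variational characterization of $\prox_{\tau f}$ rather than by naive continuity arguments. Every other ingredient is a direct reading of the estimates already used for Theorem~\ref{thm:PGD_1}.
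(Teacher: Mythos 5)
Your proposal is correct and follows essentially the same route as the paper, which simply delegates the proof to \citet[Theorem 5.1]{attouch2013convergence}: that cited theorem is precisely the abstract H1--H3 convergence framework specialized to exact forward-backward splitting, and your verification of sufficient decrease, the relative subgradient error bound, and the continuity condition (handling the merely lower semicontinuous $f$ via the prox inequality tested at the cluster point) is exactly the content of that reference. Nothing is missing.
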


    \begin{remark}
        As explained in~\citet[Remark 5.2]{attouch2013convergence}, the continuity of $f$ is not required since we use the ``exact forward-backward splitting algorithm".
       We can thus deal with non-continuous data-fidelity terms, as it is the case with the inpainting application in Appendix~\ref{ssec:inpainting}.
    \end{remark}%\vspace*{-0.1cm}

    A more detailed description of all the assumptions of Theorems~\ref{thm:PGD_1} and~\ref{thm:PGD_2} is given in Appendix~\ref{app:assumptions}.

    \subsection{Backtracking to handle the Lipschitz constant of $\nabla g_\sigma$}
    \label{ssec:backtracking}

    The convergence of Algorithm~\ref{alg:PnP_algo} actually requires to control the Lipschitz constant of $\nabla g_\sigma$ only on a small subset of images related to $\{x_k\}$.
    Therefore, estimating $L$ for all images and setting the maximum stepsize~$\tau$ accordingly will lead to sub-optimal convergence speed.
    In order to avoid small stepsizes, we use the backtracking strategy of~\citet[Chapter 10]{beck2017first} and \citet{ochs2014ipiano}.
    
    The convergence study in the proof of Theorem~\ref{thm:PGD_1} is based on the sufficient decrease property of~$F$ established in equation~(\ref{eq:decrease_F}).
    Without knowing the exact Lipschitz constant $L$, backtracking aims at finding the maximal stepsize $\tau$ yielding the sufficient decrease property.
    Given $\gamma \in (0,1/2)$, $\eta \in [0,1)$ and an initial stepsize $\tau_0 > 0$,
  %  the backtracking procedure  
  the following update rule on $\tau$ is applied at each iteration~$k$:%\vspace*{-0.1cm}
    \begin{equation}
        \label{eq:backtracking}
        \begin{split}
            \text{while } \ \ & F(x_{k}) - F(T_\text{GS-PnP}^{\tau,\lambda}(x_{k})) < \frac{\gamma}{\tau}  \norm{T_\text{GS-PnP}^{\tau,\lambda}(x_{k})-x_k}^2 ,  \quad \tau \longleftarrow \eta \tau.%\vspace*{-0.2cm}
        \end{split}
    \end{equation}

    \begin{proposition}[Proof in Appendix~\ref{app:PGD_3}]
        \label{prop:PGD_3}
        Under the assumptions of Theorem~\ref{thm:PGD_1}, at each iteration of the algorithm, the backtracking procedure (\ref{eq:backtracking}) is finite (\emph{i.e.} a stepsize satifying  ${F(x_{k}) - F(T_\text{GS-PnP}^{\tau,\lambda}(x_{k})) \geq \frac{\gamma}{\tau} \norm{T_\text{GS-PnP}^{\tau,\lambda}(x_{k})-x_k}^2}$ is found in a finite number if iterations), and with backtracking, the convergence results of Theorem \ref{thm:PGD_1} and Theorem \ref{thm:PGD_2} still hold.
    \end{proposition}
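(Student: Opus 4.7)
The plan is to split the proof into two parts: first, show that the backtracking loop terminates after finitely many trials at every outer iteration; second, show that the sufficient-decrease inequality produced by the loop is strong enough to replay the proofs of Theorems~\ref{thm:PGD_1} and~\ref{thm:PGD_2}.

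For finite termination, I would lean on the descent lemma. Setting $x_{k+1} = T_\text{GS-PnP}^{\tau,\lambda}(x_k) = \prox_{\tau f}(x_k - \tau \lambda \nabla g_\sigma(x_k))$, the minimality defining the prox yields
\begin{equation*}
f(x_{k+1}) + \frac{1}{2\tau}\|x_{k+1}-x_k\|^2 + \lambda \langle \nabla g_\sigma(x_k), x_{k+1}-x_k\rangle \leq f(x_k),
\end{equation*}
while the $L$-Lipschitz gradient of $g_\sigma$ gives the standard upper bound
\begin{equation*}
\lambda g_\sigma(x_{k+1}) \leq \lambda g_\sigma(x_k) + \lambda \langle \nabla g_\sigma(x_k),x_{k+1}-x_k\rangle + \frac{\lambda L}{2}\|x_{k+1}-x_k\|^2.
\end{equation*}
Adding the two yields $F(x_k) - F(x_{k+1}) \geq \bigl(\tfrac{1}{2\tau} - \tfrac{\lambda L}{2}\bigr)\|x_{k+1}-x_k\|^2$, which dominates $\tfrac{\gamma}{\tau}\|x_{k+1}-x_k\|^2$ as soon as $\tau \leq (1-2\gamma)/(\lambda L)$. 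Since $\gamma \in (0,1/2)$, this threshold is positive and independent of $k$, so the rule $\tau \leftarrow \eta\tau$ with $\eta \in (0,1)$ meets it after at most a fixed finite number of trials.

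For convergence, I would note that in Algorithm~\ref{alg:PnP_algo} the stepsize is never reset, so $(\tau_k)$ is non-increasing; combined with the threshold above it is uniformly bounded below by some $\tau_\mathrm{min} > 0$ (essentially $\eta(1-2\gamma)/(\lambda L)$) and above by $\tau_0/\eta$. The backtracking acceptance criterion then gives a uniform decrease $F(x_k)-F(x_{k+1}) \geq (\gamma \eta /\tau_0)\|x_{k+1}-x_k\|^2$, which is precisely the ingredient driving the proof of Theorem~\ref{thm:PGD_1}: monotone convergence of $F(x_k)$ follows directly; telescoping yields $\sum_k\|x_{k+1}-x_k\|^2<\infty$, hence vanishing residual; and for any cluster point one passes to the limit in the prox optimality relation $(x_k-x_{k+1})/\tau_k - \lambda\nabla g_\sigma(x_k) \in \partial f(x_{k+1})$, using $\tau_k \geq \tau_\mathrm{min}$ and continuity of $\nabla g_\sigma$ to conclude stationarity. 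For Theorem~\ref{thm:PGD_2}, the abstract Kurdyka--\L ojasiewicz framework of Attouch--Bolte--Svaiter only needs the sufficient decrease, a relative-error subgradient bound $\mathrm{dist}(0,\partial F(x_{k+1}))\leq C\|x_{k+1}-x_k\|$, and continuity of $F$ along the trajectory; all three remain valid with constants depending only on $\tau_\mathrm{min}$ and $\tau_0/\eta$, so the same argument extends.

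I expect the main obstacle to be the constant-tracking in the subgradient bound needed for the KL argument: one must verify that the variable stepsize $\tau_k$ does not spoil the relative-error inequality, which is exactly where the lower bound $\tau_k \geq \tau_\mathrm{min}$ established in the first step is crucial. Once that uniform bound is in place, the rest of the analysis is a line-by-line reprise of the fixed-stepsize proofs, with $1/\tau$ replaced by the uniform constant $\eta/\tau_0$ (or by $1/\tau_\mathrm{min}$ for upper-bound estimates).
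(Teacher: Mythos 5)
Your proof is correct and follows essentially the same route as the paper: the descent lemma combined with the prox minimality gives $F(x_k)-F(x_{k+1})\geq\bigl(\tfrac{1}{2\tau}-\tfrac{\lambda L}{2}\bigr)\norm{x_{k+1}-x_k}^2$, which exceeds the acceptance threshold once $\tau<(1-2\gamma)/(\lambda L)$, so backtracking terminates, and the accepted inequality is then substituted for equation~(\ref{eq:decrease_F}) in the proofs of Theorems~\ref{thm:PGD_1} and~\ref{thm:PGD_2}. You are in fact somewhat more careful than the paper in making explicit the uniform lower bound $\tau_k\geq\tau_{\min}$ and its role in the relative-error subgradient bound needed for the KL argument, details the paper leaves implicit.
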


    \begin{remark}
    In practice, as explained in Section~\ref{sec:exp}, we choose to initialize the stepsize as $\lambda \tau_0=1$ and backtracking hardly ever activates.
    In this particular case, our results prove convergence of the standard PnP-HQS scheme ${x_{k+1}=\prox_{\tau f} \circ D_\sigma(x_k)}$ applied with our specific denoiser.
    \end{remark}%\vspace*{-0.2cm}

    \section{Experiments}
    \label{sec:exp}
In this section, we first show the performance of the GS denoiser. Next we empirically confirm that our GS-PnP method is convergent while providing state-of-the art results for different IR tasks.
    \subsection{Gradient-descent-based denoiser}
    \label{ssec:GDS-denoiser-exp}

    \paragraph{Denoising Network Architecture}

    We choose to parameterize $N_{\sigma}$ with the architecture DRUNet~\citep{zhang2021plug}) (represented in Appendix~\ref{app:architecture}), a U-Net in which residual blocks are integrated.
    One first benefit of DRUNet is that it is built to take the noise level $\sigma$ as input, which is consistent with our formulation.
    Also, the U-Net models have previously offered good results in the context of prior approximation via Denoising Score Matching~\citep{ho2020denoising}.
    Furthermore, \citet{zhang2021plug} showed that DRUNet yields state-of-the-art results for denoising but also for PnP image restoration.
    In order to ensure differentiability w.r.t the input, we change RELU activations to ELU. We also limit the number of residual blocks to $2$ at each scale to lower the computational burden.

    \paragraph{Training details}

    We use the color image training dataset proposed in~\citet{zhang2021plug} \emph{i.e.}  a combination of the Berkeley segmentation dataset (CBSD)~\citep{MartinFTM01},  Waterloo Explo\-ration Database~\citep{ma2017waterloo}, DIV2K dataset~\citep{agustsson2017ntire} and Flick2K data\-set~\citep{lim2017enhanced}.
    During training, the input images are corrupted with a white Gaussian noise $\noise_\sigma$ with standard deviation $\sigma$ randomly chosen in $[0, 50/255]$.
    With our parameterization~(\ref{eq:D_with_g_model}) of~$D_\sigma$, the network $N_\sigma$ is trained to minimize the $L^2$ loss~(\ref{eq:MSE_loss}).
    While the original DRUNet is trained with $L^1$ loss, we stick to the $L^2$ loss to keep the interpretability of $g_\sigma$ as an approximation of the log prior (see Remark~\ref{rem:tweedie}).
    For each batch, the gradient $\nabla g_\sigma$ is computed with PyTorch  differentiation tools.
    
     We train the model on $128 \times 128$ patches randomly sampled from the training
    images, with batch size $16$, during $1500$ epochs. We use the ADAM optimizer with learning rate $10^{-4}$, divided by $2$ every $300$ epochs. It takes around one week to train the model on a single Tesla P100 GPU.%\vspace*{-0.1cm}

    \paragraph{Denoising results}
    We evaluate the PSNR performance of the proposed GS denoiser (GS-DRUNet) on $256 \times 256$ color images center-cropped from the original CBSD68 dataset images~\citep{MartinFTM01}.
    In Table~\ref{tab:denoising_results}, we compare, for various noise levels $\sigma$, our model with the simplified DRUNet (called ``DRUNet \emph{light}") that has the same architecture as our GS-DRUNet (2 residual blocks) and that is trained (with $L^2$ loss) without the conservative field constraint.
    We also provide comparisons with the original DRUNet~\citep{zhang2021plug} (with $4$ residual blocks at each scale and trained with $L^1$ loss) and two state-of-the-art denoisers encountered in the PnP literature:
    FFDNet~\citep{zhang2018ffdnet} and DnCNN~\citep{zhang2017beyond}.  For each network, we indicate in Table~\ref{tab:denoising_results} the average runtime while processing a $256 \times 256$  color image on one Tesla P100 GPU.

    \begin{wraptable}[15]{r}{0.5\textwidth}\vspace*{-0.1cm}
        \centering\footnotesize\setlength\tabcolsep{1.pt}\renewcommand{\arraystretch}{.95}
        \begin{tabular}{c c c c c c }
            $\sigma (./255)$ & 5 & 15 & 25 & 50 & Time (ms) \\
            \midrule
            FFDNet & $39.95$ & $33.53$ & $30.84$ & $27.54$ & $1.9$ \\
            DnCNN & $39.80$ & $33.55$ & $30.87$ & $27.52$ & $2.3$ \\
            DRUNet & $\mathbf{40.31}$ & $\mathbf{33.97}$ & $\mathbf{31.32}$ &  $\mathbf{28.08}$ & $69.8$ \\
            \midrule
            DRUNet \emph{light} & $40.19$ & $33.89$ & $31.25$ & $28.00$ & $6.3$ \\
            GS-DRUNet & \underline{$40.26$} & \underline{$33.90$} & \underline{$31.26$} & \underline{$28.01$} & $10.4$ \\
        \end{tabular}
        \caption{Average PSNR denoising performance and runtime of our GS denoiser on $256\times256$ center-cropped images from the CBSD68 dataset, for various
        noise levels $\sigma$. While keeping small runtime, GS-DRUNet slightly outperforms its unconstrained counterpart DRUNet \emph{light} and outdistances the deep denoisers
        FFDNet and DnCNN.   }
        \label{tab:denoising_results}
\end{wraptable}
    Our GS-DRUNet denoiser, despite being constrained to be an exact conservative field, reaches the performance of (and even slightly outperforms) its unconstrained counterpart DRUNet \emph{light}.
    Second, departing from the latter, we are able to reduce the processing time by a large margin ($\div 7$) while keeping close PSNR to the original DRUNet (around -0.05dB) and maintaining a significant PSNR gap (around +0.5dB) with other deep denoisers like DnCNN and FFDNet.
    Note that the time difference between GS-DRUNet and DRUNet \emph{light} is due to the computation of $\nabla g_{\sigma}$ via backpropagation.
    These results indicate that  GS-DRUNet is likely to yield a competitive and fast PnP algorithm.

    \subsection{Plug-and-Play image restoration}
We show in this section that, in addition to  being convergent, our PnP Alg.~\ref{alg:PnP_algo} reaches \hbox{state-of-the-art} performance in deblurring and super-resolution (Sections~\ref{ssec:deblurring} and \ref{ssec:super-res}) and realizes relevant in\-painting (Appendix~\ref{ssec:inpainting}).
    In all cases, we seek an estimate $x$ of a clean image $\hat x \in \mathbb{R}^\dime$, \hbox{from an} obser\-vation
 obtained as
     $ y = A\hat x + \noise_\nu\in \mathbb{R}^m$,
        with $A$ a $m\times\dime$ degradation matrix and $\noise_\nu$ a \hbox{white Gaussian} noise with zero mean and standard deviation $\nu$.
      The objective function minimized by Alg.~ \ref{alg:PnP_algo} is
   %\vspace*{-0.05cm}
    \begin{equation}
        \label{eq:F_algo1}
         F(x) = \frac{1}{2\nu^2}\norm{Ax-y}^2 + \frac{\lambda}{2}\norm{N_\sigma(x)-x}^2 .
    \end{equation}
    In practice, for $\nu >0$, we multiply $F$ in equation~(\ref{eq:F_algo1}) by $\nu^2$ and consider $F = f + \lambda_\nu g_{\sigma}$ with  ${f(x) = \frac12\norm{Ax-y}^2}$,
    $g_{\sigma}(x) = \frac12\norm{N_\sigma(x)-x}^2$ and $ \lambda_\nu = \lambda \nu^2$.
    With this formulation, the convergences of iterates and objective values are guaranteed by Theorems~\ref{thm:PGD_1} and~\ref{thm:PGD_2}.
   % We show in this section that, in addition to  being convergent, our algorithm reaches state-of-the art performance in deblurring and super-resolution.
    We also demonstrate in Appendix~\ref{ssec:inpainting} that our framework can be extended to other kinds of objective functions.
    For example, inpainting noise-free input images leads to a non differentiable data-fidelity term~$f$.

    Due to the large computational time of some compared methods, we use for evaluation and comparison a subset of $10$ color images taken from the CBSD68 dataset (CBSD10)
    together with the $3$ fa\-mous set3C images (butterfly, leaves and starfish). Quantitative results run on the full CBSD68 data\-set are given in Appendix~\ref{app:experiments}.
    All images are center-cropped to the size  $256\times256$.
    For each~IR problem, we provide default values for the parameters $\sigma$ and $\lambda$ that can be used to treat \hbox{sucessfully a} large class of images and degradations. The influence of both parameters is analyzed in \hbox{Appendix \ref{ssec:parameters}.} Performance can be marginally improved by tuning $\lambda$ for each image, for example with the me\-thod of~\citet{wei2020tuning} based on reinforcement learning.
    In our experiments, backtracking~is per\-formed with $\eta = 0.9$ and $\gamma = 0.1$. We observe (see Appendix~\ref{app:lipschitz}) that on a majority of images, the Lipschitz constant $L$ of $\nabla g_\sigma$ is slightly larger than~$1$.
    As convergence is ensured for $\lambda_\nu \tau = \nu^2 \lambda \tau < \frac{1}{L}$, we set the initial stepsize to $\tau_0 = (\nu^2 \lambda)^{-1}$. At the first iteration, the gradient step in equation~(\ref{eq:our_PnP})
    is thus exactly $D_{\sigma}$. In the majority of our experiments, backtracking is never activated. The algorithm is initialized with a proximal step and terminates when the relative difference between consecutive values of the objective function is less than $\epsilon$ or the number of iterations exceeds $K$.

    \subsubsection{Deblurring}
    \label{ssec:deblurring}

    For image deblurring, the degradation operator $A=H$ is a convolution performed with circular boundary conditions.
    Therefore, $H = \mathcal{F}^*\Lambda \mathcal{F}$, where $\mathcal{F}$ is the orthogonal matrix of the discrete Fourier transform (and $\mathcal{F}^*$ its inverse), and $\Lambda$ is a diagonal matrix.
    The proximal operator of the data-fidelity term $f(x) = \frac{1}{2}\norm{Hx-y}^2$ involves only element-wise inversion and writes\vspace*{.1cm}
    \begin{equation}
       \label{eq:proxdeb}
        \prox_{\tau f}(z) = \mathcal{F}^*(I_n + \tau \Lambda^* \Lambda)^{-1}\mathcal{F}(\tau H^Ty + z).
    \end{equation}
    We demonstrate the effectiveness of our method on a large variety of blur kernels (represented in Table \ref{tab:deblurring_results}) and noise levels. As in~\citep{zhang2017learning, pesquet2021learning,zhang2021plug}, we use the 8
    real-world camera shake kernels proposed in~\citet{levin2009understanding} as well as the $9 \times9$ uniform kernel and the $25 \times 25$ Gaussian kernel with standard deviation $1.6$
    (as in~\citep{romano2017little}).
    We consider Gaussian noise with $3$ noise levels $\nu \in\{2.55,7.65,12.75\}/255$ \emph{i.e.} $\nu \in\{0.01,0.03,0.05\}$.
    For all noise levels, we set $\sigma = 1.8 \nu$, $\lambda_{\nu} = \nu^2 \lambda = 0.1$ for motion blur (kernels (a) to (h)) and $\lambda_{\nu} = 0.075$ for static blur (kernels (i) and (j)).
    Initialization is done with~$z_0=y$ but we show in Appendix~\ref{ssec:initialization} the robustness to the  initialization. The stopping criteria are $\epsilon = 10^{-5}$ and $K=400$.

    We compare in Table~\ref{tab:deblurring_results} our method (GS-PnP) against the patch-based method EPLL~\citep{zoran2011learning, hurault2018epll}, the deep PnP methods IRCNN~\citep{zhang2017learning}, DPIR~\citep{zhang2021plug},
    MMO~\citep{pesquet2021learning}, and the ``RED-FP"  algorithm~\citep{romano2017little} (with TNRD denoiser~\citep{chen2016trainable}) referred to as~RED.
    Both IRCNN and DPIR use PnP-HQS with a fast decrease of $\tau$ and $\sigma$ in a few iterations (8 iterations for DPIR)
    without guarantee of convergence. DPIR uses the DRUNet denoiser from Table \ref{tab:denoising_results}. MMO is the only compared
    method that guarantees convergence by plugging a DnCNN denoiser trained with Lipschitz constraints (but the only given network was trained for very low noise level).
    Finally, as RED only treats the Y channel in the YCbCr color space, we also indicate in Appendix~\ref{app:deblurring}, for RED and the proposed method, the PSNR evaluated on the Y channel only.

    Among all methods, GS-PnP closely follows DPIR in terms of PSNR for low noise level but performs equally or better for higher noise levels.
    Other comparisons are conducted in \hbox{Appendix}~\ref{app:deblurring} on the Set3c and the full CBSD68 datasets (Tables~\ref{tab:deblurring_results_set3C} and~\ref{tab:deblurring_results_CBSD68}).
    These results exhibit that \hbox{GS-PnP} reaches state-of-the-art in PnP deblurring for a variety of kernels and noise levels. We \hbox{underline that}  the convergence  of GS-PnP is guaranteed, whereas DPIR can  asymptotically diverge (see Appendix~\ref{ssec:convergence}).\vspace*{-0.1cm}
  \newcommand{\len}{0.07\textwidth}

    \begin{table}[ht]\footnotesize
        \centering\setlength\tabcolsep{2.5pt}\renewcommand{\arraystretch}{.89}
        \begin{tabular}{c c c c c c c c c c  c c c }
            &   & (a) & (b) & (c) & (d) & (e) & (f) & (g) & (h) & (i) & (j) \\
            $\nu $ & Method &  \includegraphics[width=\len]{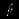}
            & \includegraphics[width=\len]{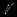}
            & \includegraphics[width=\len]{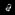}
            & \includegraphics[width=\len]{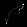}
            & \includegraphics[width=\len]{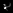}
            & \includegraphics[width=\len]{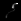}
            & \includegraphics[width=\len]{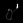}
            & \includegraphics[width=\len]{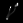}
            & \includegraphics[width=\len]{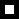}
            & \includegraphics[width=\len]{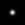} & \textit{Avg} \vspace*{-.1cm}\\
            \midrule
           \multirow{6}{*}{\rotatebox[origin=c]{90}{$0.01$}} & EPLL &   $28.32$ & $28.24$ & $28.36$ & $25.80$ & $29.61$ & $27.15$ & $26.90$ & $26.69$ & $25.84$ & $26.49$ & $\textit{27.34}$ \\
          &  RED &   $30.47$ & $30.01$ & $30.29$ & $28.09$ & $31.22$ & $28.92$& $28.90$ & $28.67$ & $26.66$ & $28.45$  & $\textit{29.17}$ \\
           & IRCNN &  $32.96$ & $32.62$ & $32.53$ & $32.44$ & $33.51$ & $33.62$ & $32.54$ & $32.20$ & $\underline{28.11}$ & $\underline{29.19}$ & $\textit{31.97}$\\
            & MMO &   $32.35$ & $32.06$ & $32.24$ & $31.67$ & $31.77$ & $33.17$ & $32.30$ & $31.80$ & $27.81$ & $\mathbf{29.26}$ & $\textit{31.44}$ \\
            &  DPIR &   $\mathbf{33.76}$ & $\mathbf{33.30}$ & $\mathbf{33.04}$ & $\mathbf{33.09}$ & $\mathbf{34.10}$ & $\mathbf{34.34}$ & $\mathbf{33.06}$ & $\mathbf{32.77}$ & $\mathbf{28.34}$ & $29.16$ & $\bfit{32.50}$ \\
            & GS-PnP  & $\underline{33.52}$ & $\underline{33.07}$ & $\underline{32.91}$ & $\underline{32.83}$ & $\underline{34.07}$ & $\underline{34.25}$ & $\underline{32.96}$ & $\underline{32.54}$ & $\underline{28.11}$ & $29.03$ & $\textit{\underline{32.33}}$ \\
            \midrule
          \multirow{5}{*}{\rotatebox[origin=c]{90}{$0.03$}} &  EPLL &    $25.31$ & $25.12$ & $25.82$ & $23.75$ & $26.99$ & $25.23$ & $25.00$ & $24.59$ & $24.34$ & $25.43$ & $\textit{25.16}$\\
            & RED &   $25.71$ & $25.32$ & $25.71$ & $24.38$ & $26.65$ & $25.50$ & $25.27$ & $24.99$ & $23.51$ & $25.54$ & $\textit{25.26}$ \\
          &  IRCNN  & $28.96$ & $28.65$ & $28.90$ & $28.38$ & $30.03$ & $29.87$ & $28.92$ & $28.52$ & $25.92$ & $27.64$ & $\textit{28.58}$\\
          &  IRCNN  & $28.96$ & $28.65$ & $28.90$ & $28.38$ & $30.03$ & $29.87$ & $28.92$ & $28.52$ & $25.92$ & $27.64$ & $\textit{28.58}$\\
            %&MMO &   $16.57$ & $15.90$ & $15.45$ & $15.65$ & $16.60$ & $16.34$ & $15.88$ & $15.65$ & $19.80$ & $23.18$ & $\textit{17.10}$ \\
            &DPIR &  $\mathbf{29.38}$ & $\mathbf{29.06}$ & $\mathbf{29.21}$ & $\mathbf{28.77}$ & $\underline{30.22}$ & $\mathbf{30.23}$ & $\mathbf{29.34}$ & $\underline{28.90}$ & $\underline{26.19}$ & $\underline{27.81}$ &  $\bfit{28.91}$\\
           & GS-PnP   & $\underline{29.22}$& $\underline{28.89}$ & $\underline{29.20}$ & $\underline{28.60}$ & $\mathbf{30.32}$ & $\underline{30.21}$ & $\underline{29.32}$ & $\mathbf{28.92}$ & $\mathbf{26.38}$ & $\mathbf{27.89}$ & $\textit{\underline{28.90}}$\\
            \midrule
            \multirow{5}{*}{\rotatebox[origin=c]{90}{$0.05$}} &EPLL &  $24.08$ & $23.91$ & $24.78$ & $22.57$ & $25.68$ & $23 .98$ & $23.70$ & $23.19$ & $23.75$ & $24.78$ & $\textit{24.04}$ \\
            &RED &   $22.78$ & $22.54$ & $23.13$ & $21.92$ & $23.78$ & $22.97$ & $22.89$ &  $22.67$ & $22.01$ & $23.78$ & $\textit{22.84}$\\
           & IRCNN &  $27.00$ & $26.74$ & $27.25$ & $26.37$ & $28.29$ & $28.06$ & $27.22$ & $26.81$ & $24.85$ & $26.83$ & $\textit{26.94}$\\
           %& MMO &   $12.73$ & $12.20$ & $11.89$ & $12.06$ & $12.65$ & $12.59$ & $12.16$ & $11.95$ & $15.34$ & $12.41$ & $\textit{12.60}$\\
            & DPIR &   $\mathbf{27.52}$ & $\mathbf{27.35}$ & $\mathbf{27.73}$ & $\mathbf{27.02}$ & $\underline{28.63}$ & $\mathbf{28.46}$ & $\underline{27.79}$ & $\underline{27.30}$ & $\underline{25.25}$ & $\underline{27.11}$ & $\textit{\underline{27.42}}$ \\
           & GS-PnP &   $\underline{27.45}$ & $\underline{27.28}$ & $\underline{27.70}$ & $\underline{26.98}$ & $\mathbf{28.68}$ & $\underline{28.44}$ & $\mathbf{27.81}$ & $\mathbf{27.38}$ & $\mathbf{25.49}$ & $\mathbf{27.15}$ & $\bfit{27.44}$\vspace*{-.2cm}
        \end{tabular}
        \caption{PSNR(dB) comparison of  image deblurring methods on CBSD10 with various blur kernels~$k$
          and noise levels $\nu$. Best and second best results are displayed in bold and underlined.
          }
        \label{tab:deblurring_results}
    \end{table}

    For qualitative comparison, we show in Figure~\ref{fig:deblurring1}(c-f) the deblurring obtained with various methods on the image ``starfish'' (from set3C).
    Note that our algorithm, compared to competing methods, can recover the sharpest edges. We also give
    convergence curves that empirically confirm the convergence of the values $F(x_k)$ (g) and of the residual $\min_{0 \leq i \leq k}\norm{x_{i+1}-x_i}^2/{{\norm{x_0}^2}}$ (h).
    These observations are supported by the additional experiment shown in Appendix~\ref{app:deblurring}, Figure~\ref{fig:deblurring2}.

    \begin{figure}[ht] \centering

        \begin{subfigure}[b]{.24\linewidth}
            \centering
            \begin{tikzpicture}[spy using outlines={rectangle,blue,magnification=5,size=1.5cm, connect spies}]
            \node {\includegraphics[height=3.cm]{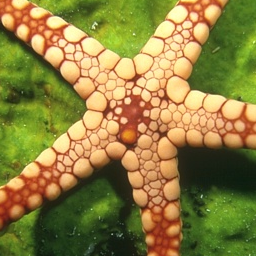}};
            \spy on (0.25,-0.4) in node [left] at (1.5,.75);
            \node at (-1.15,1.15) {\includegraphics[scale=1.2]{images/deblurring/kernels/kernel_1.png}};
            \end{tikzpicture}
            \caption{Clean}
        \end{subfigure}
    \begin{subfigure}[b]{.24\linewidth}
            \centering
            \begin{tikzpicture}[spy using outlines={rectangle,blue,magnification=5,size=1.5cm, connect spies}]
            \node {\includegraphics[height=3cm]{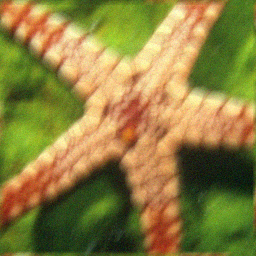}};
            \spy on (0.25,-0.4) in node [left] at  (1.5,.75);
            \end{tikzpicture}
            \caption{Observed ($20.97$dB)}
        \end{subfigure}
    \begin{subfigure}[b]{.24\linewidth}
            \centering
            \begin{tikzpicture}[spy using outlines={rectangle,blue,magnification=5,size=1.5cm, connect spies}]
            \node {\includegraphics[height=3cm]{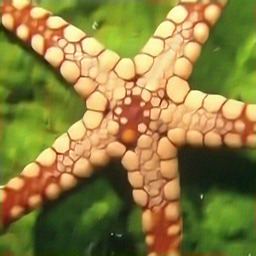}};
            \spy on (0.25,-0.4) in node [left] at  (1.5,.75);
            \end{tikzpicture}
            \caption{RED ($25.92$dB)}
        \end{subfigure}
    \begin{subfigure}[b]{.24\linewidth}
            \centering
            \begin{tikzpicture}[spy using outlines={rectangle,blue,magnification=5,size=1.5cm, connect spies}]
            \node {\includegraphics[height=3cm]{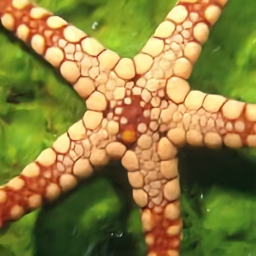}};
            \spy on (0.25,-0.4) in node [left] at  (1.5,.75);
            \end{tikzpicture}
            \caption{IRCNN ($28.66$dB)}
        \end{subfigure}
    \begin{subfigure}[b]{.24\linewidth}
            \centering
            \begin{tikzpicture}[spy using outlines={rectangle,blue,magnification=5,size=1.5cm, connect spies}]
            \node {\includegraphics[height=3cm]{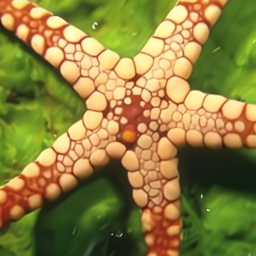}};
            \spy on (0.25,-0.4) in node [left] at  (1.5,.75);
            \end{tikzpicture}
            \caption{DPIR ($29.76$dB)}
        \end{subfigure}
    \begin{subfigure}[b]{.24\linewidth}
            \centering
            \begin{tikzpicture}[spy using outlines={rectangle,blue,magnification=5,size=1.5cm, connect spies}]
            \node {\includegraphics[height=3cm]{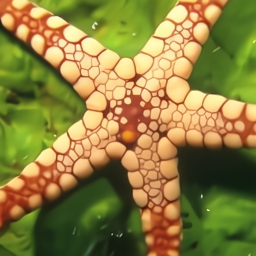}};
            \spy on (0.25,-0.4) in node [left] at (1.5,.75);
            \end{tikzpicture}
            \caption{GS-PnP ($29.90$dB)}
        \end{subfigure}
     \begin{subfigure}[b]{.24\linewidth}
        \centering
          \hspace*{-.1cm}\includegraphics[width=3.4cm]
      {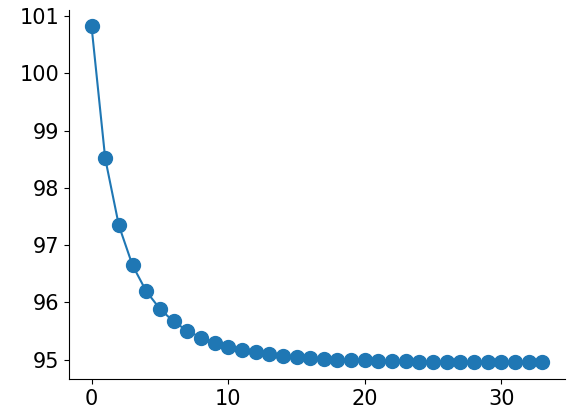}
        \caption{$F(x_k)$}
    \end{subfigure}
    \begin{subfigure}[b]{.24\linewidth}
        \centering
        \includegraphics[width=3.4cm]
        {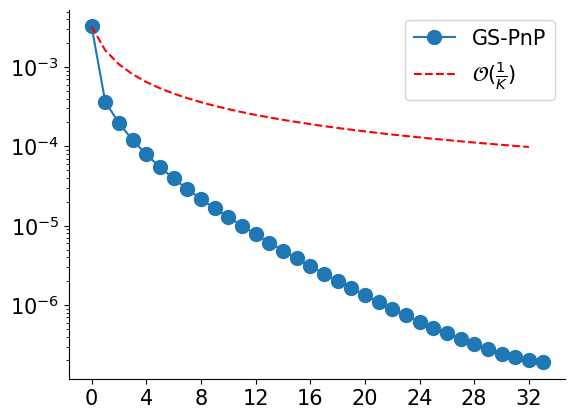}
        \caption{$\gamma_k$ (log scale)}
    \end{subfigure}
    \caption{Deblurring with various methods of ``starfish" degraded with the indicated blur kernel and input noise level $\nu=0.03$. Note
    that our algorithm better recovers the structures. In (g) and (h), we show the evolution of $F(x_k)$ and ${\gamma_k = \min_{0 \leq i \leq k}\norm{x_{i+1}-x_i}^2}/{{\norm{x_0}^2}}$ and in Appendix~\ref{ssec:PSNR} of the PSNR.
    We empirically verify convergence of functional values and residual. Note that the empirical convergence rate in (h) is faster than the $\mathcal{O}(\frac{1}{k})$ theoretical worst case rate established
    in Remark~\ref{rem:conv_rate}.}
    \label{fig:deblurring1}
    \end{figure}

    \subsubsection{Super-resolution}
    \label{ssec:super-res}

    For single image super-resolution, the
    low-resolution image $y \in \mathbb{R}^m$ is obtained from the high-resolution one $x \in \mathbb{R}^n$ via $y = SHx + \noise_\nu$ where
    $H \in \mathbb{R}^{n \times n}$ is the convolution with anti-aliasing kernel. The matrix $S$ is the standard s-fold downsampling matrix of size $m\times \dime$ and $\dime = s^2 \times m$.
   In this context, we make use of the closed-form calculation of the proximal map for the data-fidelity term $f(x) = \frac{1}{2}\norm{SHx-y}^2$, given by~\citet{zhao2016fast}:
    \begin{equation}
        \label{eq:proxSR}
        \prox_{\tau f}(z) =\hat z_\tau - \frac{1}{s^2}\mathcal{F}^*\underline{\Lambda}^*\left( I_m + \frac{\tau}{s^2} \underline{\Lambda} \underline{\Lambda}^* \right)^{-1}\underline{\Lambda} \mathcal{F} \hat z_\tau,
    \end{equation}
    where $\hat z_\tau = \tau H^TS^{T}y + z $ and $\underline{\Lambda} = [\Lambda_1, \ldots, \Lambda_{s^2}] \in \mathbb{R}^{m \times \dime}$, with $\Lambda = \mathrm{diag}(\Lambda_1, \ldots, \Lambda_{s^2})$ a block-diagonal decomposition according to a $s \times s$ paving of the Fourier domain.
    Note that $I_m + \frac{\tau}{d} \underline{\Lambda} \underline{\Lambda}^*$ is a $m \times m$ diagonal matrix and
    its inverse is computed element-wise. As expected, with $s=1$, equation~(\ref{eq:proxSR}) comes down to  equation~(\ref{eq:proxdeb}).

    As in~\citet{zhang2021plug}, we evaluate super-resolution performance on 8 Gaussian blur kernels represented in Table~\ref{tab:SR_results_CBSD10}:
    4 isotropic kernels with different standard deviations ($0.7$, $1.2$, $1.6$ and $2.0$) and 4 anisotropic kernels. Results are averaged between isotropic and anisotropic.
    We consider downsampled images at scale $s=2$ and $s=3$ and Gaussian noise with $3$ different noise levels $\nu \in\{ 0.01,0.03,0.05\}$.
    Our method (GS-PnP) is compared against bicubic upsampling, RED, IRCNN (``IRCNN+" from~\citep{zhang2021plug})
    and DPIR. We give again in Appendix~\ref{app:super-res} the results obtained on the Set3C dataset (Table~\ref{tab:SR_results_set3C}).
    All our results are obtained with $\lambda_\nu = \nu^2 \lambda = 0.065$ and $\sigma = 2\nu$. Initialization $z_0$ is done with a bicubic interpolation of $y$ (with a shift correction~\citep{zhang2021plug}) and the stopping criteria are $\epsilon = 10^{-6}$ and $K=400$.

   Besides being the only compared PnP method with convergence guarantee, GS-PnP outperforms in PSNR all other PnP algorithms over the considered range of blur kernels, noise levels and scale factors. We show in Figure~\ref{fig:SR1} the super-resolution of the image ``leaves'' downsampled by 2, with an isotropic kernel and noise level $\nu=0.03$. GS-PnP (f) recovers more accurately structures and color details than competing approaches (c-e), while converging
   in terms of function values (g) and residual (h). Additional visual comparisons are presented in Appendix~\ref{app:super-res}.

        \begin{table}[ht]\footnotesize
        \centering\setlength\tabcolsep{2pt}\renewcommand{\arraystretch}{.92}
        \begin{tabular}{c c c c c  c c c c }
             \multirow{2}{*}{Kernels}  & \multirow{2}{*}{Method} & \multicolumn{3}{c}{$s = 2$} &
             \multicolumn{3}{c}{$s = 3$} & \multirow{2}{*}{\textit{Avg}} \\
            \cmidrule(lr){3-5} \cmidrule(lr){6-8}
             &   & $\nu = 0.01$ & $\nu = 0.03$ & $\nu = 0.05$ &  $\nu = 0.01$ & $\nu = 0.03$ & $\nu = 0.05$\\
            \midrule
            \multirowcell{5}{ \includegraphics[width=0.055\textwidth]{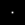}
            \includegraphics[width=0.055\textwidth]{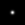}
            \includegraphics[width=0.055\textwidth]{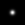}
            \includegraphics[width=0.055\textwidth]{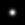}}
            & Bicubic & $24.85$ & $23.96$ & $22.79$  & $23.14$ & $22.52$ & $21.62$ & $\textit{23.15}$ \\
            & RED & $28.29$ & $24.65$ & $22.98$ & $26.13$ & $24.02$ & $22.37$& $\textit{24.74}$ \\
            & IRCNN  & $27.43$ & $26.22$ & $25.86$  & $26.12$ & $25.11$ & $24.79$ & $\textit{25.92}$\\
            & DPIR  & $\underline{28.62}$ & $\underline{27.30}$ & $\underline{26.47}$  & $\mathbf{26.88}$ & $\underline{25.96}$ & $\underline{25.22}$ & $\textit{\underline{26.74}}$ \\
            & GS-PnP  & $\mathbf{28.77}$ & $\mathbf{27.54}$ & $\mathbf{26.63}$ &  $\underline{26.85}$ & $\mathbf{26.05}$ & $\mathbf{25.29}$ & $\bfit{26.86}$\\
            \midrule
            \multirowcell{5}{ \includegraphics[width=0.055\textwidth]{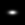}
            \includegraphics[width=0.055\textwidth]{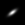}
            \includegraphics[width=0.055\textwidth]{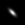}
            \includegraphics[width=0.055\textwidth]{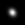}}
            & Bicubic  & $23.38$ & $22.71$ & $21.78$  & $22.65$ & $22.08$ & $21.25$ & $\textit{22.31}$ \\
            & RED &  $26.33$ & $23.91$ & $22.45$ & $25.38$ & $23.40$ & $21.91 $& $\textit{23.90}$\\
            & IRCNN & $25.83$ & $24.89$ & $24.59$  & $25.36$ & $24.36$ & $23.95$ & $\textit{24.83}$\\
            & DPIR & $\mathbf{26.84}$ & $\underline{25.59}$ & $\underline{24.89}$  & $\mathbf{26.24}$ & $\underline{24.98}$ & $\mathbf{24.32}$ & $\textit{\underline{25.48}}$\\
            & GS-PnP &  $\underline{26.80}$ & $\mathbf{25.73}$ & $\mathbf{25.03}$ & $\underline{26.18}$ & $\mathbf{25.08}$ & $\underline{24.31}$ & $\bfit{25.52}$%\vspace{-0.1cm}
        \end{tabular}
        \caption{PSNR(dB) comparison  of  image super-resolution methods on CBSD10 with various scales~$s$, blur kernels~$k$ and noise levels~$\nu$. PNSR results are averaged over kernels at each row.
        %The best and second best results are respectively in bold and underlined.
        }
        \label{tab:SR_results_CBSD10}
    \end{table}

\begin{figure}[ht] \centering
    \begin{subfigure}[b]{.24\linewidth}
        \centering
        \begin{tikzpicture}[spy using outlines={rectangle,blue,magnification=5,size=1.5cm, connect spies}]
        \node {\includegraphics[scale=0.35]{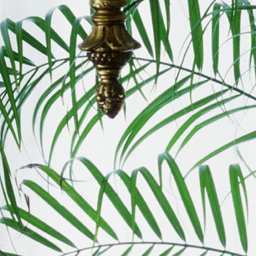}};
        \spy on (-0.4,0.4) in node [left] at (1.55,-0.85);
        \node at (-1.22,1.22) {\includegraphics[scale=0.8]{images/SR/kernels/kernel_3.png}};
        \end{tikzpicture}
        \caption{Clean}
    \end{subfigure}
    \begin{subfigure}[b]{.24\linewidth}
        \centering
        \begin{tikzpicture}[spy using outlines={rectangle,blue,magnification=5,size=0.75cm, connect spies}]
        \node {\includegraphics[scale=0.35]{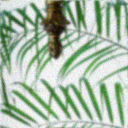}};
        \spy on (-0.2,0.2) in node [left] at (0.75,-0.42);
        \end{tikzpicture}
        \caption{Observed}
        \end{subfigure}
    \begin{subfigure}[b]{.24\linewidth}
        \centering
        \begin{tikzpicture}[spy using outlines={rectangle,blue,magnification=5,size=1.5cm, connect spies}]
        \node {\includegraphics[scale=0.35]{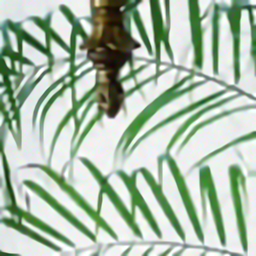}};
        \spy on (-0.4,0.4) in node [left] at (1.55,-0.85);
        \end{tikzpicture}
        \caption{RED ($21.75$dB)}
    \end{subfigure}
    \begin{subfigure}[b]{.24\linewidth}
        \centering
        \begin{tikzpicture}[spy using outlines={rectangle,blue,magnification=5,size=1.5cm, connect spies}]
        \node {\includegraphics[scale=0.35]{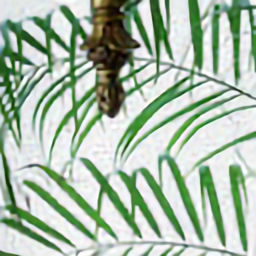}};
        \spy on (-0.4,0.4) in node [left] at (1.55,-0.85);
        \end{tikzpicture}
        \caption{IRCNN ($22.82$dB)}
    \end{subfigure}
    \begin{subfigure}[b]{.24\linewidth}
        \centering
        \begin{tikzpicture}[spy using outlines={rectangle,blue,magnification=5,size=1.5cm, connect spies}]
        \node {\includegraphics[scale=0.35]{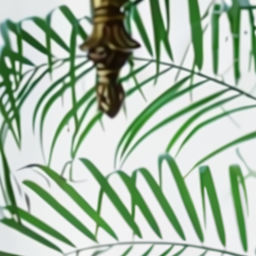}};
        \spy on (-0.4,0.4) in node [left] at (1.55,-0.85);
        \end{tikzpicture}
        \caption{DPIR ($23.97$dB)}
    \end{subfigure}
    \begin{subfigure}[b]{.24\linewidth}
        \centering
        \begin{tikzpicture}[spy using outlines={rectangle,blue,magnification=5,size=1.5cm, connect spies}]
        \node {\includegraphics[scale=0.35]{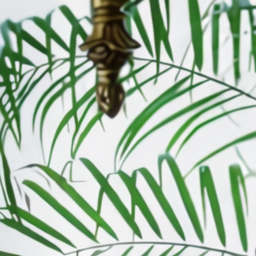}};
        \spy on  (-0.4,0.4) in node [left] at (1.55,-0.85);
        \end{tikzpicture}
        \caption{GS-PnP ($24.81$dB)}
    \end{subfigure}
     \begin{subfigure}[b]{.24\linewidth}
        \centering
        \hspace*{-.1cm}\includegraphics[width=3.4cm]{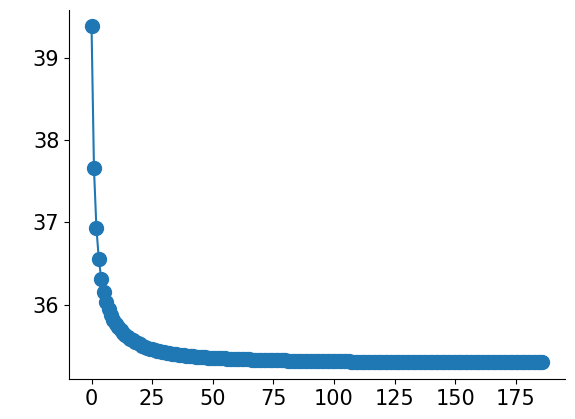}
        \caption{$F(x_k)$}
    \end{subfigure}
    \begin{subfigure}[b]{.24\linewidth}
        \centering
        \includegraphics[width=3.4cm]{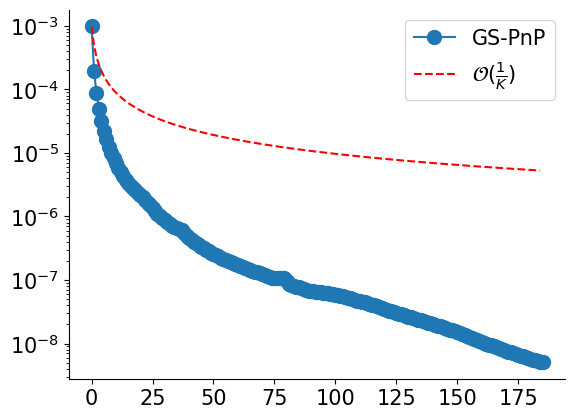}
        \caption{$\gamma_k$  (log scale)}
    \end{subfigure}
    \caption{Super-resolution with various methods on ``leaves" (set3C) downsampled by $2$, with the indicated blur kernel and input noise level $\nu=0.03$.
    Note that our algorithm is the one that recovers sharpest leaves.
     In (g) and (h), we show the evolution of $F(x_k)$ and ${\gamma_k = \min_{0 \leq i \leq k}\norm{x_{i+1}-x_i}^2}/{{\norm{x_0}^2}}$ and in Appendix\ref{ssec:PSNR} the evolution of the PSNR.. The empirical convergence rate is  faster than the $\mathcal{O}(\frac{1}{k})$ theoretical worst case rate established
    in Remark~\ref{rem:conv_rate}.}
    \label{fig:SR1}
\end{figure}

    \section{Conclusion}
    In this work, we introduce a new PnP algorithm with convergence guarantees.
    A denoiser is trained to realize an exact gradient step on a regularization function that is formulated through a neural network.
    This denoiser is plugged in an iterative scheme closely related to PnP-HQS, which is proved to converge towards a stationary point of an explicit functional.
    One strength of this approach is to simultaneously allow  for a non strongly convex (and non smooth) data-fidelity term with a denoiser that may not be nonexpansive.
    Experiments conducted on ill-posed imaging problems (deblurring, super-resolution, inpainting) confirm the convergence results and show that the proposed PnP algorithm reaches state-of-the-art image restoration performance.
    This work also opens several research perspectives.
    One could first examine which information is encoded in the proposed prior.
    For example, based on the sharp visual results, one can question if a relation can be drawn between this prior and the gradient energy or sparsity.
    Also, it would be interesting to see if the recovery analysis developed in~\citep{liu2021recovery} can adapt to the proposed framework.

% To uncomment for ICLR publication !

    \section{Reproducibility Statement}

    Anonymous source code is given in supplementary material. It contains a README.md file that explains step by step how to run the algorithm and replicate the results of the paper.
    Moreover, the pseudocode of our algorithm is given Algorithm~\ref{alg:PnP_algo}. In Section~\ref{sec:exp} it is precisely detailed how all the hyper-parameters are chosen
    and, for each experiment, which dataset is used. As for the theoretical results presented in Section~\ref{sec:convergence-analysis}, complete proofs are given in the appendixes.

    \section{Ethics Statement}
    We believe that this work does not raise potential ethical concerns.

    \section*{acknowledgements}

This work was funded by the French ministry of research through a CDSN grant of ENS cachan. This study has also been carried out with financial support from the French Research Agency through the PostProdLEAP project (ANR-19-CE23-0027-01).

    \bibliography{refs}

\begin{thebibliography}{70}
\providecommand{\natexlab}[1]{#1}
\providecommand{\url}[1]{\texttt{#1}}
\expandafter\ifx\csname urlstyle\endcsname\relax
  \providecommand{\doi}[1]{doi: #1}\else
  \providecommand{\doi}{doi: \begingroup \urlstyle{rm}\Url}\fi

\bibitem[Agustsson \& Timofte(2017)Agustsson and Timofte]{agustsson2017ntire}
Eirikur Agustsson and Radu Timofte.
\newblock Ntire 2017 challenge on single image super-resolution: Dataset and
  study.
\newblock In \emph{Proceedings of the IEEE conference on computer vision and
  pattern recognition workshops}, pp.\  126--135, 2017.

\bibitem[Ahmad et~al.(2020)Ahmad, Bouman, Buzzard, Chan, Liu, Reehorst, and
  Schniter]{ahmad2020pnp}
Rizwan Ahmad, Charles~A Bouman, Gregery~T Buzzard, Stanley Chan, Sizhuo Liu,
  Edward~T Reehorst, and Philip Schniter.
\newblock Plug-and-play methods for magnetic resonance imaging: Using denoisers
  for image recovery.
\newblock \emph{IEEE signal processing magazine}, 37\penalty0 (1):\penalty0
  105--116, 2020.

\bibitem[Attouch et~al.(2010)Attouch, Bolte, Redont, and
  Soubeyran]{attouch2010proximal}
H{\'e}dy Attouch, J{\'e}r{\^o}me Bolte, Patrick Redont, and Antoine Soubeyran.
\newblock Proximal alternating minimization and projection methods for
  nonconvex problems: An approach based on the kurdyka-{\l}ojasiewicz
  inequality.
\newblock \emph{Mathematics of operations research}, 35\penalty0 (2):\penalty0
  438--457, 2010.

\bibitem[Attouch et~al.(2013)Attouch, Bolte, and
  Svaiter]{attouch2013convergence}
Hedy Attouch, J{\'e}r{\^o}me Bolte, and Benar~Fux Svaiter.
\newblock Convergence of descent methods for semi-algebraic and tame problems:
  proximal algorithms, forward--backward splitting, and regularized
  gauss--seidel methods.
\newblock \emph{Mathematical Programming}, 137\penalty0 (1-2):\penalty0
  91--129, 2013.

\bibitem[Bauschke \& Combettes(2011)Bauschke and Combettes]{BauschkeCombettes}
H.~H. Bauschke and P.~L. Combettes.
\newblock \emph{{Convex Analysis and Monotone Operator Theory in Hilbert
  Spaces}}.
\newblock {Springer}, 2011.

\bibitem[Beck(2017)]{beck2017first}
Amir Beck.
\newblock \emph{First-order methods in optimization}.
\newblock SIAM, 2017.

\bibitem[Beck \& Teboulle(2009)Beck and Teboulle]{beck2009gradient}
Amir Beck and Marc Teboulle.
\newblock Gradient-based algorithms with applications to signal recovery.
\newblock \emph{Convex optimization in signal processing and communications},
  pp.\  42--88, 2009.

\bibitem[Bigdeli et~al.(2020)Bigdeli, Lin, Portenier, Dunbar, and
  Zwicker]{bigdeli2020learning}
Siavash~A Bigdeli, Geng Lin, Tiziano Portenier, L~Andrea Dunbar, and Matthias
  Zwicker.
\newblock Learning generative models using denoising density estimators.
\newblock \emph{arXiv preprint arXiv:2001.02728}, 2020.

\bibitem[Bigdeli \& Zwicker(2017)Bigdeli and Zwicker]{bigdeli2017image}
Siavash~Arjomand Bigdeli and Matthias Zwicker.
\newblock Image restoration using autoencoding priors.
\newblock \emph{arXiv preprint arXiv:1703.09964}, 2017.

\bibitem[Bigdeli et~al.(2017)Bigdeli, Jin, Favaro, and
  Zwicker]{bigdeli2017deep}
Siavash~Arjomand Bigdeli, Meiguang Jin, Paolo Favaro, and Matthias Zwicker.
\newblock Deep mean-shift priors for image restoration.
\newblock \emph{arXiv preprint arXiv:1709.03749}, 2017.

\bibitem[Bohra et~al.(2021)Bohra, Goujon, Perdios, Emery, and
  Unser]{bohra2021learning}
Pakshal Bohra, Alexis Goujon, Dimitris Perdios, S{\'e}bastien Emery, and
  Michael Unser.
\newblock Learning lipschitz-controlled activation functions in neural networks
  for plug-and-play image reconstruction methods.
\newblock In \emph{NeurIPS 2021 Workshop on Deep Learning and Inverse
  Problems}, 2021.

\bibitem[Bolte et~al.(2010)Bolte, Daniilidis, Ley, and
  Mazet]{bolte2010characterizations}
J{\'e}r{\^o}me Bolte, Aris Daniilidis, Olivier Ley, and Laurent Mazet.
\newblock Characterizations of {\l}ojasiewicz inequalities: subgradient flows,
  talweg, convexity.
\newblock \emph{Transactions of the American Mathematical Society},
  362\penalty0 (6):\penalty0 3319--3363, 2010.

\bibitem[Buzzard et~al.(2018)Buzzard, Chan, Sreehari, and
  Bouman]{buzzard2018pnp}
Gregery~T Buzzard, Stanley~H Chan, Suhas Sreehari, and Charles~A Bouman.
\newblock Plug-and-play unplugged: Optimization-free reconstruction using
  consensus equilibrium.
\newblock \emph{SIAM Journal on Imaging Sciences}, 11\penalty0 (3):\penalty0
  2001--2020, 2018.

\bibitem[Calatroni \& Chambolle(2019)Calatroni and
  Chambolle]{calatroni2019backtracking}
Luca Calatroni and Antonin Chambolle.
\newblock Backtracking strategies for accelerated descent methods with smooth
  composite objectives.
\newblock \emph{SIAM Journal on Optimization}, 29\penalty0 (3):\penalty0
  1772--1798, 2019.

\bibitem[Chan(2019)]{chan2019performance}
Stanley~H Chan.
\newblock Performance analysis of plug-and-play admm: A graph signal processing
  perspective.
\newblock \emph{IEEE Transactions on Computational Imaging}, 5\penalty0
  (2):\penalty0 274--286, 2019.

\bibitem[Chan et~al.(2016)Chan, Wang, and Elgendy]{chan2016plug}
Stanley~H Chan, Xiran Wang, and Omar~A Elgendy.
\newblock Plug-and-play {ADMM} for image restoration: Fixed-point convergence
  and applications.
\newblock \emph{IEEE Transactions on Computational Imaging}, 3\penalty0
  (1):\penalty0 84--98, 2016.

\bibitem[Chen \& Pock(2016)Chen and Pock]{chen2016trainable}
Yunjin Chen and Thomas Pock.
\newblock Trainable nonlinear reaction diffusion: A flexible framework for fast
  and effective image restoration.
\newblock \emph{IEEE transactions on pattern analysis and machine
  intelligence}, 39\penalty0 (6):\penalty0 1256--1272, 2016.

\bibitem[Cohen et~al.(2021)Cohen, Elad, and Milanfar]{cohen2021regularization}
Regev Cohen, Michael Elad, and Peyman Milanfar.
\newblock {Regularization by denoising via fixed-point projection (RED-PRO)}.
\newblock \emph{SIAM Journal on Imaging Sciences}, 14\penalty0 (3):\penalty0
  1374--1406, 2021.

\bibitem[Combettes \& Pesquet(2011)Combettes and Pesquet]{CombettesPesquet}
Patrick~L. Combettes and Jean-Christophe Pesquet.
\newblock Proximal splitting methods in signal processing.
\newblock In \emph{Fixed-Point Algorithms for Inverse Problems in Science and
  Engineering}, pp.\  185--212. Springer, 2011.

\bibitem[Efron(2011)]{efron2011tweedie}
Bradley Efron.
\newblock Tweedie’s formula and selection bias.
\newblock \emph{Journal of the American Statistical Association}, 106\penalty0
  (496):\penalty0 1602--1614, 2011.

\bibitem[Gavaskar \& Chaudhury(2020)Gavaskar and Chaudhury]{gavaskar2020plug}
Ruturaj~G Gavaskar and Kunal~N Chaudhury.
\newblock Plug-and-play ista converges with kernel denoisers.
\newblock \emph{IEEE Signal Processing Letters}, 27:\penalty0 610--614, 2020.

\bibitem[Gavaskar et~al.(2021)Gavaskar, Athalye, and
  Chaudhury]{gavaskar2021plug}
Ruturaj~G Gavaskar, Chirayu~D Athalye, and Kunal~N Chaudhury.
\newblock On plug-and-play regularization using linear denoisers.
\newblock \emph{IEEE Transactions on Image Processing}, 2021.

\bibitem[Gavaskar \& Chaudhury(2019)Gavaskar and Chaudhury]{gavaskar2019proof}
Ruturaj~Girish Gavaskar and Kunal~Narayan Chaudhury.
\newblock On the proof of fixed-point convergence for plug-and-play {ADMM}.
\newblock \emph{IEEE Signal Processing Letters}, 26\penalty0 (12):\penalty0
  1817--1821, 2019.

\bibitem[Geman \& Yang(1995)Geman and Yang]{geman1995nonlinear}
Donald Geman and Chengda Yang.
\newblock Nonlinear image recovery with half-quadratic regularization.
\newblock \emph{IEEE transactions on Image Processing}, 4\penalty0
  (7):\penalty0 932--946, 1995.

\bibitem[Gonz{\'a}lez et~al.(2021)Gonz{\'a}lez, Almansa, and
  Tan]{gonzalez2021solving}
Mario Gonz{\'a}lez, Andr{\'e}s Almansa, and Pauline Tan.
\newblock Solving inverse problems by joint posterior maximization with
  autoencoding prior.
\newblock \emph{arXiv preprint arXiv:2103.01648}, 2021.

\bibitem[Hertrich et~al.(2021)Hertrich, Neumayer, and
  Steidl]{hertrich2021convolutional}
Johannes Hertrich, Sebastian Neumayer, and Gabriele Steidl.
\newblock Convolutional proximal neural networks and plug-and-play algorithms.
\newblock \emph{Linear Algebra and its Applications}, 631:\penalty0 203--234,
  2021.

\bibitem[Ho et~al.(2020)Ho, Jain, and Abbeel]{ho2020denoising}
Jonathan Ho, Ajay Jain, and Pieter Abbeel.
\newblock Denoising diffusion probabilistic models.
\newblock \emph{arXiv preprint arXiv:2006.11239}, 2020.

\bibitem[Hurault et~al.(2018)Hurault, Ehret, and Arias]{hurault2018epll}
Samuel Hurault, Thibaud Ehret, and Pablo Arias.
\newblock Epll: an image denoising method using a gaussian mixture model
  learned on a large set of patches.
\newblock \emph{Image Processing On Line}, 8:\penalty0 465--489, 2018.

\bibitem[Kadkhodaie \& Simoncelli(2020)Kadkhodaie and
  Simoncelli]{kadkhodaie2020solving}
Zahra Kadkhodaie and Eero~Peter Simoncelli.
\newblock Solving linear inverse problems using the prior implicit in a
  denoiser.
\newblock In \emph{NeurIPS 2020 Workshop on Deep Learning and Inverse
  Problems}, 2020.

\bibitem[Laumont et~al.(2021)Laumont, De~Bortoli, Almansa, Delon, Durmus, and
  Pereyra]{laumont2021bayesian}
R{\'e}mi Laumont, Valentin De~Bortoli, Andr{\'e}s Almansa, Julie Delon, Alain
  Durmus, and Marcelo Pereyra.
\newblock Bayesian imaging using plug \& play priors: when langevin meets
  tweedie.
\newblock \emph{arXiv preprint arXiv:2103.04715}, 2021.

\bibitem[Levin et~al.(2009)Levin, Weiss, Durand, and
  Freeman]{levin2009understanding}
Anat Levin, Yair Weiss, Fredo Durand, and William~T Freeman.
\newblock Understanding and evaluating blind deconvolution algorithms.
\newblock In \emph{2009 IEEE Conference on Computer Vision and Pattern
  Recognition}, pp.\  1964--1971. IEEE, 2009.

\bibitem[Li \& Lin(2015)Li and Lin]{li2015accelerated}
Huan Li and Zhouchen Lin.
\newblock Accelerated proximal gradient methods for nonconvex programming.
\newblock \emph{Advances in neural information processing systems},
  28:\penalty0 379--387, 2015.

\bibitem[Lim et~al.(2017)Lim, Son, Kim, Nah, and Mu~Lee]{lim2017enhanced}
Bee Lim, Sanghyun Son, Heewon Kim, Seungjun Nah, and Kyoung Mu~Lee.
\newblock Enhanced deep residual networks for single image super-resolution.
\newblock In \emph{Proceedings of the IEEE conference on computer vision and
  pattern recognition workshops}, pp.\  136--144, 2017.

\bibitem[Liu et~al.(2020)Liu, Sun, Eldeniz, Gan, An, and Kamilov]{liu2020rare}
Jiaming Liu, Yu~Sun, Cihat Eldeniz, Weijie Gan, Hongyu An, and Ulugbek~S
  Kamilov.
\newblock Rare: Image reconstruction using deep priors learned without
  groundtruth.
\newblock \emph{IEEE Journal of Selected Topics in Signal Processing},
  14\penalty0 (6):\penalty0 1088--1099, 2020.

\bibitem[Liu et~al.(2021)Liu, Asif, Wohlberg, and Kamilov]{liu2021recovery}
Jiaming Liu, M~Salman Asif, Brendt Wohlberg, and Ulugbek~S Kamilov.
\newblock Recovery analysis for plug-and-play priors using the restricted
  eigenvalue condition.
\newblock \emph{arXiv preprint arXiv:2106.03668}, 2021.

\bibitem[Lunz et~al.(2018)Lunz, {\"O}ktem, and
  Sch{\"o}nlieb]{lunz2018adversarial}
Sebastian Lunz, Ozan {\"O}ktem, and Carola-Bibiane Sch{\"o}nlieb.
\newblock Adversarial regularizers in inverse problems.
\newblock In \emph{Proceedings of the 32nd International Conference on Neural
  Information Processing Systems}, pp.\  8516--8525, 2018.

\bibitem[Ma et~al.(2017)Ma, Duanmu, Wu, Wang, Yong, Li, and
  Zhang]{ma2017waterloo}
Kede Ma, Zhengfang Duanmu, Qingbo Wu, Zhou Wang, Hongwei Yong, Hongliang Li,
  and Lei Zhang.
\newblock {Waterloo Exploration Database}: New challenges for image quality
  assessment models.
\newblock \emph{IEEE Transactions on Image Processing}, 26\penalty0
  (2):\penalty0 1004--1016, Feb. 2017.

\bibitem[Mallat(2009)]{mallat2009sparse}
Stéphane Mallat.
\newblock \emph{A Wavelet Tour of Signal Processing, The Sparse Way}.
\newblock Academic Press, Elsevier, 3rd edition edition, 2009.
\newblock ISBN 978-0-12-374370-1.

\bibitem[Martin et~al.(2001)Martin, Fowlkes, Tal, and Malik]{MartinFTM01}
D.~Martin, C.~Fowlkes, D.~Tal, and J.~Malik.
\newblock A database of human segmented natural images and its application to
  evaluating segmentation algorithms and measuring ecological statistics.
\newblock In \emph{Proc. 8th Int'l Conf. Computer Vision}, volume~2, pp.\
  416--423, July 2001.

\bibitem[Meinhardt et~al.(2017)Meinhardt, Moller, Hazirbas, and
  Cremers]{meinhardt2017learning}
Tim Meinhardt, Michael Moller, Caner Hazirbas, and Daniel Cremers.
\newblock Learning proximal operators: Using denoising networks for
  regularizing inverse imaging problems.
\newblock In \emph{Proceedings of the IEEE International Conference on Computer
  Vision}, pp.\  1781--1790, 2017.

\bibitem[Metzler et~al.(2018)Metzler, Schniter, Veeraraghavan,
  et~al.]{metzler2018prdeep}
Christopher Metzler, Phillip Schniter, Ashok Veeraraghavan, et~al.
\newblock prdeep: robust phase retrieval with a flexible deep network.
\newblock In \emph{International Conference on Machine Learning}, pp.\
  3501--3510. PMLR, 2018.

\bibitem[Moreau(1965)]{Moreau1965}
Jean-Jacques Moreau.
\newblock Proximit\'e et dualit\'e dans un espace hilbertien.
\newblock \emph{Bulletin de la Soci\'et\'e Math\'ematique de France},
  93:\penalty0 273--299, 1965.
\newblock URL \url{http://eudml.org/doc/87067}.

\bibitem[Nair et~al.(2021)Nair, Gavaskar, and Chaudhury]{nair2021fixed}
Pravin Nair, Ruturaj~Girish Gavaskar, and Kunal~Narayan Chaudhury.
\newblock Fixed-point and objective convergence of plug-and-play algorithms.
\newblock \emph{IEEE Transactions on Computational Imaging}, 2021.

\bibitem[Ochs et~al.(2014)Ochs, Chen, Brox, and Pock]{ochs2014ipiano}
Peter Ochs, Yunjin Chen, Thomas Brox, and Thomas Pock.
\newblock ipiano: Inertial proximal algorithm for nonconvex optimization.
\newblock \emph{SIAM Journal on Imaging Sciences}, 7\penalty0 (2):\penalty0
  1388--1419, 2014.

\bibitem[Pesquet et~al.(2021)Pesquet, Repetti, Terris, and
  Wiaux]{pesquet2021learning}
Jean-Christophe Pesquet, Audrey Repetti, Matthieu Terris, and Yves Wiaux.
\newblock Learning maximally monotone operators for image recovery.
\newblock \emph{SIAM Journal on Imaging Sciences}, 14\penalty0 (3):\penalty0
  1206--1237, 2021.

\bibitem[Prost et~al.(2021)Prost, Houdard, Almansa, and
  Papadakis]{prost2021learning}
Jean Prost, Antoine Houdard, Andr{\'e}s Almansa, and Nicolas Papadakis.
\newblock Learning local regularization for variational image restoration.
\newblock \emph{arXiv preprint arXiv:2102.06155}, 2021.

\bibitem[Reehorst \& Schniter(2018)Reehorst and
  Schniter]{reehorst2018regularization}
Edward~T Reehorst and Philip Schniter.
\newblock Regularization by denoising: Clarifications and new interpretations.
\newblock \emph{IEEE transactions on computational imaging}, 5\penalty0
  (1):\penalty0 52--67, 2018.

\bibitem[Romano et~al.(2017)Romano, Elad, and Milanfar]{romano2017little}
Yaniv Romano, Michael Elad, and Peyman Milanfar.
\newblock The little engine that could: Regularization by denoising (red).
\newblock \emph{SIAM Journal on Imaging Sciences}, 10\penalty0 (4):\penalty0
  1804--1844, 2017.

\bibitem[Rudin et~al.(1992)Rudin, Osher, and Fatemi]{ROF}
Leonid~I Rudin, Stanley Osher, and Emad Fatemi.
\newblock Nonlinear total variation based noise removal algorithms.
\newblock \emph{Phys. D}, 60:\penalty0 259--268, 1992.

\bibitem[Ryu et~al.(2019)Ryu, Liu, Wang, Chen, Wang, and Yin]{ryu2019plug}
Ernest Ryu, Jialin Liu, Sicheng Wang, Xiaohan Chen, Zhangyang Wang, and Wotao
  Yin.
\newblock Plug-and-play methods provably converge with properly trained
  denoisers.
\newblock In \emph{International Conference on Machine Learning}, pp.\
  5546--5557. PMLR, 2019.

\bibitem[Salimans \& Ho(2021)Salimans and Ho]{salimans2021should}
Tim Salimans and Jonathan Ho.
\newblock Should {EBM}s model the energy or the score?
\newblock In \emph{Energy Based Models Workshop-ICLR 2021}, 2021.

\bibitem[Saremi(2019)]{saremi2019approximating}
Saeed Saremi.
\newblock On approximating $\nabla f$ with neural networks.
\newblock \emph{arXiv preprint arXiv:1910.12744}, 2019.

\bibitem[Saremi \& Hyvarinen(2019)Saremi and Hyvarinen]{saremi2019neural}
Saeed Saremi and Aapo Hyvarinen.
\newblock Neural empirical bayes.
\newblock \emph{arXiv preprint arXiv:1903.02334}, 2019.

\bibitem[Scheinberg et~al.(2014)Scheinberg, Goldfarb, and
  Bai]{scheinberg2014fast}
Katya Scheinberg, Donald Goldfarb, and Xi~Bai.
\newblock Fast first-order methods for composite convex optimization with
  backtracking.
\newblock \emph{Foundations of Computational Mathematics}, 14\penalty0
  (3):\penalty0 389--417, 2014.

\bibitem[Sreehari et~al.(2016)Sreehari, Venkatakrishnan, Wohlberg, Buzzard,
  Drummy, Simmons, and Bouman]{sreehari2016plug}
Suhas Sreehari, S~Venkat Venkatakrishnan, Brendt Wohlberg, Gregery~T Buzzard,
  Lawrence~F Drummy, Jeffrey~P Simmons, and Charles~A Bouman.
\newblock Plug-and-play priors for bright field electron tomography and sparse
  interpolation.
\newblock \emph{IEEE Transactions on Computational Imaging}, 2\penalty0
  (4):\penalty0 408--423, 2016.

\bibitem[Sun et~al.(2019{\natexlab{a}})Sun, Liu, and Kamilov]{sun2019block}
Yu~Sun, Jiaming Liu, and Ulugbek~S Kamilov.
\newblock Block coordinate regularization by denoising.
\newblock \emph{arXiv preprint arXiv:1905.05113}, 2019{\natexlab{a}}.

\bibitem[Sun et~al.(2019{\natexlab{b}})Sun, Wohlberg, and
  Kamilov]{sun2019online}
Yu~Sun, Brendt Wohlberg, and Ulugbek~S Kamilov.
\newblock An online plug-and-play algorithm for regularized image
  reconstruction.
\newblock \emph{IEEE Transactions on Computational Imaging}, 5\penalty0
  (3):\penalty0 395--408, 2019{\natexlab{b}}.

\bibitem[Sun et~al.(2020)Sun, Liu, Sun, Wohlberg, and Kamilov]{sun2020async}
Yu~Sun, Jiaming Liu, Yiran Sun, Brendt Wohlberg, and Ulugbek~S Kamilov.
\newblock Async-red: A provably convergent asynchronous block parallel
  stochastic method using deep denoising priors.
\newblock \emph{arXiv preprint arXiv:2010.01446}, 2020.

\bibitem[Sun et~al.(2021)Sun, Wu, Xu, Wohlberg, and Kamilov]{sun2021scalable}
Yu~Sun, Zihui Wu, Xiaojian Xu, Brendt Wohlberg, and Ulugbek~S Kamilov.
\newblock Scalable plug-and-play admm with convergence guarantees.
\newblock \emph{IEEE Transactions on Computational Imaging}, 7:\penalty0
  849--863, 2021.

\bibitem[Terris et~al.(2020)Terris, Repetti, Pesquet, and
  Wiaux]{terris2020building}
Matthieu Terris, Audrey Repetti, Jean-Christophe Pesquet, and Yves Wiaux.
\newblock Building firmly nonexpansive convolutional neural networks.
\newblock In \emph{ICASSP 2020-2020 IEEE International Conference on Acoustics,
  Speech and Signal Processing (ICASSP)}, pp.\  8658--8662. IEEE, 2020.

\bibitem[Venkatakrishnan et~al.(2013)Venkatakrishnan, Bouman, and
  Wohlberg]{venkatakrishnan2013plug}
Singanallur~V Venkatakrishnan, Charles~A Bouman, and Brendt Wohlberg.
\newblock Plug-and-play priors for model based reconstruction.
\newblock In \emph{2013 IEEE Global Conference on Signal and Information
  Processing}, pp.\  945--948. IEEE, 2013.

\bibitem[Wei et~al.(2020)Wei, Aviles-Rivero, Liang, Fu, Sch{\"o}nlieb, and
  Huang]{wei2020tuning}
Kaixuan Wei, Angelica Aviles-Rivero, Jingwei Liang, Ying Fu, Carola-Bibiane
  Sch{\"o}nlieb, and Hua Huang.
\newblock Tuning-free plug-and-play proximal algorithm for inverse imaging
  problems.
\newblock In \emph{International Conference on Machine Learning}, pp.\
  10158--10169. PMLR, 2020.

\bibitem[Xu et~al.(2020)Xu, Sun, Liu, Wohlberg, and Kamilov]{xu2020provable}
Xiaojian Xu, Yu~Sun, Jiaming Liu, Brendt Wohlberg, and Ulugbek~S Kamilov.
\newblock Provable convergence of plug-and-play priors with mmse denoisers.
\newblock \emph{arXiv preprint arXiv:2005.07685}, 2020.

\bibitem[Yuan et~al.(2020)Yuan, Liu, Suo, and Dai]{Yuan_2020_CVPR}
Xin Yuan, Yang Liu, Jinli Suo, and Qionghai Dai.
\newblock Plug-and-play algorithms for large-scale snapshot compressive
  imaging.
\newblock In \emph{Proceedings of the IEEE/CVF Conference on Computer Vision
  and Pattern Recognition (CVPR)}, June 2020.

\bibitem[Zhang et~al.(2017{\natexlab{a}})Zhang, Zuo, Chen, Meng, and
  Zhang]{zhang2017beyond}
Kai Zhang, Wangmeng Zuo, Yunjin Chen, Deyu Meng, and Lei Zhang.
\newblock Beyond a gaussian denoiser: Residual learning of deep cnn for image
  denoising.
\newblock \emph{IEEE Transactions on Image Processing}, 26\penalty0
  (7):\penalty0 3142--3155, 2017{\natexlab{a}}.

\bibitem[Zhang et~al.(2017{\natexlab{b}})Zhang, Zuo, Gu, and
  Zhang]{zhang2017learning}
Kai Zhang, Wangmeng Zuo, Shuhang Gu, and Lei Zhang.
\newblock Learning deep cnn denoiser prior for image restoration.
\newblock In \emph{Proceedings of the IEEE conference on computer vision and
  pattern recognition}, pp.\  3929--3938, 2017{\natexlab{b}}.

\bibitem[Zhang et~al.(2018)Zhang, Zuo, and Zhang]{zhang2018ffdnet}
Kai Zhang, Wangmeng Zuo, and Lei Zhang.
\newblock Ffdnet: Toward a fast and flexible solution for cnn-based image
  denoising.
\newblock \emph{IEEE Transactions on Image Processing}, 27\penalty0
  (9):\penalty0 4608--4622, 2018.

\bibitem[Zhang et~al.(2021)Zhang, Li, Zuo, Zhang, Van~Gool, and
  Timofte]{zhang2021plug}
Kai Zhang, Yawei Li, Wangmeng Zuo, Lei Zhang, Luc Van~Gool, and Radu Timofte.
\newblock Plug-and-play image restoration with deep denoiser prior.
\newblock \emph{IEEE Transactions on Pattern Analysis and Machine
  Intelligence}, 2021.

\bibitem[Zhao et~al.(2016)Zhao, Wei, Basarab, Dobigeon, Kouam{\'e}, and
  Tourneret]{zhao2016fast}
Ningning Zhao, Qi~Wei, Adrian Basarab, Nicolas Dobigeon, Denis Kouam{\'e}, and
  Jean-Yves Tourneret.
\newblock Fast single image super-resolution using a new analytical solution
  for $\ell_2-\ell_2$ problems.
\newblock \emph{IEEE Transactions on Image Processing}, 25\penalty0
  (8):\penalty0 3683--3697, 2016.

\bibitem[Zoran \& Weiss(2011)Zoran and Weiss]{zoran2011learning}
Daniel Zoran and Yair Weiss.
\newblock From learning models of natural image patches to whole image
  restoration.
\newblock In \emph{2011 International Conference on Computer Vision}, pp.\
  479--486. IEEE, 2011.

\end{thebibliography}
    \bibliographystyle{iclr2022_conference}

    \appendix
\section{Comparison with the PnP literature}
\subsection{Limitations of previous PnP  methods}\label{app:review}

In the existing literature, PnP approaches have one of the following limitations:

\begin{itemize}
\item They are not able to provide proof of convergence when non strongly convex data-fidelity terms are involved \citep{ryu2019plug}, which is the case of some classical IR problems such as deblurring, super-resolution or inpainting.
\item They are restricted to (nearly) nonexpansive denoisers \citep{reehorst2018regularization,ryu2019plug,sun2019online,xu2020provable} or denoisers with a symmetric Jacobian \citep{romano2017little}. But it has already been shown that imposing symmetric Jacobian or Lipschitz property on a deep denoiser network alters its denoising performance \citep{bohra2021learning,hertrich2021convolutional}.
We highlight that it was already empirically observed \citep{romano2017little,zhang2021plug} that the performance of the denoiser directly impacts the performance of the corresponding PnP scheme for IR.
\item They show convergence of iterates thanks to decreasing time steps \citep{chan2016plug}, but there is no characterization of the obtained solution (it is not a minima or a critical point of any functional).
\end{itemize}
On the other hand, our method is proved to converge to a stationary point of an explicit functional including a non strongly convex data-fidelity term. It also relies on a (possibly expansive) denoiser that, although being constrained to be a conservative vector field, allows to produce state-of-the-art results for various ill-posed IR problems.

    \subsection{On the regularization $g_\sigma$.}
    \label{app:regul}

    We first underline that the main point of our method is to define the denoiser as $D_\sigma= \id -\nabla g_\sigma$. The choice for $g_\sigma$ is important for the denoising performance. With respect to the convergence properties of GS-PnP, this is nevertheless a secondary issue, as our method would converge for other differentiable regularizers $g_\sigma$.

    The proposed regularization $g_\sigma(x) = \frac{1}{2}||x-N_\sigma(x)||^2$  was previously mentioned in the RED original paper~\citep{romano2017little} (but explicitly left aside) and used in the DAEP paper~\citep{bigdeli2017image}. The main difference between our regularizer and the one alternately proposed in RED and DAEP is the following:
    \begin{itemize}
    \item RED and DAEP both consider a generic given pretrained denoiser $D_\sigma : \mathbb{R}^n \to \mathbb{R}^n$, which is then associated with the regularizer $g_\sigma(x) = \frac{1}{2}||x-D_\sigma(x)||^2$ and used as such in IR problems.
    \item In our method, we set $g_\sigma(x) = \frac{1}{2}||x-N_\sigma(x)||^2$ (with $N_\sigma : \mathbb{R}^n \to \mathbb{R}^n$ differentiable) and then we train the denoiser as  $D_\sigma = \id - \nabla g_\sigma$ with the loss function $||D_\sigma(x+\xi)-x||^2$ for clean images $x$ and additive white Gaussian noise (AWGN) $\xi$.
    \end{itemize}

    With this new formulation, we are ensured that $D_\sigma = \id - \nabla g_\sigma$ is inherently a conservative vector field, without further assumptions on $N_\sigma$. Thanks to this relation, the (slightly modified) \hbox{PnP-HQS} given in relation (\ref{eq:our_PnP}) becomes a proximal gradient descent (PGD). We can then make use of convergence results of the PGD algorithm in the nonconvex setting to show the convergence of PnP-HQS.

    In contrast to the original RED paper, we aimed at finding one setting of Plug-and-Play image restoration that allows for a convergence proof with sufficiently general hypotheses. For this purpose, we had to consider this very particular form of regularization.

\subsection{Recent literature on Regularization by denoising}\label{app:RED}

    We here provide a more detailed discussion on the follow-up literature on RED.
   % The RED framework~\citet{romano2017little} has been applied to a large variety of image restoration problems \emph{e.g.} for the purpose of phase retrieval in prDeep~\citet{metzler2018prdeep} or for
   % MRI image reconstruction in \citet{liu2020rare}.

    %More precisely, they formulate phase retrieval with the optimization problem
    %$$ \frac{1}{2} \|y - |Ax|\|_2^2 + \lambda x^T(x-D(x)) $$
    %where $Ax$ are complex linear measurements obtained from $x$, and where $x^T (x-D(x))$ is the regularization from RED.
    %The method prDeep uses specifically the DnCNN denoiser.
    In parallel to the RED method~\citep{romano2017little}, the  authors of~\citet{bigdeli2017image} propose to use the regularization, mentioned but not exploited in RED,  $g(x) = ||D_\sigma(x)-x||^2$, where $D_\sigma$ is a pretrained denoising autoencoder.
    %They motivate this choice of regularization with Tweedie's formula which, as explained in Section 2, relates the optimal denoiser with the Gaussian-smoothed version of the natural image prior.
    %The overall objective function is optimized with gradient descent.
    %Their algorithm is applied to non-blind deconvolution and super-resolution.
   % The differences between this regularizer and the one proposed in this paper are discussed in Appendix B.
   Next, \citet{bigdeli2017deep} extended this work with a new prior, which is the Gaussian-smoothed version of the natural image prior.
    Inspired by Tweedie's formula, they approximate the gradient of this log smoothed prior with the residual of a pretrained denoising autoencoder.
    %The  objective function is then optimized with stochastic gradient descent.
    With this new formulation, it is possible to optimize on the restored image but also on other parameters (e.g. the noise level and the used blur kernel).
    %They provide experimental validation on several image restoration tasks: deblurring (including the noise-blind and kernel-blind cases), super-resolution and demosaicing.

    Initially designed in the context of convex data-fidelity term, RED~\citet{romano2017little} has been applied in the nonconvex setting for phase retrieval problems in prDeep~\citet{metzler2018prdeep}.

    Regularization by Artifact-Removal (RARE)~\citep{liu2020rare} extends the RED framework by replacing the denoiser by a more general artifact-removal operator.
    The main advantage of this operator is that it can be trained without  groundtruth data, but only by mapping pairs of artifact and noise contaminated images obtained directly from undersampled measurements.
    This is particularly useful for medical imaging applications where it is difficult to acquire fully-sampled training data.

    The convergence of the original RED algorithm is discussed in~\citet{reehorst2018regularization}.
    The authors provide a convergence proof for RED-PGD which requires the denoiser to be nonexpansive, which, as detailed in the previous sections, is a restrictive hypothesis.

    The authors of~\citet{liu2021recovery} provide a recovery guarantee for the PnP framework, meaning convergence to a $x^*$ that satisfies $y = Ax^*$ while being in the set $\mathsf{Fix}(D)$ of the fixed points of $D$.
    More precisely, they show the convergence of the PnP-PGD method towards such a true solution $x^*$ under the assumptions that the denoiser residual $R = \id-D$ is bounded and Lipschitz, and that the measurement operator satisfies a ``set-restricted eigenvalue condition'' (S-REC, which can be understood as strong convexity on the image of the denoiser $\mathsf{Im}(D)$).
    Under the additional assumptions that the denoiser is nonexpansive and that there exists $x \in \mathsf{Fix}(D)$ that is also critical for the regularizer~$g$, they show that PnP and RED have the same solutions.
    As mentioned by the authors, it is nevertheless difficult to verify the S-REC condition for a given measurement operator: since $\mathsf{Im}(D)$ is not explicit, it is not clear how much S-REC relaxes the strong convexity.
    As explained in Sections~\ref{sec:related_works} and~\ref{sec:method}, our results do not require strong convexity of the data-fidelity term.

    Instead of including an explicit regularization in the functional, RED-PRO ~\citep{cohen2021regularization} aims at minimizing the data-fidelity term on the set $\mathsf{Fix}(D)$ of fixed points of a generic denoiser $D$.
    The study is conducted under the hypothesis that the denoiser is demicontractive, which implies that $\mathsf{Fix}(D)$ is convex, thus leading to a convex optimization problem.
    However, this assumption seems difficult to verify in practice and the existence of fixed points for the RED-PRO operator does not appear straightforward.
    In contrast, the fixed points of the GS-PnP operator are directly related to the stationary points of the global functional $F = f + \lambda g_\sigma$ (Lemma~\ref{lem:statio} in Appendix~\ref{app:PGD_1_proof}), whose existence is guaranteed as soon as $F$ is coercive (see the discussion in Appendix~\ref{app:bounded_seq}).

    ASYNC-RED~\citep{sun2020async} enables faster computation of RED by decomposing the inference into a sequence of partial (block-coordinate) updates on $x$ which can be executed asynchronously in parallel over a multicore system.
    As in their previous work BC-RED~\citep{sun2019block}, the authors propose to further reduce the computational time by using only a random subset of measurements at every iteration.
    Convergence of ASYNC-RED is shown, provided the denoiser is nonexpansive. A possible future extension of our work is the integration of the ASYNC framework to accelerate GS-PnP for large scale imaging inverse problems.
    As our GS-PnP converges without assuming nonexpansiveness of the denoising operation, it would be interesting to see if one can adapt the GS-PnP convergence properties to an ASYNC-GSPnP algorithm.

%    Instead of including an explicit regularization in the functional, RED-PRO~\citep{cohen2021regularization} aims at minimizing the data-fidelity term on the set $Fix(D)$ of fixed points of a generic denoiser $D$.
%    The study is conducted under the hypothesis that the denoiser is demicontractive, which implies that Fix(D) is convex, thus leading to a convex optimization problem.
%    However, this assumption seems difficult to verify in practice and the existence of fixed points for the RED-PRO operator does not appear straightforward.
%    In contrast, the fixed points of the GS-PnP operator are directly related to the stationary points of the global functional $F = f + \lambda g_\sigma$ (Lemma~\ref{lem:statio} in Appendix~\ref{app:PGD_1_proof}), whose existence is guaranteed as soon as $F$ is coercive (see the discussion in Appendix~\ref{app:bounded_seq}).

    \section{Lipschitz constant of $\nabla g_\sigma$}
    \label{app:lipschitz}

    First, let us give a result which ensures that a large class of neural networks trained with differentiable activation functions have Lipschitz gradients with respect to the input image.

    \begin{proposition}\label{prop:composition_gradlip}
      Let $H = h_p \circ \ldots \circ h_1$ be a composition of differentiable functions $h_i : \R^{d_{i-1}} \to \R^{d_i}$.
      Let us assume that for any $i$ the differential map $h_i'$ is bounded and Lipschitz.
      Then $H'$ is Lipschitz.
    \end{proposition}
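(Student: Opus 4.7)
The plan is to prove the statement by induction on $p$, with the chain rule doing the bulk of the work and a small auxiliary observation (product of bounded and Lipschitz maps is bounded and Lipschitz) tying things together.

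The base case $p=1$ is immediate since $H' = h_1'$ is Lipschitz by hypothesis. For the inductive step, I would set $G = h_{p-1} \circ \ldots \circ h_1$ and write $H = h_p \circ G$. Two preliminary observations are needed: first, each $h_i$ is itself Lipschitz (because its differential is bounded), hence $G$ is Lipschitz as a composition of Lipschitz maps; second, by induction $G'$ is Lipschitz, and moreover $G'$ is bounded, because the chain rule expresses $G'(x)$ as a product of the bounded factors $h_i'(\cdot)$ (with an operator norm submultiplicativity argument).

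Applying the chain rule gives $H'(x) = h_p'(G(x)) \cdot G'(x)$. I would then estimate
\begin{align*}
\|H'(x) - H'(y)\| &\leq \|h_p'(G(x))\|\, \|G'(x) - G'(y)\| \\
&\quad + \|h_p'(G(x)) - h_p'(G(y))\|\, \|G'(y)\|.
\end{align*}
The first term is bounded using that $h_p'$ is bounded and $G'$ is Lipschitz (by induction). The second term is bounded using that $h_p'$ is Lipschitz (so its composition with the Lipschitz map $G$ is Lipschitz in $x$) and $G'$ is bounded. Adding the two contributions yields a Lipschitz constant for $H'$ depending only on the Lipschitz constants and bounds of the $h_i'$. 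To close the induction cleanly, I would in fact strengthen the statement to: \emph{$H'$ is both bounded and Lipschitz, and $H$ is Lipschitz}, so that all three properties propagate through the inductive step.

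No step is really hard; the only point requiring a bit of care is the preliminary lemma that a product of two maps (into matrix spaces of compatible dimensions) that are each bounded and Lipschitz is itself bounded and Lipschitz, which is exactly the algebraic manipulation above applied at the level of matrix-valued functions. Everything else is a direct application of the chain rule and submultiplicativity of the operator norm, so the proof is essentially a bookkeeping argument.
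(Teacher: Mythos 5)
Your proof is correct. It establishes the same estimate as the paper, but organizes it as an induction on the depth $p$ rather than as a single direct computation: the paper writes $H'(x)-H'(y)$ as an explicit telescoping sum of $p$ terms, each obtained by switching one factor $h_{i+1}'(H_i(\cdot))$ from the argument $x$ to the argument $y$, and bounds each term using the uniform bounds $\|h_j'\|_{\infty}$, the Lipschitz constant of $h_{i+1}'$, and the Lipschitz constant of the partial composition $H_i = h_i\circ\ldots\circ h_1$ (controlled via $\|H_i'\|_{\infty}\leq \|h_i'\|_{\infty}\cdots\|h_1'\|_{\infty}$). Your inductive step --- peeling off the outermost layer $h_p$ and invoking the auxiliary fact that a product of two bounded Lipschitz matrix-valued maps is bounded and Lipschitz --- unrolls to exactly that telescoping sum, and your strengthened induction hypothesis (that $H'$ is also bounded and $H$ is Lipschitz) is precisely the bookkeeping the paper carries implicitly through the quantities $\|H_i'\|_{\infty}$. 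The only thing the paper's formulation buys over yours is an explicit closed-form constant, $\sum_{i}\big(\prod_{j\neq i+1}\|h_j'\|_{\infty}\big)\|h_{i+1}'\|_{\mathrm{Lip}}\|H_i'\|_{\infty}$, which it then uses to remark that the bound is exponential in the depth of the network; your induction produces the same constant, just not in displayed form. Conversely, your version isolates cleanly the two facts the paper uses without comment (each $h_i$ and each $H_i$ is Lipschitz because its differential is bounded), so nothing is missing from either argument.
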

    \begin{proof}
      Let us denote $H_i = h_i \circ \ldots \circ h_1$ (and by convention, $H_0 = \id$).
      Let $\|h_i'\|_{\infty}$ be the best uniform bound on the operator norms $\|h_i'(x)\|, x \in \R^{d_{i-1}}$ (which is also the best Lipschitz constant of $h_i$).
      Let us also denote $\|h_i'\|_{\text{Lip}}$ the Lipschitz constant of $h_i'$.
      The chain rule gives that for any $x$, $H'(x)$ can be expressed as a composition of linear maps
      \begin{equation}
        H'(x) = h_p'(H_{p-1}(x)) h_{p-1}'(H_{p-2}(x)) \ldots h_1'(x)
      \end{equation}
      Therefore, for any $x,y$,
      \begin{align}
        H'(x) - H'(y) = \sum_{i=0}^{p-1} &
        h_p'(H_{p-1}(x)) \ldots h_{i+2}'(H_{i+1}(x)) h_{i+1}'(H_i(x)) h_i'(H_{i-1}(y)) \ldots h_1'(y)  \\
        & - h_p'(H_{p-1}(x)) \ldots h_{i+2}'(H_{i+1}(x)) h_{i+1}'(H_i(y)) h_i'(H_{i-1}(y)) \ldots h_1'(y) .
      \end{align}
      We can thus bound the operator norms
      \begin{align}
        \|H'(x) - H'(y)\|
        \leq \sum_{i=0}^{p-1} \Big( \|h_p'(H_{p-1}(x)) \ldots h_{i+2}'(H_{i+1}(x)) \| & \\
        \| h_{i+1}'(H_i(x)) - h_{i+1}'(H_i(y)) \| & \| h_i'(H_{i-1}(y)) \ldots h_1'(y) \| \Big) .
      \end{align}
      and thus
      \begin{equation}
        \|H'(x) - H'(y)\| \leq \sum_{i=0}^{p-1} \Big( \prod_{j \neq i+1} \|h_j'\|_{\infty} \Big) \|h_{i+1}'\|_{\text{Lip}} \|H_i'\|_{\infty} \|x-y\|
      \end{equation}
      which concludes because the chain-rule ensures that $\|H_i'\|_{\infty} \leq \|h_i'\|_{\infty} \ldots \|h_1'\|_{\infty}$.
    \end{proof}

    Proposition 2 applies for a neural network obtained as a composition of fully-connected layers with ELU activation functions, that is, by composing functions of the form
    \begin{equation}
       h(x) = E(Ax + b)
    \end{equation}
    where $A$ is a matrix, $b$ a vector and $E$ is the element-wise ELU defined by
    \begin{equation}
      E(x)_i =
      \begin{cases}
        x_i & \text{if} \ x_i \geq 0 \\
        e^{x_i} - 1 & \text{if} \ x_i < 0 \ .
      \end{cases}
    \end{equation}
    It is easy to see that $E$ is differentiable and that $E'$ is $1$-Lipschitz with $\|E'\|_{\infty} \leq 1$.
    Therefore
    \begin{equation}
      h'(x) = E'(Ax+b)A
    \end{equation}
    is also bounded and Lipschitz.

    Let us also mention that this proposition encompasses the case of U-nets which, in addition to composing fully-connected layers, also integrates skip-connections. For example, taking a skip-connection on a composition $h_3 \circ h_2 \circ h_1$ amounts to defining
    \begin{equation}
      H(x) = h_3\big( \ h_2(h_1(x)) \ , \ h_1(x) \ \big) .
    \end{equation}
    This can be simply rewritten $H = h_3 \circ \tilde{h}_2 \circ h_1$ where
    \begin{equation}
      \tilde{h}_2(x) = \big(h_2(h_1(x)), h_1(x) \big) .
    \end{equation}
    It is then clear that $\tilde{h}_2$ has bounded Lipschitz differential as soon as $h_1$ and $h_2$ do.

    The bound obtained in the proof of Proposition~\ref{prop:composition_gradlip} is exponential in the depth of the neural network.
    We now provide some experiments showing that, in practice, the Lipschitz constant of $\nabla g_{\sigma}$ does not explode.
    %With the parameterization proposed in equation~(\ref{eq:g_model}), $g_\sigma$ is twice differentiable with the right choice of the activation functions in the CNN $N_\sigma$ (\emph{e.g.} ELU).
    We show in Figure~\ref{fig:lipschitz_constant_hist}, for various noise levels $\sigma$, the
    distribution of the spectral norms $\norm{\nabla^2 g_\sigma(x)}_S$ on the training image set $X$, estimated with power iterations.
    The computed value varies a lot across images. Hence approximating the Lipschitz constant of $\nabla g_\sigma$ with $L=\max_{x\in X} \norm{\nabla^2
    g_\sigma(x)}_S$  would lead to under-estimated stepsizes and slow convergence on most images. Backtracking solves this issue by finding at each iteration the optimal
    stepsize allowing sufficient decrease of the objective function.

    \begin{figure}[h]
        \begin{center}
            \includegraphics[width=0.5\linewidth]{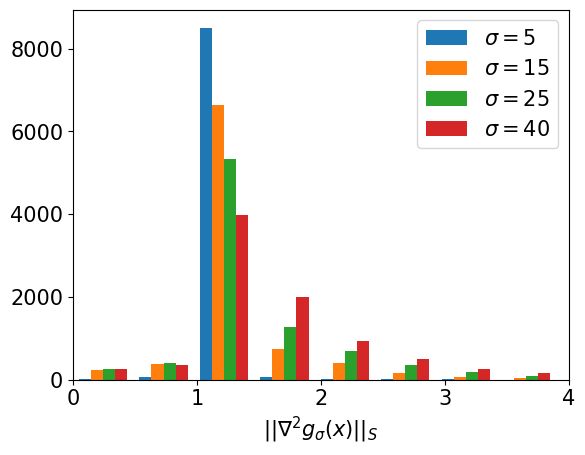}
        \end{center}
        \caption{Histogram of the values of the spectral norm $\norm{\nabla^2 g_\sigma(x)}_S$ evaluated on $128 \times 128$ images
        from the training dataset, degraded with white Gaussian noise with various standard deviations~$\sigma$~(./255). Figure best seen in color. }
        \label{fig:lipschitz_constant_hist}
    \end{figure}

    \section{Proof of Theorem \ref{thm:PGD_1}}
    \label{app:PGD_1_proof}

    We first remind that a  function $f : \mathbb{R}^n \longrightarrow \mathbb{R} \cup +\infty$ is proper if its domain
    \begin{equation}
      dom(f) = \{ x \in \mathbb{R}, f(x) < + \infty \}
    \end{equation}
    is non empty.
    Also, recall that $f$ is lower semicontinuous at $x^*$ if $\liminf_{x \to x^*} f(x) \geq f(x^*)$.

    \begin{proof}
        \item[(i)]
        For ease of notation, we consider $\lambda = 1$. The generalisation for any $\lambda>0$ is
            straightforward by rescaling $g_\sigma$ (and $L$) accordingly.
            We denote the proximal gradient fixed point operator ${T_\tau = \prox_{\tau f} \circ  (\id - \tau\nabla_x g_\sigma)}$, the objective function
            $F = f + g_\sigma$ and we introduce
            \begin{equation}
                Q_\tau(x,y) = g_\sigma(y) + \langle x-y, \nabla g_\sigma(y) \rangle + \frac{1}{2\tau}\norm{x-y}^2 + f(x).
            \end{equation}
            We have
            \begin{equation}
                \begin{split}
                  \argmin_x Q_\tau(x,y) &= \argmin_x g_\sigma(y) + \langle x-y, \nabla g_\sigma(y) \rangle +
                    \frac{1}{2\tau}\norm{x-y}^2 + f(x) \\
                    &= \argmin_x f(x) + \frac{1}{2\tau}\norm{x-(y-\tau \nabla g_\sigma(y)}^2 \\
                    &= \prox_{\tau f} \circ  (\id - \tau \nabla_x g_\sigma)(y) = T_\tau(y) .
                \end{split}
            \end{equation}
            By definition for the $\argmin$, $x_{k+1}= T_\tau(x_k) \Rightarrow Q_\tau(x_{k+1},x_k)\leq Q_\tau(x_{k},x_k)$.
            Moreover, with $g_\sigma$ being $L$-smooth, we have by the descent lemma, for any $\tau \leq \frac{1}{L}$ and any
            ${x,y \in \mathbb{R}^n}$,
            \begin{equation}
                \label{eq:descent}
                g_\sigma(x) \leq g_\sigma(y) + \langle x-y, \nabla g_\sigma(y) \rangle + \frac{1}{2\tau}\norm{x-y}^2 ,
            \end{equation}
            so that for every ${x,y \in \mathbb{R}^n}$,
            \begin{equation}
                Q_\tau(x,x) =  F(x) \quad \text{and} \quad Q_\tau(x,y) \geq F(x) .
            \end{equation}
            Therefore, at iteration $k$,
            \begin{equation}
                F(x_{k+1}) \leq Q_\tau(x_{k+1},x_k) \leq Q_\tau(x_k,x_k) = F(x_k) .
            \end{equation}
            $(F(x_k))$ is thus non-increasing. Since $F$ is lower-bounded, $(F(x_k))$ thus converges to a limit $F^*$.

        \item[(ii)]
              Note that $Q_\tau(x_{k+1},x_k) \leq Q_\tau(x_k,x_k)$ implies
              \begin{equation}
                f(x_{k+1}) \leq f(x_{k}) - \langle x_{k+1}-x_k, \nabla g_\sigma(x_k) \rangle -
                \frac{1}{2\tau}\norm{x_{k+1}-x_k}^2 .
              \end{equation}
              Using also~(\ref{eq:descent}) with stepsize $\frac{1}{L}$, we get
            \begin{equation}
                \label{eq:decrease_F}
                \begin{split}
                    F(x_{k+1}) &= f(x_{k+1}) + g_\sigma(x_{k+1}) \\
                    &\leq f(x_{k}) - \langle x_{k+1}-x_k, \nabla g_\sigma(x_k) \rangle -
                    \frac{1}{2\tau}\norm{x_{k+1}-x_k}^2 \\
                    &+ g_\sigma(x_{k}) + \langle x_{k+1}-x_k, \nabla g_\sigma(x_k) \rangle +
                    \frac{L}{2}\norm{x_{k+1}-x_k}^2 \\
                    &= F(x_{k}) - \left(\frac{1}{2\tau} -\frac{L}{2}\right)\norm{x_{k+1}-x_k}^2.
                \end{split}
            \end{equation}

            Summing over $k=0,1,...,m$ gives
            \begin{equation}
                \begin{split}
                    \sum_{k=0}^m \norm{x_{k+1}-x_k}^2 &\leq \frac{1}{\frac{1}{2\tau} -\frac{L}{2}} \left(F(x_0) -F
                    (x_{m+1})\right) \\
                    &\leq \frac{1}{\frac{1}{2\tau} -\frac{L}{2}} \left(F(x_0)-F^*\right) .
                \end{split}
            \end{equation}
            Therefore, $\lim_{k\to\infty} \norm{x_{k+1}-x_k} = 0$.

            \item[(iii)]
            We begin by the two following lemmas characterizing the proximal gradient descent operator $T_{\tau} = \prox_{\tau f} \circ  (\id - \tau \nabla_x g_\sigma)$.
            \begin{lemma}
                \label{lem:statio}
                With the assumptions of Theorem~\ref{thm:PGD_1}, for $x^* \in \mathbb{R}^n$, $x^*$ is a fixed point
                of the proximal gradient descent operator $T_{\tau} = \prox_{\tau f} \circ  (\id - \tau \nabla_x g_\sigma)$, \emph{i.e.} $T_{\tau}(x^*) = x^*$, if and only if $x^*$
                is a stationary point of problem (\ref{eq:PnP_pb}), \emph{i.e.} $-\nabla  g_\sigma(x^*) \in \partial f (x^*)$.
            \end{lemma}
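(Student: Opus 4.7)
The plan is to use the standard subdifferential characterization of the proximal operator of a proper convex lower semicontinuous function: for such an $f$ and any $y \in \mathbb{R}^n$,
\begin{equation}
p = \prox_{\tau f}(y) \iff y - p \in \tau \partial f(p) .
\end{equation}
This equivalence follows directly from the first-order optimality condition of the strongly convex minimization problem defining $\prox_{\tau f}(y)$, together with Fermat's rule for convex functions.

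Given this, I would apply the characterization with $y = x^* - \tau \nabla g_\sigma(x^*)$. Then $T_\tau(x^*) = x^*$ is equivalent to $\prox_{\tau f}(x^* - \tau \nabla g_\sigma(x^*)) = x^*$, which by the characterization above becomes
\begin{equation}
(x^* - \tau \nabla g_\sigma(x^*)) - x^* \in \tau \partial f(x^*) ,
\end{equation}
i.e.\ $-\tau \nabla g_\sigma(x^*) \in \tau \partial f(x^*)$. Dividing by $\tau > 0$ yields $-\nabla g_\sigma(x^*) \in \partial f(x^*)$, which is precisely the stationarity condition for $F = f + \lambda g_\sigma$ (with $\lambda = 1$ in the normalization of the proof) since $g_\sigma$ is differentiable and thus $\partial F(x^*) = \nabla g_\sigma(x^*) + \partial f(x^*)$. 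Reversing each step gives the converse implication, so the equivalence is complete.

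There is no real obstacle here: the assumptions of Theorem~\ref{thm:PGD_1} ($f$ proper, lsc, convex, and $g_\sigma$ differentiable with Lipschitz gradient) are exactly what is needed to guarantee both that $\prox_{\tau f}$ is well-defined and single-valued, and that the subdifferential calculus rule $\partial(f + g_\sigma) = \partial f + \nabla g_\sigma$ applies. The only care to take is to make sure each step is an equivalence (not just an implication), which is automatic from the ``$\iff$'' in the subdifferential characterization of the proximal operator.
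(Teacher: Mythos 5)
Your proposal is correct and is essentially identical to the paper's own proof: both rest on the subdifferential characterization $p = \prox_{\tau f}(y) \Leftrightarrow y - p \in \tau\partial f(p)$ applied with $y = x^* - \tau\nabla g_\sigma(x^*)$, followed by cancellation and division by $\tau$. The paper phrases this simply as ``by definition of the proximal operator,'' but the chain of equivalences is the same.
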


            \begin{proof}
                By definition of the proximal operator, we have
                \begin{equation}
                    \begin{split}
                        T_{\tau}(x^*) = x^* &\Leftrightarrow x^* = \prox_{\tau f} \circ  (\id - \tau \nabla_x
                        g_\sigma)( x^*) \\
                        &\Leftrightarrow x^* - \tau \nabla_x g_\sigma(x^*) - x^* \in \tau \partial f (x^*) \\
                        &\Leftrightarrow - \nabla_x g_\sigma(x^*) \in \partial f (x^*).
                    \end{split}
                \end{equation}
            \end{proof}

            \begin{lemma}
                \label{lem:lipT}
                With the assumptions of Theorem~\ref{thm:PGD_1}, $T_{\tau}$ is $1+\tau L$ Lipschitz.
            \end{lemma}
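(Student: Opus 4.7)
The plan is to prove the Lipschitz estimate by decomposing $T_\tau$ into its two constituent maps and bounding each separately, then multiplying the constants. Concretely, I would write $T_\tau = P \circ G$ where $P = \prox_{\tau f}$ and $G = \id - \tau \nabla g_\sigma$, and show that $P$ is $1$-Lipschitz and $G$ is $(1+\tau L)$-Lipschitz; composition then gives the claim.

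For the proximal map $P$, the key fact is that since $f$ is proper, lower semicontinuous and convex, the proximity operator $\prox_{\tau f}$ is firmly nonexpansive (Moreau, or equivalently the resolvent of the maximal monotone operator $\tau \partial f$), and in particular $1$-Lipschitz. I would simply cite this standard result from convex analysis rather than reprove it here; any reference such as \citet{CombettesPesquet} or Moreau's original theorem suffices.

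For the gradient step $G = \id - \tau \nabla g_\sigma$, the bound follows by the triangle inequality together with the $L$-Lipschitz property of $\nabla g_\sigma$ guaranteed by the hypothesis. For arbitrary $x,y \in \mathbb{R}^n$,
\begin{equation}
\|G(x) - G(y)\| \leq \|x-y\| + \tau \|\nabla g_\sigma(x) - \nabla g_\sigma(y)\| \leq (1+\tau L)\|x-y\|.
\end{equation}
Note that here we do not need convexity of $g_\sigma$ (which is not assumed): the crude triangle-inequality bound suffices, although it would not be tight in the convex case. Chaining the two estimates yields $\|T_\tau(x) - T_\tau(y)\| \leq 1 \cdot (1+\tau L) \|x-y\|$, which is the desired result.

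There is no substantial obstacle; the only minor subtlety is to be explicit that firm nonexpansiveness of $\prox_{\tau f}$ requires the convexity of $f$ (which is part of the standing assumptions of Theorem~\ref{thm:PGD_1}), and to observe that one cannot sharpen the constant for $G$ below $1+\tau L$ without assuming convexity of $g_\sigma$. The proof is two lines once the two sub-bounds are in place.
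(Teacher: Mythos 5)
Your proof is correct and follows essentially the same route as the paper's: both decompose $T_\tau$ into $\prox_{\tau f}$ (nonexpansive by convexity of $f$, cited from standard convex analysis) composed with the gradient step $\id - \tau \nabla g_\sigma$ (bounded by $1+\tau L$ via the triangle inequality and the $L$-Lipschitz property of $\nabla g_\sigma$). No gaps; your remarks on where convexity of $f$ is needed and on the non-tightness of the constant for nonconvex $g_\sigma$ are accurate.
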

            \begin{proof}
                Using the fact that for $f$ convex, $\prox_{\tau f}$ is 1-Lipschitz \cite[Proposition 12.28]{BauschkeCombettes}, and by the Lipschitz property of  $\nabla_x g_\sigma$,
                \begin{equation}
                    \begin{split}
                        \norm{T_\tau(x)-T_\tau(y)} &= \norm{\prox_{\tau f}\circ(\id - \tau \nabla_x g_\sigma)(x) -
                        \prox_{\tau f}\circ(\id - \tau \nabla_x g_\sigma)(y)} \\
                        &\leq \norm{(\id - \tau \nabla_x g_\sigma)(x) -  (\id - \tau \nabla_x g_\sigma)(y)} \\
                        &\leq (1+\tau L) \norm{x-y}.
                    \end{split}
                \end{equation}
            \end{proof}
            Note that, by nonconvexity of $g_\sigma$, the fixed point operator $T_{\tau}$ is not necessarily nonexpansive, but we can still show the convergence of the fixed-point algorithm towards a critical point of the objective function.
            We can now turn to the proof of (iii). Let $x^*$ be a cluster point of $(x_k)_{k\geq 0}$. Then there exists
            a subsequence $(x_{k_j})_{j \geq 0}$ converging to $x^*$.
            We have $\forall j \geq 0$,
            \begin{equation}
                \begin{split}
                    \norm{x^*-T_\tau(x^*) } &\leq \norm{x^*-x_{k_j}} + \norm{x_{k_j}-T_\tau(x_{k_j})} +  \norm{T_\tau
                    (x_{k_j})-T_\tau(x^*) } \\
                    &\leq (2+\tau L) \norm{x^*-x_{k_j}} + \norm{x_{k_j}-T_\tau(x_{k_j})} \ \text{by Lemma
                    \ref{lem:lipT}.}
                \end{split}
            \end{equation}
            Using (ii), the right-hand side of the inequality tends to $0$ as $j \to \infty$. Thus ${\norm{x^*-T_\tau(x^*)}=0}$ and $x^* = T_\tau(x^*)$, which by Lemma \ref{lem:statio} means that $x^*$ is a
            stationary point of problem (\ref{eq:PnP_pb}).
    \end{proof}

    \section{On the boundedness of $(x_k)$}
    \label{app:bounded_seq}

    In order to obtain convergence of the iterates, in Theorem~\ref{thm:PGD_2} the generated sequence $(x_k)$ is assumed to be bounded.
    In the experiments~(Section~\ref{sec:exp}), we observed that under the rest of assumptions of Theorem~\ref{thm:PGD_2}, boundedness was always verified.
    A sufficient condition for the boundedness of the iterates is the coercivity of the objective function, that is, $\lim_{|x| \to \infty} F(x) = +\infty$ (because the non-increasing property gives $F(x_k) \leq F(x_0)$).

    Similar to~\citet{laumont2021bayesian}, we can constrain $F$ to be coercive by choosing
    a convex compact set $C \subset \mathbb{R}^n$ where the iterates should stay and by adding an extra term to the regularization $g_\sigma$:
    \begin{equation}
    \begin{split}
        \hat g_{\sigma}(x) &= g_{\sigma}(x) + \frac{1}{2}\norm{x - \Pi_C(x)}^2
        = \frac{1}{2} \norm{x - N_\sigma(x)}^2 + \frac{1}{2}\norm{x - \Pi_C(x)}^2
    \end{split}
    \end{equation}
    with $\Pi_C$ the Euclidian projection on $C$.
    As $g_\sigma$ is differentiable, the gradient step becomes
    \begin{equation}
        \begin{split}
            (\id - \tau \lambda \nabla_x \hat g_\sigma)(x) = (\id - \tau \lambda \nabla_x g_\sigma) + \tau \lambda (x - \Pi_C(x)).
    \end{split}
    \end{equation}

    In our experiments, we choose the compact set $C$ as $C = [-1, 2]^{n}$. In practice we observe that all the iterates always remain in $C$ and that the extra regularization term $\norm{x - \Pi_C(x)}^2$ is never activated.
    Therefore, we don't present this technical adaptation in Algorithm \ref{alg:PnP_algo} but we let the reader aware that boundedness of $(x_k)$ is not a limiting assumption.

    \section{KL property }
    \label{app:KL_property}

    \begin{definition}{Kurdyka-Lojasiewicz (KL) property (taken from~\citet{attouch2010proximal})}
        \begin{itemize}
            \item[(a)] A function $f : \mathbb{R}^n \longrightarrow \mathbb{R} \cup +\infty$ is said to have the Kurdyka-Lojasiewicz property at $x^* \in dom(f)$
            if there exists $\eta \in (0,+\infty)$, a neighborhood $U$ of $x^*$ and a continuous concave function $\psi : [0,\eta) \longrightarrow \mathbb{R}_+$ such that
            $\psi(0)=0$, $\psi$ is $\mathcal{C}^ 1$ on $(0,\eta)$, $\psi' > 0$ on $(0,\eta)$ and $\forall x \in U \cap [f(x^*) < f < f(x^*)+\eta]$, the Kurdyka-Lojasiewicz inequality holds:
            \begin{equation}
                \psi'(f(x)-f(x^*))dist(0,\partial f(x)) \geq 1 .
            \end{equation}
            \item[(b)] Proper lower semicontinuous functions which satisfy the Kurdyka-Lojasiewicz inequality at each point of $dom(\partial f)$ are called KL functions.
        \end{itemize}
    \end{definition}

    This condition can be interpreted as the fact that, up to a reparameterization, the function is sharp \emph{i.e.} we can bound its subgradients away from 0.
    A big class of functions that have the KL-property is given by real semi-algebraic functions. For more details and interpretations, we refer to~\citet{attouch2010proximal} and~\citet{bolte2010characterizations}.

    \section{On the assumptions of Theorems~\ref{thm:PGD_1} and~\ref{thm:PGD_2}}
    \label{app:assumptions}

    In this section, we explicitly list and comment all the assumptions required by Theorems~\ref{thm:PGD_1} and ~\ref{thm:PGD_2}.
    These assumptions are standard in nonconvex optimization. We now detail why each assumption is verified for our plug-and-play image restoration algorithm.

    \textbf{Assumptions of Theorem~\ref{thm:PGD_1}}:
    \begin{itemize}
        \item \textit{Data-fidelity term $f : \mathbb{R}^n \to \mathbb{R} \cup \{+\infty\}$ proper lower semicontinous and convex.}
        This is a general assumption that includes most of the data-fidelity terms classically used in IR problems.
        Note that we do not require differentiability of $f$. Degradations with Gaussian, Poisson or Laplacian noise models fall into this hypothesis.
        As noticed in Remark~\ref{rem:nonconvex_f}, our results can even be easily extended to a nonconvex data-fidelity term $f$, which encompasses applications like phase retrieval \citet{metzler2018prdeep}.
        In practice, it is helpful to have $f$ proximable, \emph{i.e.} $\prox_f$ with closed-form formula. Otherwise, $\prox_f$ needs to be calculated at each iteration
        with an optimization algorithm. %This is against the case when considering Gaussian, Poisson or Laplacian noise models. A VERIFIER
        \item \textit{Regularization function $g_\sigma : \mathbb{R}^n \to \mathbb{R}$ proper lower semicontinous and differentiable with $L$-Lipschitz gradient}.
        We parametrize as $g_\sigma(x) = \frac{1}{2} \norm{x - N_\sigma(x)}^2$ with $N_{\sigma}$ a differentiable neural network. This assumption on $g_\sigma$ is thus reasonable from a practical perspective.
        Indeed, using a network $N_{\sigma}$ with differentiable activation functions, our function $g_\sigma$  is differentiable with Lipschitz gradient (details and proof are given in Appendix~\ref{app:lipschitz}).
        \item \textit{Functional $F = f+ \lambda g_\sigma$ bounded from below. }
        This is straightforward as all the terms are positive.
        \item \textit{The stepsize $\tau < \frac{1}{\lambda L}$.}
        This is handled by backtracking (see Section~\ref{ssec:backtracking}).
    \end{itemize}

    \textbf{Assumptions of Theorem~\ref{thm:PGD_2}}:
    \begin{itemize}
        \item \textit{Assumptions of Theorem~\ref{thm:PGD_1}}
        \item \textit{$F$ verify the KL property.} The KL property (defined in Appendix~\ref{app:KL_property}) has been widely used to study the convergence of optimization algorithms in the nonconvex setting~\citep{attouch2010proximal, attouch2013convergence, ochs2014ipiano}.
        Very large classes of functions, in particular all the semi-algebraic functions, satisfy this technical property. It encompasses all the data-fidelity and regularization terms encountered in inverse problems.
        \item \textit{The sequence $(x_k)$ given by the iterative scheme~(\ref{eq:our_PnP}) is bounded.} As discussed in Appendix~\ref{app:bounded_seq}, the boundedness can be ensured with a potential  additional projection at each iteration. This is just a theoretical guarantee, as we observed that such 	a projection is never activated in practice.
    \end{itemize}

 \section{Backtracking and Proof of Proposition \ref{prop:PGD_3}}
    \label{app:PGD_3}
    Before giving the proof of Proposition \ref{prop:PGD_3}, we first point out that  our backtracking line search is a classical Armijo-type backtracking strategy, already used for nonconvex optimization in \cite[Chapter 10]{beck2017first} or \cite{ochs2014ipiano}.
    Other procedures could be investigated in future work. For instance, \citet{li2015accelerated} uses a Barzilai-Borwein rule to initialize the backtracking line search. \citet{scheinberg2014fast} and \citet{calatroni2019backtracking} have also proposed a backtracking strategy that allows for both decreasing and increasing of the stepsize.

    We now give the proof of Proposition \ref{prop:PGD_3}.
    \begin{proof}
        For a given stepsize $\tau$, we showed in Appendix \ref{app:PGD_1_proof}, equation~(\ref{eq:decrease_F}) that
        \begin{equation}
            F(x_{k}) - F(T_\tau(x_{k})) \geq \frac{1}{2}\left( \frac{1}{\tau}-L\right)\norm{T_\tau(x_{k})-x_k}^2 .
        \end{equation}
        Taking $\tau < \frac{1-2\gamma}{L}$, we get $\frac{1}{2}(\frac{1}{\tau}-L) > \frac{\gamma}{\tau}$ so that
        \begin{equation}
            \label{eq:new_decrease_F}
            F(x_{k}) - F(T_\tau(x_{k})) > \frac{\gamma}{\tau} \norm{T_\tau(x_{k})-x_k}^2.
        \end{equation}
        Hence, when $\tau < \frac{1-2\gamma}{L}$, the sufficient decrease condition equation~(\ref{eq:new_decrease_F}) is satisfied and the backtracking procedure ($\tau \longleftarrow \eta \tau$) must end.

        In the proof of Theorem \ref{thm:PGD_1}, we can replace the sufficient decrease~(\ref{eq:decrease_F}) by~(\ref{eq:new_decrease_F}) and finish the proof with the same arguments.
        In the same way, in the proof of Theorem \ref{thm:PGD_2} given in~\cite[Theorem 5.1]{attouch2013convergence}, our sufficient decrease~(\ref{eq:new_decrease_F}) replaces~\cite[Equation (52)]{attouch2013convergence}.

    \end{proof}

\newpage
    \section{DRUNet \emph{light} architecture}
    \label{app:architecture}
The architecture of the DRUNet \emph{light} denoiser of  (\cite{zhang2021plug}) is given in Figure \ref{fig:architecture}.
    \begin{figure}[ht] \centering
    \includegraphics[width=\linewidth]{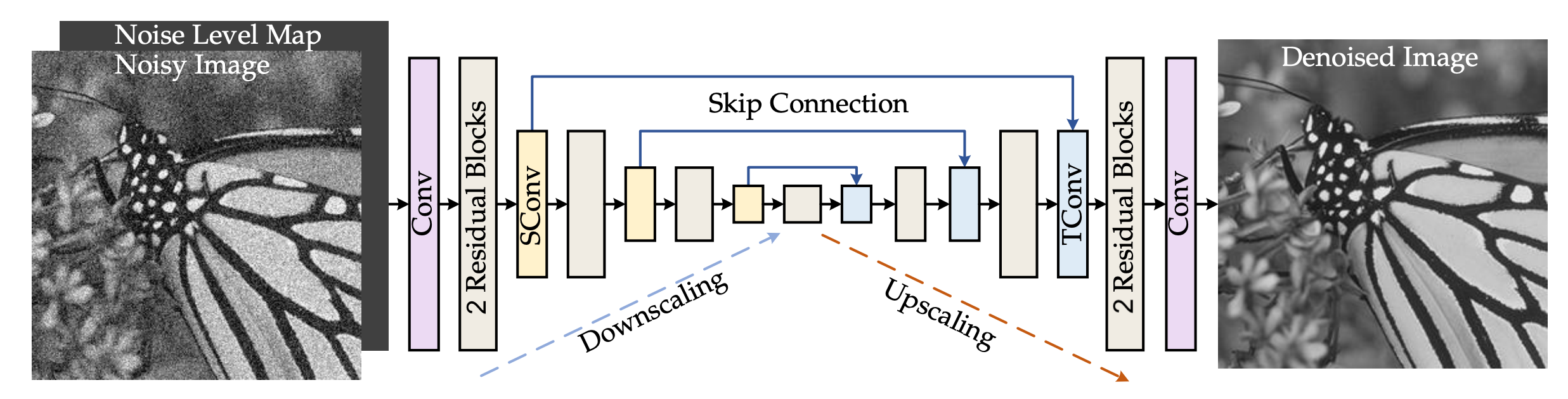}
    \caption{Architecture of the DRUNet \emph{light} denoiser (\cite{zhang2021plug}) used to parameterize $N_\sigma$.}
    \label{fig:architecture}
    \end{figure}

    \section{Expansiveness of the denoiser}
    \label{app:lipschitz_expe}

    As $g_\sigma$ is not necessarily convex, our GS-DRUNet denoiser $D_{\sigma} = \id - \nabla g_\sigma$ is not necessarily nonexpansive and neither is the gradient step $\id - \lambda \tau \nabla g_\sigma$.
    This is not an issue as, unlike previous theoretical PnP studies~\citep{terris2020building,reehorst2018regularization}, our convergence results do not require a nonexpansive denoising step.
    To advocate that our method converges without this assumption, we show in Figure~\ref{fig:lip_algo} the evolution of $\frac{\norm{D_{\sigma}(x_{k+1})-D_{\sigma}(x_{k})}}{\norm{x_{k+1}-x_{k}}}$ along
    the algorithm that was run to obtain the super-resolution results of Figure~\ref{fig:SR2}. In this experiment, backtracking did not get activated and stayed fixed at $\lambda \tau = 1$. The gradient step in the PGD algorithm was thus simply a denoising step $D_\sigma = \id - \lambda \tau \nabla g_\sigma$.
    Note that the Lipschitz constant of $D_{\sigma}$ goes above $1$ but convergence is still observed as shown by the two convergence curves in Figure~\ref{fig:SR2}.

    \begin{figure}[ht] \centering
        \includegraphics[width=0.5\textwidth]{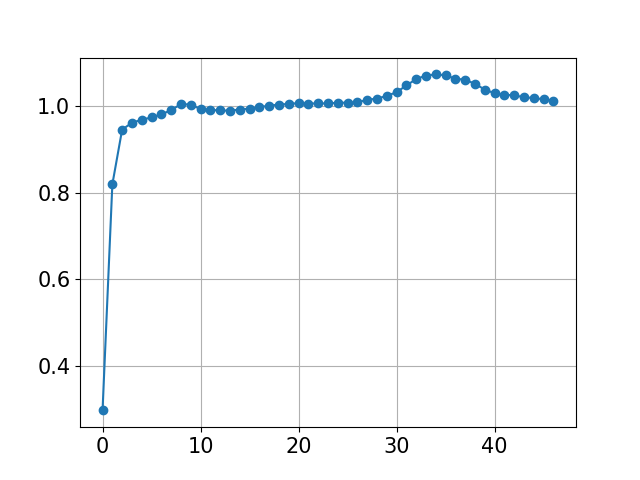}
        \caption{Lipschitz constant of $D_{\sigma}$ along the iterates of the algorithm when performing the super-resolution experiments presented Figure~\ref{fig:SR2}.
        Note that the Lipschitz constant goes above 1 \emph{i.e.} $D_\sigma$ is not nonexpansive, but we still empirically verified convergence (see convergence curves Figure~\ref{fig:SR2}).}
        \label{fig:lip_algo}
    \end{figure}

    \section{Additional Experiments}
    \label{app:experiments}

    \subsection{Deblurring}
    \label{app:deblurring}

    We give here additional image deblurring experiments. We first present the PSNR performance comparison on the Set3c dataset in Table~\ref{tab:deblurring_results_set3C}. We also provide an evaluation of the 3 best methods (GS-PnP, DPIR and IRCNN) on the full CBSD68 dataset  in Table~\ref{tab:deblurring_results_CBSD68}. For fair comparison with RED, we also display in Table~\ref{tab:deblurringY_results} the PSNR calculated on the Y channel only. An additional visual comparison
    is finally shown in Figure~\ref{fig:deblurring2}. Details and comments are given in the corresponding captions.

    \begin{table}[!ht]
        \centering\setlength\tabcolsep{1.5pt}
        \begin{tabular}{c c c c c c c c c c  c c c }
            &  & (a) & (b) & (c) & (d) & (e) & (f) & (g) & (h) & (i) & (j) \\
             $\nu$ & Method &  \includegraphics[width=\len]{images/deblurring/kernels/kernel_0.png}
            & \includegraphics[width=\len]{images/deblurring/kernels/kernel_1.png}
            & \includegraphics[width=\len]{images/deblurring/kernels/kernel_2.png}
            & \includegraphics[width=\len]{images/deblurring/kernels/kernel_3.png}
            & \includegraphics[width=\len]{images/deblurring/kernels/kernel_4.png}
            & \includegraphics[width=\len]{images/deblurring/kernels/kernel_5.png}
            & \includegraphics[width=\len]{images/deblurring/kernels/kernel_6.png}
            & \includegraphics[width=\len]{images/deblurring/kernels/kernel_7.png}
            & \includegraphics[width=\len]{images/deblurring/kernels/kernel_8.png}
            & \includegraphics[width=\len]{images/deblurring/kernels/kernel_9.png} & \textit{Avg} \\
                \midrule
            \multirow{6}{*}{\rotatebox[origin=c]{90}{$0.01$}}
            & EPLL &  $23.83$ & $24.14$ & $24.83$ & $19.85$ & $26.08$ & $21.77$ & $21.53$ & $21.57$ & $22.43$ & $21.36$ & $\textit{22.74}$ \\
            & RED &  $29.21$ &  $28.58$ & $29.52$ & $24.54$ & $30.45$ & $25.34$ & $26.06$ & $26.07$ & $25.11$ & $28.50$ & $\textit{27.34}$ \\
            & IRCNN & $33.36$ & $33.06$ & $33.11$ & $32.87$ & $34.24$ & $34.08$ & $33.25$ & $32.87$ & $27.78$ & $\underline{29.67}$ & $\textit{32.45}$\\
            & MMO & $32.84$ & $32.29$ & $32.76$ & $31.85$ & $34.08$ & $33.76$ & $33.11$ & $32.38$ & $26.31$ & $29.91$ & $\textit{31.93}$ \\
            & DPIR & $\mathbf{34.94}$ & $\mathbf{34.46}$ & $\mathbf{34.25}$ & $\mathbf{34.34}$ & $\mathbf{35.57}$ & $\mathbf{35.53}$ & $\mathbf{34.49}$ & $\mathbf{34.21}$ & $\underline{28.14}$ & $29.63$  & $\bfit{33.56}$  \\
            & GS-PnP & $\underline{34.58}$ & $\underline{34.13}$ & $\underline{34.04}$ & $\underline{33.93}$ & $\underline{35.45}$ & $\underline{35.25}$ & $\underline{34.30}$ & $\underline{33.97}$ & $\mathbf{28.16}$ & $\mathbf{29.78}$& $\textit{\underline{33.34}}$ \\
            \midrule
            \multirow{6}{*}{\rotatebox[origin=c]{90}{$0.03$}}
            & EPLL & $21.21$ & $21.10$ & $22.65$ & $18.78$ & $24.12$ & $20.77$ & $20.42$ & $19.89$ & $20.61$& $20.60$ & $\textit{21.02}$ \\
            & RED &  $25.42$ & $24.89$ & $25.69$ & $22.67$ & $26.86$ & $23.84$ & $24.06$ & $23.87$  & $21.49$ & $25.45$ & $\textit{24.43}$\\
            & IRCNN & $29.08$ & $28.62$ & $29.03$ & $28.46$ & $30.51$ & $30.06$ & $29.23$ & $28.74$ & $24.39$ & $27.39$& $\textit{28.55}$ \\
            %& MMO &  $17.58$ & $16.85$ & $16.52$ & $16.59$ & $17.76$ & $17.43$ & $17.04$ & $16.69$ & $19.34$ & $23.26$ & $\textit{17.91}$\\
            & DPIR & $\mathbf{30.33}$& $\underline{29.74}$ & $\underline{29.87}$ & $\mathbf{29.67}$ & $\underline{31.27}$ & $\underline{31.08}$ & $\underline{30.21}$ & $\underline{29.72}$ & $\underline{25.02}$ & $\underline{27.84}$ & $\textit{\underline{29.48}}$\\
            & GS-PnP  & $\underline{30.29}$ & $\mathbf{29.84}$ & $\mathbf{30.14}$ & $\underline{29.58}$ & $\mathbf{31.53}$ & $\mathbf{31.24}$ & $\mathbf{30.41}$ & $\mathbf{29.96}$ & $\mathbf{26.13}$ & $\mathbf{28.56}$& $\bfit{29.77}$ \\
            \midrule
             \multirow{6}{*}{\rotatebox[origin=c]{90}{$0.05$}}
            & EPLL & $19.84$ & $19.60$ & $21.40$ & $17.71$ & $22.77$ & $19.68$ & $19.02$ & $18.24$ & $19.81$& $20.12$ & $\textit{19.82}$ \\
            & RED &  $21.93$ & $21.27$ & $22.79$ & $20.32$ & $24.01$ & $22.05$ & $22.06$ & $21.41$ & $19.79$ & $23.21$ & $\textit{21.88}$\\
            & IRCNN & $26.85$ & $26.33$ & $27.04$ & $26.10$ & $28.46$ & $27.90$ & $27.05$ & $26.56$ & $22.90$ & $26.16$ & $\textit{26.54}$\\
            %& MMO & $13.73$ & $13.10$ & $12.89$ & $12.96$ & $13.72$ & $13.62$ & $13.17$ & $12.86$ & $13.15$ & $16.25$ & $\textit{13.55}$\\
            & DPIR & $\underline{27.96}$ & $\underline{27.37}$ & $\underline{28.07}$ & $\underline{27.44}$ & $\underline{29.42}$ & $\underline{29.04}$ & $\underline{28.32}$ & $\underline{27.56}$ & $\underline{23.57}$ & $\underline{26.93}$& $\textit{\underline{27.57}}$  \\
            & GS-PnP & $\mathbf{28.08}$ & $\mathbf{27.75}$ & $\mathbf{28.35}$ & $\mathbf{27.56}$ & $\mathbf{29.60}$ & $\mathbf{29.17}$ & $\mathbf{28.49}$ & $\mathbf{28.01}$ & $\mathbf{24.67}$ & $\mathbf{27.47}$ & $\bfit{27.91}$ \\
        \bottomrule
        \end{tabular}
        \caption{PSNR(dB) comparison of  image deblurring methods on set3C with various blur kernels $k$
          and noise levels $\nu$. Best and second best results are displayed in bold and underlined. Similar to Table~\ref{tab:deblurring_results}, for all kinds of kernels, the proposed
          method outperforms all competing methods at noise levels $0.03$ and $0.05$ and follows DPIR at lower noise level $0.01$.}
        \label{tab:deblurring_results_set3C}
    \end{table}
      \begin{table}[ht]
        \centering\setlength\tabcolsep{1.5pt}
        \begin{tabular}{c c c c c c c c c c  c c c }
            &  & (a) & (b) & (c) & (d) & (e) & (f) & (g) & (h) & (i) & (j) \\
             $\nu$ & Method &  \includegraphics[width=\len]{images/deblurring/kernels/kernel_0.png}
            & \includegraphics[width=\len]{images/deblurring/kernels/kernel_1.png}
            & \includegraphics[width=\len]{images/deblurring/kernels/kernel_2.png}
            & \includegraphics[width=\len]{images/deblurring/kernels/kernel_3.png}
            & \includegraphics[width=\len]{images/deblurring/kernels/kernel_4.png}
            & \includegraphics[width=\len]{images/deblurring/kernels/kernel_5.png}
            & \includegraphics[width=\len]{images/deblurring/kernels/kernel_6.png}
            & \includegraphics[width=\len]{images/deblurring/kernels/kernel_7.png}
            & \includegraphics[width=\len]{images/deblurring/kernels/kernel_8.png}
            & \includegraphics[width=\len]{images/deblurring/kernels/kernel_9.png} & \textit{Avg} \\
            \midrule
           \multirow{3}{*}{\rotatebox[origin=c]{90}{$0.01$}}
            & IRCNN & $32.47$ & $32.14$ & $31.94$ & $31.97$ & $32.94$ & $33.13$ & $31.92$ & $31.62$ & $\underline{27.57}$ & $\mathbf{28.45}$ & $\textit{31.42}$\\
            & DPIR & $\mathbf{33.26}$ & $\mathbf{32.82}$ & $\mathbf{32.48}$ & $\mathbf{32.65}$ & $\mathbf{33.57}$ & $\mathbf{33.85}$ & $\mathbf{32.49}$ & $\mathbf{32.22}$ & $\mathbf{27.65}$ & $\underline{28.26}$  & $\bfit{31.93}$ \\
            & GS-PnP & $\underline{32.95}$ & $\underline{32.54}$ & $\underline{32.26}$ & $\underline{32.31}$ & $\underline{33.41}$ & $\underline{33.71}$ & $\underline{32.29}$ & $\underline{31.92}$ & $27.43$ & $28.17$ & $\textit{\underline{31.70}}$\\
            \midrule
            \multirow{3}{*}{\rotatebox[origin=c]{90}{$0.03$}} & IRCNN & $28.43$ & $28.11$ & $28.28$ & $27.87$ & $29.42$ & $29.21$ & $28.37$ & $27.97$ & $25.52$ & $\underline{26.96}$  & $\textit{28.01}$\\
            & DPIR & $\mathbf{28.88}$ & $\mathbf{28.53}$ & $\mathbf{28.55}$ & $\mathbf{28.30}$ & $29.58$ & $\mathbf{29.62}$ & $\mathbf{28.69}$ & $\underline{28.28}$ & $\underline{25.60}$ & $\underline{26.96}$ & $\bfit{28.30}$\\
            & GS-PnP & $\underline{28.64}$ & $\underline{28.32}$ & $\mathbf{28.55}$ & $\underline{28.06}$ & $\mathbf{29.71}$ & $\underline{29.60}$ & $\mathbf{28.69}$ & $\mathbf{28.31}$ & $\mathbf{25.79}$ & $\mathbf{27.10}$ & $\textit{\underline{28.28}}$\\
            \midrule
            \multirow{3}{*}{\rotatebox[origin=c]{90}{$0.05$}} & IRCNN & $26.73$ & $26.42$ & $26.73$ & $26.13$ & $27.69$ & $27.39$ & $26.69$ & $26.33$ & $24.68$ & $26.18$ & $\textit{26.40}$\\
            & DPIR & $\mathbf{27.04}$ & $\mathbf{26.80}$ & $\mathbf{27.07}$ & $\mathbf{26.53}$ & $\underline{28.00}$ & $\underline{27.85}$ & $\underline{27.17}$ & $\underline{26.72}$ & $\underline{24.75}$ & $\underline{26.32}$ & $\textit{\underline{26.82}}$\\
            & GS-PnP & $\underline{26.93}$ & $\underline{26.72}$ & $\mathbf{27.07}$ & $\underline{26.45}$ & $\mathbf{28.09}$ & $\mathbf{27.87}$ & $\mathbf{27.21}$ & $\mathbf{26.82}$ & $\mathbf{25.02}$ & $\mathbf{26.45}$  & $\bfit{26.86}$\\
         \bottomrule
        \end{tabular}
        \caption{PSNR(dB) performance of the fastest method (IRCNN/DPIR/GS-PnP) for image deblurring on the full CBSD68 dataset with various blur kernels $k$
          and noise levels $\nu$, in the same conditions as Table~\ref{tab:deblurring_results}. On CBSD10 (Table 2) or on CBSD68 (Table 5), we observed very similar performance gaps between the compared methods, which confirms that CBSD10 is large enough to compare accurately the PnP methods.}
        \label{tab:deblurring_results_CBSD68}
    \end{table}

    \begin{table}[!ht]
        \centering\setlength\tabcolsep{0.8pt}
        \begin{tabular}{c c c c c c c c c c  c c c}
            &  & (a) & (b) & (c) & (d) & (e) & (f) & (g) & (h) & (i) & (j)  \\
            $\nu$ & Method &  \includegraphics[width=\len]{images/deblurring/kernels/kernel_0.png}
            & \includegraphics[width=\len]{images/deblurring/kernels/kernel_1.png}
            & \includegraphics[width=\len]{images/deblurring/kernels/kernel_2.png}
            & \includegraphics[width=\len]{images/deblurring/kernels/kernel_3.png}
            & \includegraphics[width=\len]{images/deblurring/kernels/kernel_4.png}
            & \includegraphics[width=\len]{images/deblurring/kernels/kernel_5.png}
            & \includegraphics[width=\len]{images/deblurring/kernels/kernel_6.png}
            & \includegraphics[width=\len]{images/deblurring/kernels/kernel_7.png}
            & \includegraphics[width=\len]{images/deblurring/kernels/kernel_8.png}
            & \includegraphics[width=\len]{images/deblurring/kernels/kernel_9.png} & \textit{Avg} \\
            \midrule
            \multirow{2}{*}{$\sqrt{2}/255$} & RED${}_Y$ & $35.19$ & $34.78$ & $34.58$ & $34.64$ & $35.49$ & $35.65$ & $34.70$ & $34.37$ & $\mathbf{30.15}$ & $\mathbf{31.15}$ & $\textit{34.07}$ \\
            & GS-PnP${}_Y$ & $\mathbf{36.20}$ & $\mathbf{35.76}$ & $\mathbf{35.21}$ & $\mathbf{35.55}$ & $\mathbf{36.33}$ & $\mathbf{36.87}$ & $\mathbf{35.16}$ & $\mathbf{34.79}$ & $29.21$ & $29.48$ & $\bfit{34.45}$\\
            \midrule
             \multirow{2}{*}{$0.01$} & RED${}_Y$ &  $33.52$ &$33.20$ & $\mathbf{33.20}$ &  $33.05$ & $34.20$ & $34.21$ & $\mathbf{33.22}$ & $\mathbf{32.90}$ & $\mathbf{28.77}$ & $\mathbf{30.44}$ & $\bfit{32.67}$\\
            & GS-PnP${}_Y$  & $\mathbf{33.88}$ & $\mathbf{33.39}$ & $33.15$ & $\mathbf{33.15}$ & $\mathbf{34.35}$ & $\mathbf{34.58}$ & $\mathbf{33.22}$ & $32.81$ & $28.23$ & $29.12$ & $\textit{32.59}$ \\
            \midrule
             \multirow{2}{*}{$0.03$} & RED${}_Y$ &  $29.26$ & $28.83$ & $29.28$ & $28.53$ & $\mathbf{30.64}$ & $\mathbf{30.48}$ & $29.50$ & $29.06$ & $26.18$ & $\mathbf{28.78}$ & $\bfit{29.05}$\\
            & GS-PnP${}_Y$  & $\mathbf{29.45}$ & $\mathbf{29.11}$ & $\mathbf{29.39}$ & $\mathbf{28.82}$ & $30.55$ & $30.46$ & $\mathbf{29.55}$ & $\mathbf{29.14}$ & $\mathbf{26.50}$ & $28.02$ & $\textit{29.01}$\\
            \midrule
             \multirow{2}{*}{$0.05$} & RED${}_Y$  & $26.91$ & $26.54$ & $27.52$ & $26.23$ & $28.68$ & $28.25$ & $27.67$ & $27.07$ & $25.36$ & $\mathbf{27.98}$ & $\textit{27.22}$\\
            & GS-PnP${}_Y$  & $\mathbf{27.65}$ & $\mathbf{27.48}$ & $\mathbf{27.88}$ & $\mathbf{27.19}$ & $\mathbf{28.90}$ & $\mathbf{28.67}$ & $\mathbf{28.03}$ & $\mathbf{27.59}$ & $\mathbf{25.62}$ & $27.29$ & $\bfit{27.63}$\\
            \bottomrule
        \end{tabular}
        \caption{PSNR(dB) performance, evaluated on the luminance channel in YcbCr color space, of RED and GS-PnP for image deblurring on CBSD10. Remind that our method treats the RGB image as a whole before being evaluated on the Y channel while RED treats the Y channel independently. Compared to Table \ref{tab:deblurring_results}, we add the case $\nu=\sqrt{2}/255$ as in RED original paper (\cite{romano2017little}).
        Note that RED was optimized for kernels (i) and (j) and $\nu=\sqrt{2}/255$, and outperforms our method in this set of conditions. However, over the variety of kernels and noise levels, and in particular for motion blur, our method generally outperforms RED.}
        \label{tab:deblurringY_results}
    \end{table}

        \begin{figure}[!ht] \centering

    \begin{subfigure}[b]{.24\linewidth}
            \centering
            \begin{tikzpicture}[spy using outlines={rectangle,blue,magnification=6,size=1.5cm, connect spies}]
            \node {\includegraphics[height=3cm]{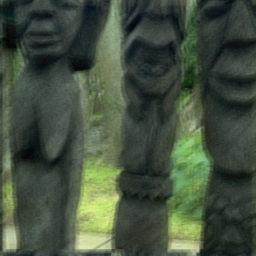}};
            \spy on (0.5,1.35) in node [left] at (-0.25,1.25);
            \node at (-1.12,-1.12) {\includegraphics[scale=1.]{images/deblurring/kernels/kernel_5.png}};
            \end{tikzpicture}
            \caption{Observed ($18.31$dB)}
        \end{subfigure}
    \begin{subfigure}[b]{.24\linewidth}
            \centering
            \begin{tikzpicture}[spy using outlines={rectangle,blue,magnification=6,size=1.5cm, connect spies}]
            \node {\includegraphics[height=3cm]{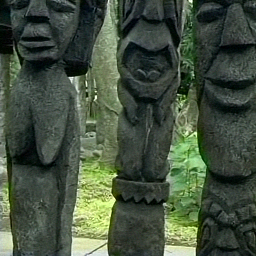}};
            \spy on (0.5,1.35) in node [left] at (-0.25,1.25);
            \end{tikzpicture}
            \caption{RED ($28.43$dB)}
        \end{subfigure}
    \begin{subfigure}[b]{.24\linewidth}
            \centering
            \begin{tikzpicture}[spy using outlines={rectangle,blue,magnification=6,size=1.5cm, connect spies}]
            \node {\includegraphics[height=3cm]{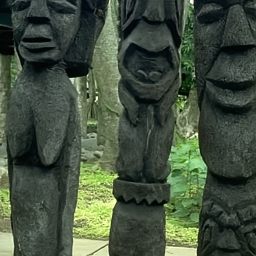}};
            \spy on(0.5,1.35) in node [left] at (-0.25,1.25);
            \end{tikzpicture}
            \caption{IRCNN ($30.61$dB)}
        \end{subfigure}
    \begin{subfigure}[b]{.24\linewidth}
            \centering
            \begin{tikzpicture}[spy using outlines={rectangle,blue,magnification=6,size=1.5cm, connect spies}]
            \node {\includegraphics[height=3cm]{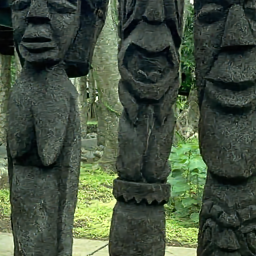}};
            \spy on (0.5,1.35) in node [left] at (-0.25,1.25);
            \end{tikzpicture}
            \caption{MMO ($30.23$dB)}
        \end{subfigure}
    \begin{subfigure}[b]{.24\linewidth}
            \centering
            \begin{tikzpicture}[spy using outlines={rectangle,blue,magnification=6,size=1.5cm, connect spies}]
            \node {\includegraphics[height=3cm]{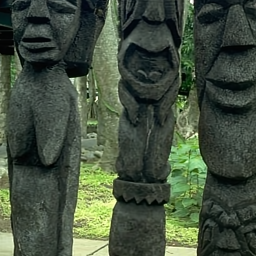}};
            \spy on (0.5,1.35) in node [left] at (-0.25,1.25);
            \end{tikzpicture}
            \caption{DPIR ($30.85$dB)}
        \end{subfigure}
    \begin{subfigure}[b]{.24\linewidth}
            \centering
            \begin{tikzpicture}[spy using outlines={rectangle,blue,magnification=6,size=1.5cm, connect spies}]
            \node {\includegraphics[height=3cm]{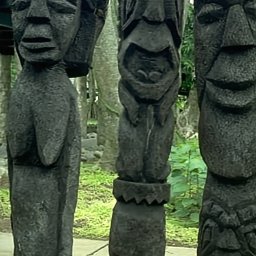}};
            \spy on (0.5,1.35) in node [left] at (-0.25,1.25);
            \end{tikzpicture}
            \caption{GS-PnP ($30.74$dB)}
        \end{subfigure}
     \begin{subfigure}[b]{.24\linewidth}
        \centering
        \includegraphics[width=3.4cm]{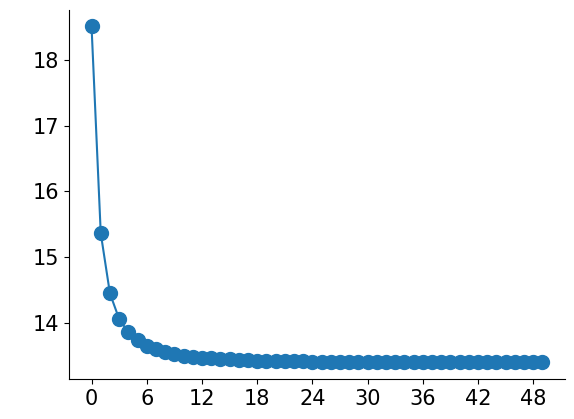}
        \caption{$F(x_k)$}
    \end{subfigure}
    \begin{subfigure}[b]{.24\linewidth}
        \centering
        \includegraphics[width=3.4cm]{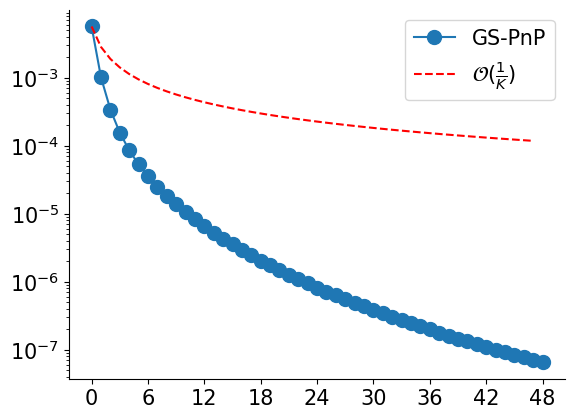}
        \caption{$\gamma_k$ (log scale)}
    \end{subfigure}
    \caption{Deblurring with various methods of an image from CSBD10 degraded with the indicated blur kernel and input noise level $\nu=0.01$.  In (g) and (h), we show the evolution of $F(x_k)$ and ${\gamma_k = \min_{0 \leq i \leq k}\norm{x_{i+1}-x_i}^2}/{{\norm{x_0}^2}}$ along
    our algorithm. Note that GS-PnP and DPIR both recover fine textures while other methods tend to smooth details.}
    \label{fig:deblurring2}
    \end{figure}

\clearpage
    \subsection{Super-resolution}
    \label{app:super-res}

    We also present additional super-resolution experiments. We realize a full PSNR performance comparison on the Set3c dataset Table~\ref{tab:SR_results_set3C}.
 We show additional visual comparisons between methods  Figure~\ref{fig:SR2} and Figure~\ref{fig:SR3}.
    Details and comments are given in the corresponding captions.\bigskip\\

       \begin{table}[!ht]\footnotesize
        \centering\setlength\tabcolsep{2pt}
        \begin{tabular}{c c c c c  c c c c }
            \toprule
             \multirow{2}{*}{Kernels}  & \multirow{2}{*}{Method} & \multicolumn{3}{c}{$s = 2$} &
             \multicolumn{3}{c}{$s = 3$} & \multirow{2}{*}{\textit{Avg}} \\
            \cmidrule(lr){3-5} \cmidrule(lr){6-8}
             &   & $\nu = 0.01$ & $\nu = 0.03$ & $\nu = 0.05$ &  $\nu = 0.01$ & $\nu = 0.03$ & $\nu = 0.05$\\
            \midrule
            \multirowcell{5}{ \includegraphics[width=0.055\textwidth]{images/SR/kernels/kernel_0.png}
            \includegraphics[width=0.055\textwidth]{images/SR/kernels/kernel_1.png}
            \includegraphics[width=0.055\textwidth]{images/SR/kernels/kernel_2.png}
            \includegraphics[width=0.055\textwidth]{images/SR/kernels/kernel_3.png}}
            & Bicubic & $21.92$ & $21.54$ & $20.90$ & $19.76$ & $19.53$ & $19.11$ & $\textit{20.46}$ \\
            & RED & $28.22$ & $25.62$ & $23.61$ & $24.91$ & $23.38$ & $21.82$  & $\textit{24.59}$ \\
            & IRCNN & $28.35$ & $26.40$ & $25.27$ & $25.61$ & $24.45$ & $23.37$  & $\textit{25.58}$ \\
            & DPIR & $\underline{29.08}$ & $\underline{27.27}$ & $\underline{26.21}$ & $\mathbf{26.55}$ & $\underline{25.33}$ & $\underline{24.41}$  & $\textit{\underline{26.48}}$ \\
            & GS-PnP & $\mathbf{29.24}$ & $\mathbf{28.03}$ & $\mathbf{26.65}$ & $\underline{25.90}$ & $\mathbf{25.56}$ & $\mathbf{24.60}$  & $\bfit{27.00}$ \\
            \midrule
            \multirowcell{5}{ \includegraphics[width=0.055\textwidth]{images/SR/kernels/kernel_4.png}
            \includegraphics[width=0.055\textwidth]{images/SR/kernels/kernel_5.png}
            \includegraphics[width=0.055\textwidth]{images/SR/kernels/kernel_6.png}
            \includegraphics[width=0.055\textwidth]{images/SR/kernels/kernel_7.png}}
            & Bicubic & $19.82$ & $19.58$ & $19.16$ & $18.95$ & $18.76$ & $18.40$  & $\textit{19.11}$ \\
            & RED & $24.72$ & $22.55$ & $21.10$ & $22.82$ & $21.64$ & $20.19$  & $\textit{22.17}$ \\
            & IRCNN & $25.10$ & $23.44$ & $22.52$ & $24.25$ & $22.60$ & $21.58$  & $\textit{23.25}$ \\
            & DPIR & $\mathbf{26.22}$ & $\underline{24.52}$ & $\underline{23.56}$ & $\mathbf{25.34}$ & $\underline{23.57}$ & $\underline{22.50}$  & $\bfit{24.29}$ \\
            & GS-PnP & $\underline{25.45}$ & $\mathbf{24.84}$ & $\mathbf{23.80}$ & $\underline{24.53}$ & $\mathbf{23.73}$ &  $\mathbf{22.71}$ & $\textit{\underline{24.18}}$ \\
            \bottomrule
        \end{tabular}
        \caption{PSNR(dB) comparison  of  image super-resolution methods on set3C with various scales~$s$, blur kernels $k$
          and noise levels $\nu$. Similar to Table~\ref{tab:SR_results_CBSD10}, for isotropic and anisotropic kernels, the proposed
          method outperforms all competing methods at noise levels $0.03$ and $0.05$ and follows DPIR at lower noise level $0.01$.}
        \label{tab:SR_results_set3C}
    \end{table}

           \begin{table}[!ht]\footnotesize
        \centering\setlength\tabcolsep{2pt}
        \begin{tabular}{c c c c c  c c c c }
            \toprule
             \multirow{2}{*}{Kernels}  & \multirow{2}{*}{Method} & \multicolumn{3}{c}{$s = 2$} &
             \multicolumn{3}{c}{$s = 3$} & \multirow{2}{*}{\textit{Avg}} \\
            \cmidrule(lr){3-5} \cmidrule(lr){6-8}
             &   & $\nu = 0.01$ & $\nu = 0.03$ & $\nu = 0.05$ &  $\nu = 0.01$ & $\nu = 0.03$ & $\nu = 0.05$\\
            \midrule
            \multirowcell{3}{ \includegraphics[width=0.055\textwidth]{images/SR/kernels/kernel_0.png}
            \includegraphics[width=0.055\textwidth]{images/SR/kernels/kernel_1.png}
            \includegraphics[width=0.055\textwidth]{images/SR/kernels/kernel_2.png}
            \includegraphics[width=0.055\textwidth]{images/SR/kernels/kernel_3.png}}
            & IRCNN & $26.97$ & $25.86$ & $25.45$ & $25.60$ & $24.72$ & $24.38$  & $\textit{25.50}$ \\
            & DPIR & $\underline{27.79}$ & $\underline{26.58}$ & $\underline{25.83}$ & $\mathbf{26.05}$ & $\underline{25.27}$ & $\underline{24.66}$  & $\textit{\underline{26.03}}$ \\
            & GS-PnP & $\mathbf{27.88}$ & $\mathbf{26.81}$ & $\mathbf{26.01}$ & $\underline{25.97}$ & $\mathbf{25.35}$ & $\mathbf{24.74}$  & $\bfit{26.13}$ \\
            \midrule
            \multirowcell{3}{ \includegraphics[width=0.055\textwidth]{images/SR/kernels/kernel_4.png}
            \includegraphics[width=0.055\textwidth]{images/SR/kernels/kernel_5.png}
            \includegraphics[width=0.055\textwidth]{images/SR/kernels/kernel_6.png}
            \includegraphics[width=0.055\textwidth]{images/SR/kernels/kernel_7.png}}
            & IRCNN & $25.41$ & $24.52$ & $24.18$ & $24.94$ & $24.04$ & $23.61$  & $\textit{24.45}$ \\
            & DPIR & $\mathbf{26.08}$ & $\underline{24.99}$ & $\underline{24.39}$ & $\mathbf{25.53}$ & $\underline{24.46}$ & $\underline{23.80}$  & $\textit{\underline{24.88}}$ \\
            & GS-PnP & $\underline{25.98}$ & $\mathbf{25.07}$ & $\mathbf{24.53}$ & $\underline{25.47}$ & $\mathbf{24.56}$ &  $\mathbf{23.92}$ & $\bfit{24.92}$ \\
            \bottomrule
        \end{tabular}
        \caption{PSNR(dB) performance of the fastest method (IRCNN/DPIR/GS-PnP) for image super-resolution on the full CBSD68 dataset with various blur kernels $k$
          and noise levels $\nu$, in the same conditions as Table~\ref{tab:SR_results_CBSD10}. Once again, on CBSD10 (Table 2) or on CBSD68 (Table 5), we observed very similar performance gaps between the compared methods, which again confirms that CBSD10 is large enough to compare accurately the PnP methods.}
        \label{tab:SR_results_CBSD68}
    \end{table}

        \begin{figure}[!ht] \centering
        \begin{subfigure}[b]{.24\linewidth}
            \centering
            \begin{tikzpicture}[spy using outlines={rectangle,blue,magnification=4,size=1.8cm, connect spies}]
            \node {\includegraphics[scale=0.35]{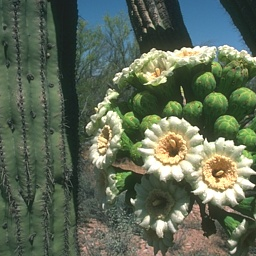}};
            \spy on (-1.3,0.8) in node [left] at (1.6,0.8);
            \node at (-1.2,-1.2) {\includegraphics[scale=0.8]{images/SR/kernels/kernel_4.png}};
            \end{tikzpicture}
            \caption{Clean}
        \end{subfigure}
    \begin{subfigure}[b]{.24\linewidth}
            \centering
            \begin{tikzpicture}[spy using outlines={rectangle,blue,magnification=4,size=0.9cm, connect spies}]
            \node {\includegraphics[scale=0.35]{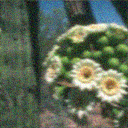}};
            \spy on (-0.65,0.4) in node [left] at (0.62,0.62);
            \end{tikzpicture}
            \caption{Observed}
        \end{subfigure}
    \begin{subfigure}[b]{.24\linewidth}
            \centering
            \begin{tikzpicture}[spy using outlines={rectangle,blue,magnification=4,size=1.8cm, connect spies}]
            \node {\includegraphics[scale=0.35]{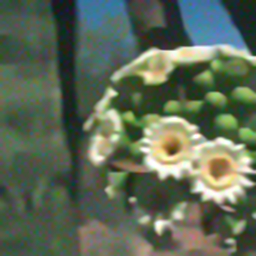}};
            \spy on (-1.3,0.8) in node [left] at (1.6,0.8);
            \end{tikzpicture}
            \caption{RED ($21.28$dB)}
        \end{subfigure}
    \begin{subfigure}[b]{.24\linewidth}
            \centering
            \begin{tikzpicture}[spy using outlines={rectangle,blue,magnification=4,size=1.8cm, connect spies}]
            \node {\includegraphics[scale=0.35]{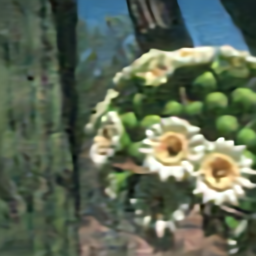}};
            \spy on (-1.3,0.8) in node [left] at (1.6,0.8);
            \end{tikzpicture}
            \caption{IRCNN ($23.15$dB)}
        \end{subfigure}
    \begin{subfigure}[b]{.24\linewidth}
            \centering
            \begin{tikzpicture}[spy using outlines={rectangle,blue,magnification=4,size=1.8cm, connect spies}]
            \node {\includegraphics[scale=0.35]{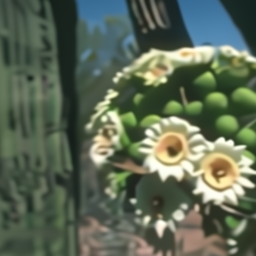}};
            \spy on (-1.3,0.8) in node [left] at (1.6,0.8);
            \end{tikzpicture}
            \caption{DPIR ($23.33$dB)}
        \end{subfigure}
    \begin{subfigure}[b]{.24\linewidth}
            \centering
            \begin{tikzpicture}[spy using outlines={rectangle,blue,magnification=4,size=1.8cm, connect spies}]
            \node {\includegraphics[scale=0.35]{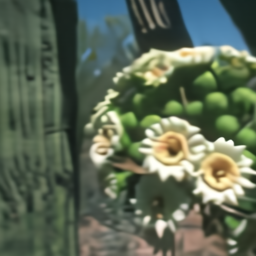}};
            \spy on (-1.3,0.8) in node [left] at (1.6,0.8);
            \end{tikzpicture}
            \caption{GS-PnP ($23.47$dB)}
        \end{subfigure}
     \begin{subfigure}[b]{.24\linewidth}
        \centering
      \hspace*{-.1cm}  \includegraphics[width=3.4cm]{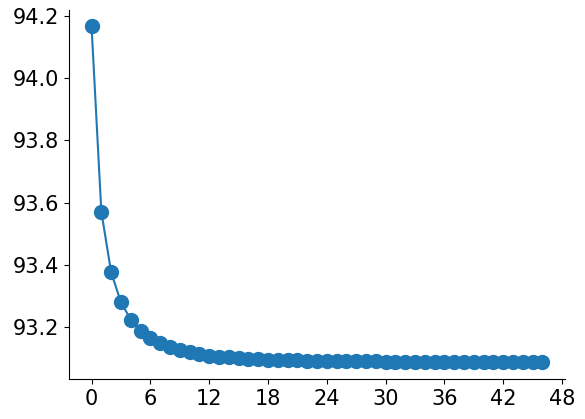}
        \caption{$F(x_k)$}
    \end{subfigure}
    \begin{subfigure}[b]{.24\linewidth}
        \centering
        \includegraphics[width=3.4cm]{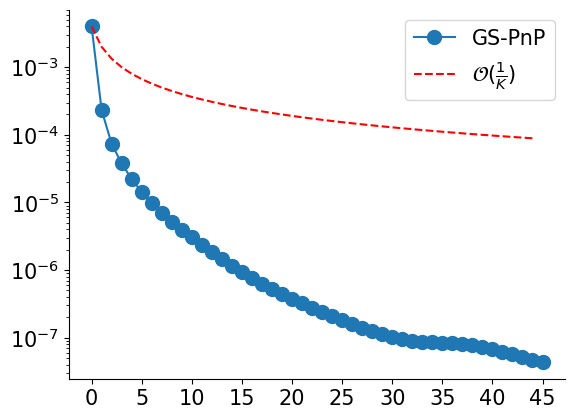}
        \caption{$\gamma_k$ (log scale)}
    \end{subfigure}
    \caption{Super-resolution with various methods on a CBSD10 image degraded with the indicated blur kernel, $s=2$ and input noise level $\nu=0.05$.
    In (g) and (h), we show the evolution of $F(x_k)$ and ${\gamma_k = \min_{0 \leq i \leq k}\norm{x_{i+1}-x_i}^2}/{{\norm{x_0}^2}}$ along our algorithm.
    One can notice that the proposed method GS-PnP manages to extract more structure in the zoomed area than the competing methods.}
    \label{fig:SR2}
    \end{figure}

    \begin{figure}[!ht] \centering
        \begin{subfigure}[b]{.24\linewidth}
            \centering
            \begin{tikzpicture}[spy using outlines={rectangle,blue,magnification=5,size=1.8cm, connect spies}]
            \node {\includegraphics[scale=0.35]{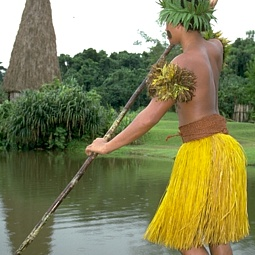}};
            \spy on (0.55,1.05) in node [left] at (0.22,-1.);
            \node at (-1.22,1.22) {\includegraphics[scale=0.8]{images/SR/kernels/kernel_1.png}};
            \end{tikzpicture}
            \caption{Clean}
        \end{subfigure}
    \begin{subfigure}[b]{.24\linewidth}
            \centering
            \begin{tikzpicture}[spy using outlines={rectangle,blue,magnification=5,size=0.9cm, connect spies}]
            \node {\includegraphics[scale=0.35]{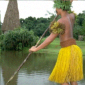}};
            \spy on (0.183,0.35) in node [left] at (0.1,-0.5);
            \end{tikzpicture}
            \caption{Observed}
        \end{subfigure}
    \begin{subfigure}[b]{.24\linewidth}
            \centering
            \begin{tikzpicture}[spy using outlines={rectangle,blue,magnification=5,size=1.8cm, connect spies}]
            \node {\includegraphics[scale=0.35]{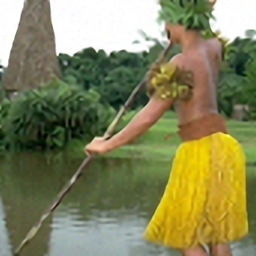}};
            \spy on (0.55,1.05) in node [left] at (0.22,-1.);
            \end{tikzpicture}
            \caption{RED ($25.40$dB)}
        \end{subfigure}
    \begin{subfigure}[b]{.24\linewidth}
            \centering
            \begin{tikzpicture}[spy using outlines={rectangle,blue,magnification=5,size=1.8cm, connect spies}]
            \node {\includegraphics[scale=0.35]{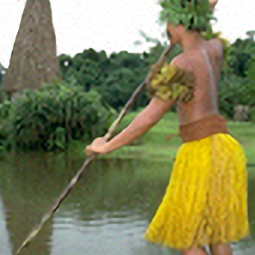}};
            \spy on (0.55,1.05) in node [left] at (0.22,-1.);
            \end{tikzpicture}
            \caption{IRCNN ($25.42$dB)}
        \end{subfigure}
    \begin{subfigure}[b]{.24\linewidth}
            \centering
            \begin{tikzpicture}[spy using outlines={rectangle,blue,magnification=5,size=1.8cm, connect spies}]
            \node {\includegraphics[scale=0.35]{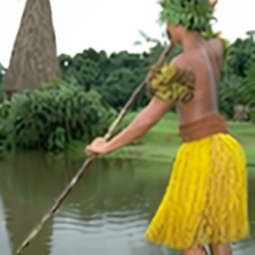}};
            \spy on (0.55,1.05) in node [left] at (0.22,-1.);
            \end{tikzpicture}
            \caption{DPIR ($25.41$dB)}
        \end{subfigure}
    \begin{subfigure}[b]{.24\linewidth}
            \centering
            \begin{tikzpicture}[spy using outlines={rectangle,blue,magnification=5,size=1.8cm, connect spies}]
            \node {\includegraphics[scale=0.35]{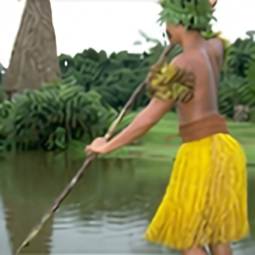}};
            \spy on (0.55,1.05) in node [left] at (0.22,-1.);
            \end{tikzpicture}
            \caption{GS-PnP ($25.46$dB)}
        \end{subfigure}
     \begin{subfigure}[b]{.24\linewidth}
        \centering
         \hspace*{-.1cm}   \includegraphics[width=3.4cm]{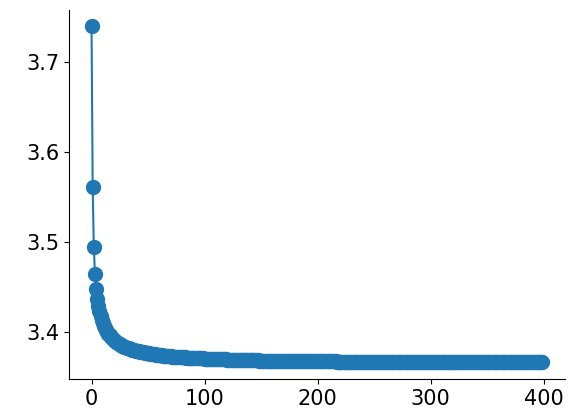}
        \caption{$F(x_k)$}
    \end{subfigure}
    \begin{subfigure}[b]{.24\linewidth}
        \centering
        \includegraphics[width=3.4cm]{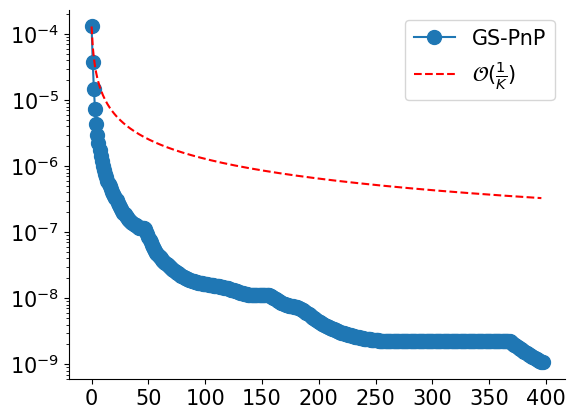}
        \caption{$\gamma_k$ (log scale)}
    \end{subfigure}
    \caption{Super-resolution with various methods on a CBSD10 image degraded with the indicated blur kernel, $s=3$ and input noise level $\nu=0.01$.
    In (g) and (h), we show the evolution of $F(x_k)$ and ${\gamma_k = \min_{0 \leq i \leq k}\norm{x_{i+1}-x_i}^2}/{{\norm{x_0}^2}}$  along our algorithm.
    }
    \label{fig:SR3}
    \end{figure}
    
    \clearpage
    
    \subsection{Inpainting (with non-differentiable data-fidelity term)}
    \label{ssec:inpainting}

    We now propose to apply our PnP scheme to image inpainting with the degradation model
    \begin{equation}
        y = Ax
    \end{equation}
    where $A$ is a diagonal matrix with values in $\{0,1\}$.
    For inpainting, no noise is added to the degraded image. In this context, the data-fidelity term is the indicator function of $A^{-1}(\{y\}) = \{x \ | \ Ax=y\}$: ${f(x) = \imath_{A^{-1}(\{y\})}}$ (which, by definition, equals $0$ on $A^{-1}(\{y\})$ and $+\infty$ elsewhere). Despite being
    non differentiable, $f$ still verifies the assumptions of Theorems~\ref{thm:PGD_1} and~\ref{thm:PGD_2} and convergence is theoretically ensured. The proximal map becomes the orthogonal projection $\Pi_{A^{-1}(\{y\})}$
    \begin{equation}
        \prox_{\tau f}(x) = \Pi_{A^{-1}(\{y\})}(x) = Ay - Ax + x 
    \end{equation}
    In our experiments, the diagonal of $A$ is filled with Bernoulli random variables with parameter $p=0.5$. We run our PnP algorithm with  $\sigma = 10/255$.
    Given the form of $f$, we do not use the backtracking strategy and keep a fixed stepsize. Even if we do not exactly know the Lipschitz constant of $\nabla g_{\sigma}$,
    we observed in Figure~\ref{fig:lip_algo} that, for small noise, it was almost always estimated as slightly larger than~$1$. We thus choose $\lambda \tau = 1$ and empirically
    confirm convergence with this choice in follow-up experiments (see Figure~\ref{fig:inpainting}).
    The algorithm is initialized with $x_0 = y + 0.5(\id-A)y$ (masked pixels with value $0.5$) and terminates when the number of iterations exceeds~$K=100$.
    We found it useful to run the first $10$ iterations of the algorithm at larger noise level $\sigma = 50/255$.
    As $y$ does not have noise, we found preferable not to run the last extra gradient pass from Algorithm \ref{alg:PnP_algo}.

    We show inpainting results on set3C images Figure~\ref{fig:inpainting}. Our PnP restores the input images with high accuracy, including its small details.
    Furthermore, convergence of the residual at rate $\mathcal{O}(\frac{1}{k})$ is empirically confirmed.

    \begin{figure}[ht] \centering
        \begin{subfigure}[b]{.24\linewidth}
            \centering
            \includegraphics[scale=0.35]{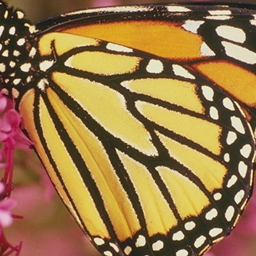}
            \caption*{Clean}
        \end{subfigure}
    \begin{subfigure}[b]{.24\linewidth}
           \includegraphics[scale=0.35]{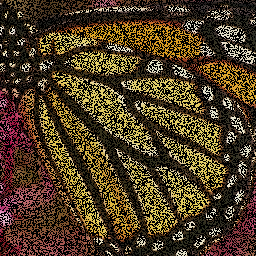}
            \caption*{Observed}
        \end{subfigure}
    \begin{subfigure}[b]{.24\linewidth}
            \centering
            \includegraphics[scale=0.35]{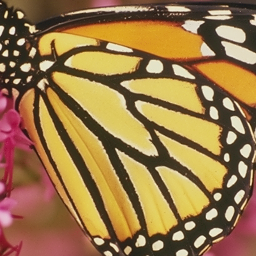}
            \caption*{GS-PnP ($31.65$dB)}
        \end{subfigure}
    \begin{subfigure}[b]{.24\linewidth}
        \centering
        \includegraphics[height=2.65cm]{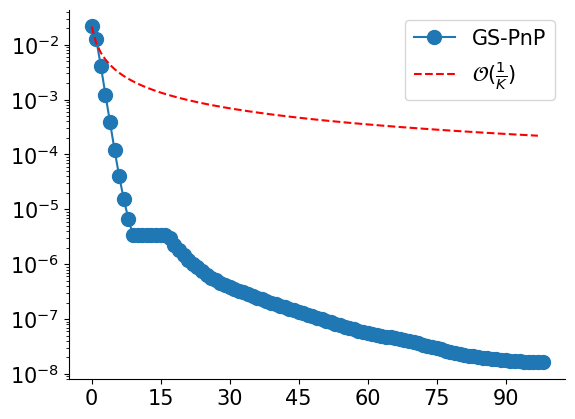}
        \caption{$\gamma_k$ (log scale)}
    \end{subfigure}
        \begin{subfigure}[b]{.24\linewidth}
            \centering
            \includegraphics[scale=0.35]{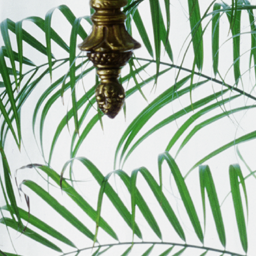}
            \caption*{Clean}
        \end{subfigure}
    \begin{subfigure}[b]{.24\linewidth}
           \includegraphics[scale=0.35]{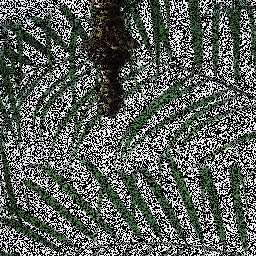}
            \caption*{Observed}
        \end{subfigure}
    \begin{subfigure}[b]{.24\linewidth}
            \centering
            \includegraphics[scale=0.35]{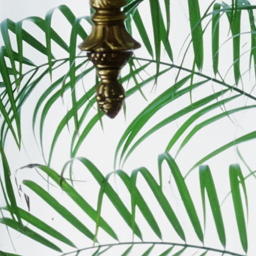}
            \caption*{GS-PnP ($33.65$dB)}
        \end{subfigure}
    \begin{subfigure}[b]{.24\linewidth}
        \centering
        \includegraphics[height=2.65cm]{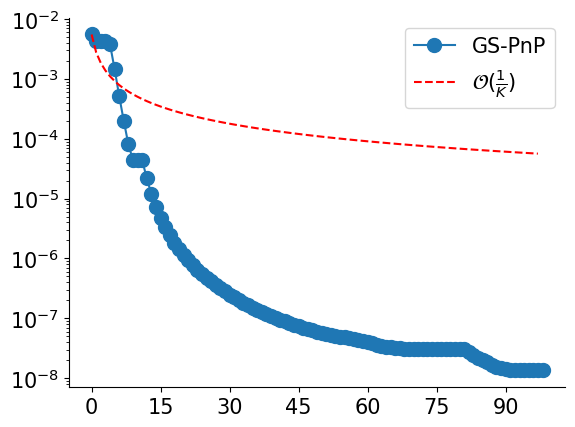}
        \caption*{$\gamma_k$ (log scale)}
    \end{subfigure}
    \begin{subfigure}[b]{.24\linewidth}
            \centering
            \includegraphics[scale=0.35]{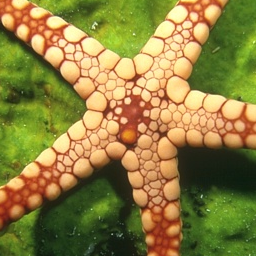}
            \caption*{Clean}
        \end{subfigure}
    \begin{subfigure}[b]{.24\linewidth}
           \includegraphics[scale=0.35]{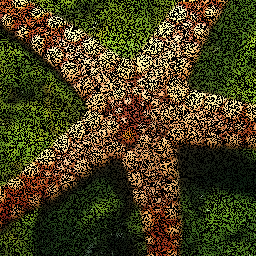}
            \caption*{Observed}
        \end{subfigure}
    \begin{subfigure}[b]{.24\linewidth}
            \centering
            \includegraphics[scale=0.35]{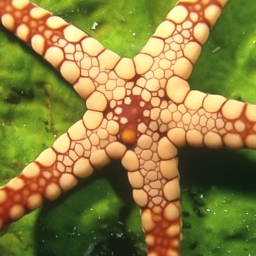}
            \caption*{GS-PnP ($33.71$dB)}
        \end{subfigure}
    \begin{subfigure}[b]{.24\linewidth}
        \centering
        \includegraphics[height=2.65cm]{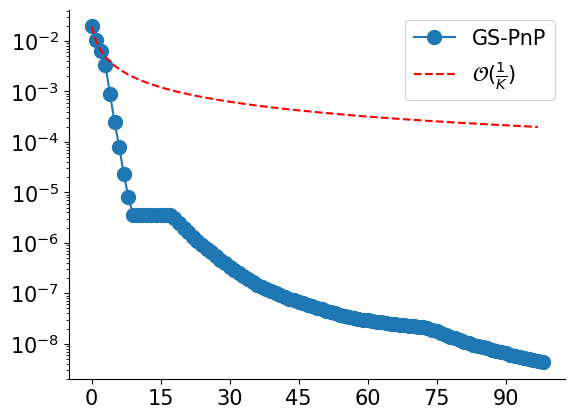}
        \caption*{$\gamma_k$ (log scale)}
    \end{subfigure}
    \caption{Inpainting results on set3C with pixels randomly masked with probability ${p=0.5}$.  In the last colomn, we show the evolution of ${\gamma_k = \min_{0 \leq i \leq k}\norm{x_{i+1}-x_i}^2}/{{\norm{x_0}^2}}$ along the iterations.}
    \label{fig:inpainting}
    \end{figure}

    \clearpage

    \subsection{PSNR convergence curves}
    \label{ssec:PSNR}

    We first plot Figure \ref{fig:PSNR_curves} the evolution of the PSNR along the iterations of the PnP algorithm during the experiments
    of Figure \ref{fig:deblurring1} and Figure \ref{fig:SR1}.  This illustrates that the minimization of $F$ coincides with the maximization of the PSNR, which supports the interest of the optimized functional $F=f+\lambda g_\sigma$.

    \begin{figure}[!ht] \centering
     \begin{subfigure}[b]{.45\linewidth}
        \centering
        \includegraphics[width=6cm]{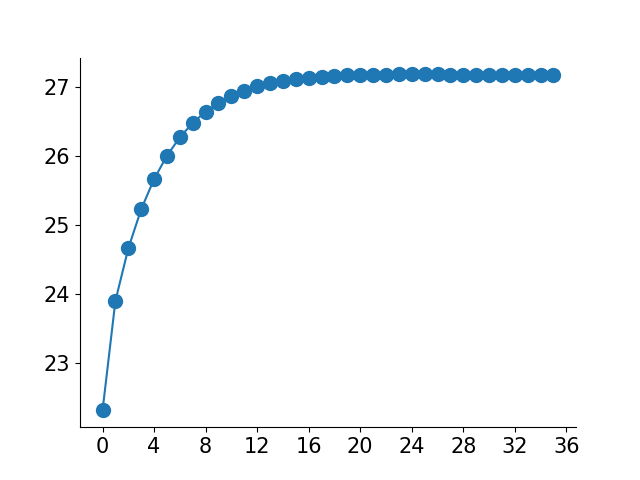}
        \caption{Deblurring}
    \end{subfigure}
    \begin{subfigure}[b]{.45\linewidth}
        \centering
        \includegraphics[width=6cm]{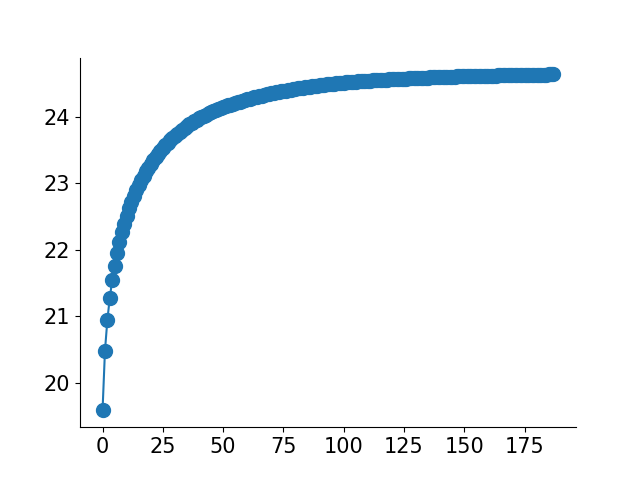}
        \caption{Super-resolution}
    \end{subfigure}
    \caption{Evolution of the PSNR along the iterations of the algorithm, during (a) the deblurring experiment of Figure \ref{fig:deblurring1}
    and (b) the super-resolution experiment of Figure \ref{fig:SR1}. Note that the convergence in PSNR follows the convergence
    in function value (represented in Figures \ref{fig:deblurring1}(g) and  \ref{fig:SR1}(g)). }
    \label{fig:PSNR_curves}
    \end{figure}

    \subsection{Influence of the parameters}
    \label{ssec:parameters}

    In this section we study more deeply the influence of the parameters involved in the GS-PnP algorithm.
    Three parameters are involved: the stepsize $\tau$, the denoiser level $\sigma$ and the regularization parameter $\lambda$.
    \begin{itemize}
        \item The stepsize $\tau$ is automatically tuned with backtracking and is not tweaked heuristically, contrary to other competing methods based on PnP-HQS.
        \item The first regularization parameter $\sigma$ is linked to the used denoiser.
        \item The second regularization parameter $\lambda$ is introduced so as to target the objective function $f + \lambda g_\sigma$, which is the main purpose of our method. It is a classical formulation of inverse problems, and the trade-off parameter $\lambda$ is usually tuned manually.
    \end{itemize}

    Thus, like PnP-HQS, we have two parameters that we are free to tune manually.
    One additional motivation for keeping both $\lambda$ and $\sigma$ as regularization parameters is to be able to use our PnP algorithm with noise-blind denoisers like DnCNN that are independent on $\sigma$.
    In practice, in our experiments, we first roughly estimated $\sigma$ proportionally to the input noise level $\nu$ and tweaked $\lambda$ more precisely. Note that, for each inverse problem, our parameters $\lambda$ and $\sigma$ are fixed for a large variety of kernels, images and noise levels $\nu$. The parameters are not optimized for each image.

    Figure~\ref{fig:param_lamb} and Figure~\ref{fig:param_sigma} respectively plot the average PSNR when deblurring
    the CBSD10 images with different values $\lambda_\nu$ and $\sigma/\nu$, and  fixed $\nu = 0.03$. Both parameters control the strength
    of the regularization.
    We observe that $\lambda_\nu$ and $\sigma$ have a similar influence on the output: for small $\lambda_\nu$ or small $\sigma$, the regularization involved by the denoising pass is not sufficient to counteract the noise amplification done by the proximal steps with large $\tau$ (recall that when $\tau \to \infty$, $\prox_{\tau f}$ tends to the pseudo-inverse of $A$).
    On the contrary, as expected, increasing $\lambda$ and $\sigma$ tends to over-smooth the output result.

    On a single image, we also vary the main parameters of both GS-PnP and RED (Figure \ref{fig:param_lamb_sigma}).
    For fair comparison, the PSNR is computed on the luminance channel only.
    This experiment confirms that, when manually optimizing the parameters for both methods, the PSNR results obtained with GS-PnP and RED remain close, as already observed in Table 6.

\newpage
    \begin{figure}[!ht] \centering
        \begin{subfigure}[b]{\linewidth}
            \centering
            \includegraphics[scale=0.35]{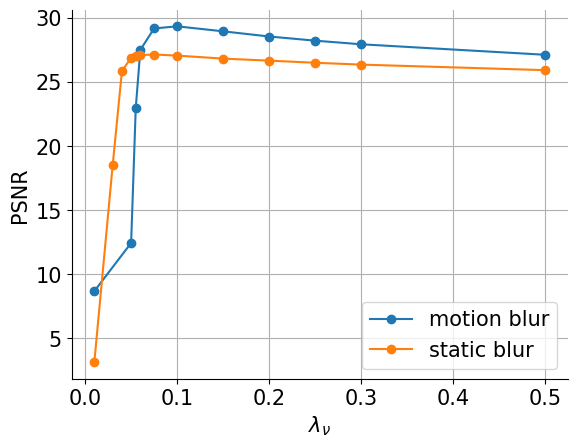}
        \end{subfigure}
    \begin{subfigure}[b]{.24\linewidth}
           \includegraphics[scale=0.35]{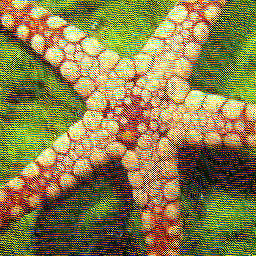}
            \caption*{$\lambda_\nu = 0.05$}
        \end{subfigure}
    \begin{subfigure}[b]{.24\linewidth}
            \centering
           \includegraphics[scale=0.35]{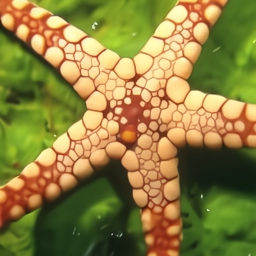}
            \caption*{$\lambda_\nu = 0.1$}
        \end{subfigure}
    \begin{subfigure}[b]{.24\linewidth}
            \centering
            \includegraphics[scale=0.35]{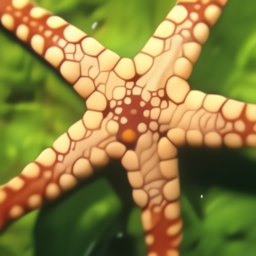}
            \caption*{$\lambda_\nu = 0.5$}
        \end{subfigure}
    \begin{subfigure}[b]{.24\linewidth}
            \centering
            \includegraphics[scale=0.35]{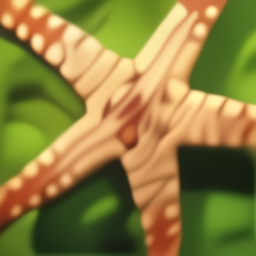}
            \caption*{$\lambda_\nu = 10$}
        \end{subfigure}
    \caption{Influence of the choice of the parameter $\lambda_\nu$ for deblurring. Top: average PSNR when deblurring the images of CBSD10, blurred with motion blurs or static blurs, for different values of~$\lambda_\nu$. The other parameters
    remain unchanged. Bottom: visual results when deblurring ``starfish'' with various $\lambda_\nu$ (in the same conditions as Figure~\ref{fig:deblurring1}).}
    \label{fig:param_lamb}
    \end{figure}
    \begin{figure}[!ht] \centering
        \begin{subfigure}[b]{\linewidth}
            \centering
            \includegraphics[scale=0.35]{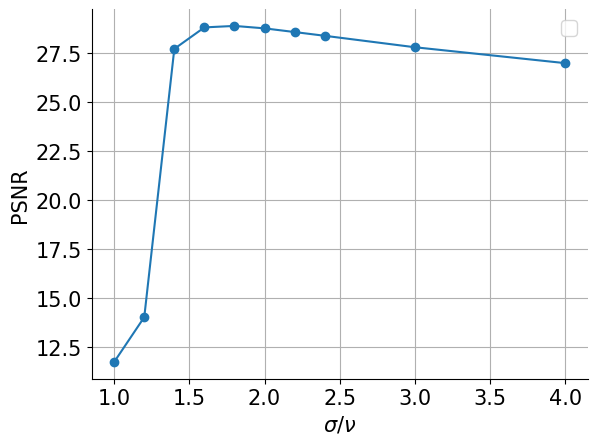}
        \end{subfigure}
    \begin{subfigure}[b]{.24\linewidth}
           \includegraphics[scale=0.35]{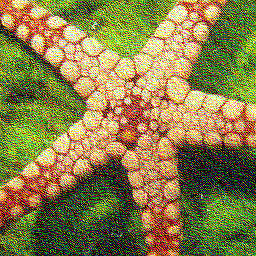}
            \caption*{$\sigma / \nu = 1$}
        \end{subfigure}
    \begin{subfigure}[b]{.24\linewidth}
            \centering
           \includegraphics[scale=0.35]{images/params/img_2_deblur_lamb01.png}
            \caption*{$\sigma / \nu = 1.8$}
        \end{subfigure}
    \begin{subfigure}[b]{.24\linewidth}
            \centering
            \includegraphics[scale=0.35]{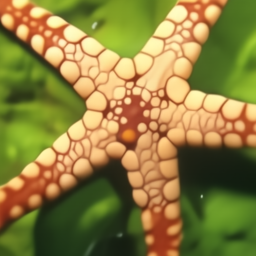}
            \caption*{$\sigma / \nu = 4$}
        \end{subfigure}
    \begin{subfigure}[b]{.24\linewidth}
            \centering
            \includegraphics[scale=0.35]{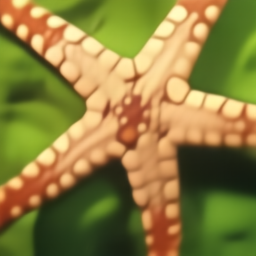}
            \caption*{$\sigma / \nu = 10$}
        \end{subfigure}
    \caption{Influence of the choice of the parameter $\sigma$ for deblurring. Top: average PSNR when deblurring
    the images of CBSD10, blurred with the 10 kernels, for different values of $\sigma / \nu$, with ${\nu = 0.03}$. The other parameters
    remain unchanged. Bottom: visual results when deblurring ``starfish'' with various $\sigma / \nu$, with $\nu = 0.03$ (in the same conditions as Figure~\ref{fig:deblurring1}).}
    \label{fig:param_sigma}
    \end{figure}
    \begin{figure}[!ht] \centering
        \begin{subfigure}[b]{0.45\linewidth}
            \centering
            \includegraphics[scale=0.4]{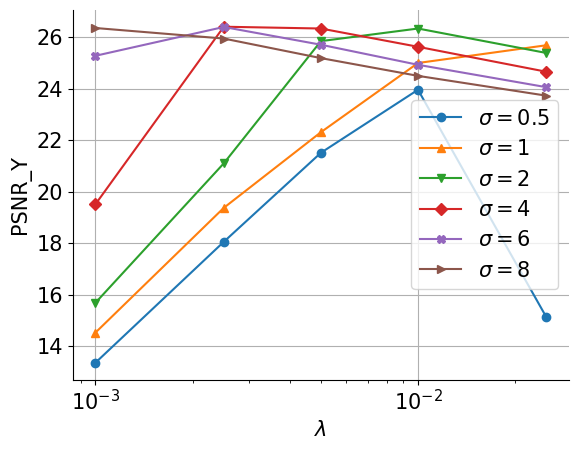}
            \caption{RED}
        \end{subfigure}
        \begin{subfigure}[b]{.45\linewidth}
             \centering
           \includegraphics[scale=0.4]{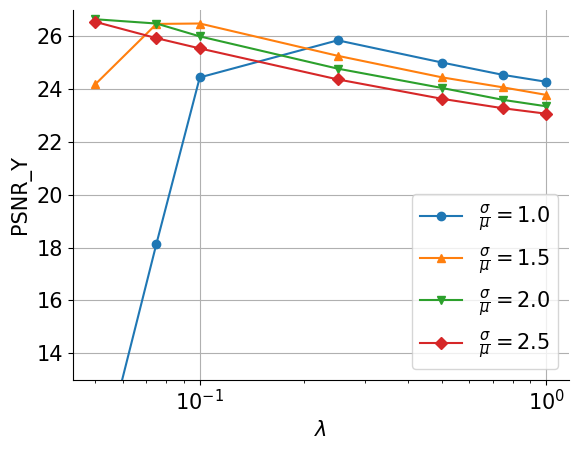}
             \caption{GSPnP}
        \end{subfigure}
    \caption{Influence of the parameters $\sigma$ and $\lambda$ for RED and GS-PnP when deblurring the single image ``starfish'' degraded with uniform kernel and $\nu = 7.65 / 255$.
    For fair comparison, like in Table \ref{tab:deblurringY_results}, the PSNR is calculated on the $Y$ channel only.
     Remember that the results of Table \ref{tab:deblurringY_results} were obtained with $\sigma=3.25, \lambda=0.02$ for RED and $\sigma=2\nu, \lambda=0.075$ for GS-PnP.}
    \label{fig:param_lamb_sigma}
    \end{figure}
    
    \newpage

    \subsection{Influence of the initialization}
    \label{ssec:initialization}
 In Figure~\ref{fig:param_init}, we  examine the robustness of the method to the initialization. As can be seen on this experiment, the output image does not change much even for relatively large perturbation of the initialization. We thus observe a robustness to the initialization, both in terms of visual aspect and PSNR. We also observed that initializing with a uniform image does not change the output of the algorithm.
We suggest that this robustness comes from the first proximal steps on the data-fidelity term (with a large $\tau$), which prevent the algorithm to be stuck in a poor local minimum. 
Note that the use of large $\tau$ in the beginning of the algorithm is possible thanks to the  backtracking procedure.

     \begin{figure}[!ht] \centering
        \begin{subfigure}[b]{\linewidth}
            \centering
            \includegraphics[scale=0.4]{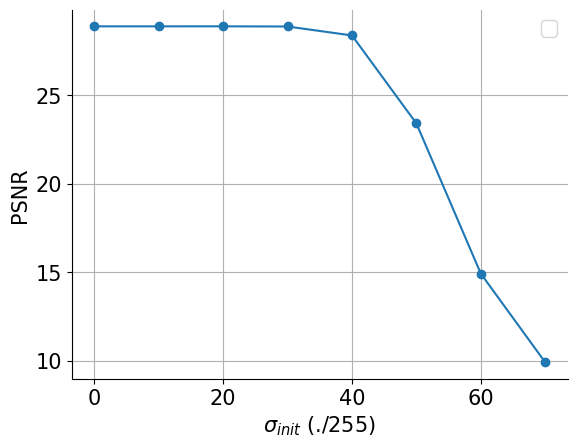}
        \end{subfigure}
    \begin{subfigure}[b]{.24\linewidth}
           \includegraphics[scale=0.35]{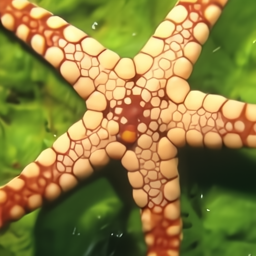}
            \caption*{$\sigma_{\text{init}} = 20/255$}
        \end{subfigure}
    \begin{subfigure}[b]{.24\linewidth}
            \centering
           \includegraphics[scale=0.35]{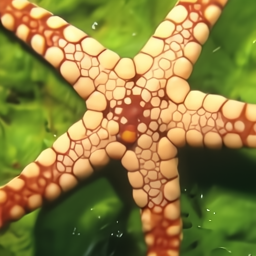}
            \caption*{$\sigma_{\text{init}} = 40/255$}
        \end{subfigure}
    \begin{subfigure}[b]{.24\linewidth}
            \centering
            \includegraphics[scale=0.35]{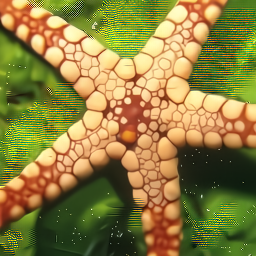}
            \caption*{$\sigma_{\text{init}} = 60/255$}
        \end{subfigure}
    \caption{Influence of the initialitation $z_0$ on the deblurring result. Instead of initializing with the blurred image $z_0 = y$ as done in Section~\ref{ssec:deblurring}, we set  $z_0 = y + \xi_{\sigma_{\text{init}}}$ with $\xi_{\sigma_{\text{init}}}$ an AWGN with
    standard deviation~$\sigma_{\text{init}}$. By increasing the noise level $\sigma_{\text{init}}$, we investigate the robustness
    of the result to changes in the initialization of the algorithm.
    Top: PSNR values, along with values of~$\sigma_{\text{init}}$.
    Bottom: corresponding visual results on ``starfish'' with various $\sigma_{\text{init}}$ (in the same conditions as Figure~\ref{fig:deblurring1}).
    The algorithm is robust to noisy initializations up to a relatively large value of $\sigma_{\text{init}}$.}
    \label{fig:param_init}
    \end{figure}

    \subsection{Convergence of DPIR (\cite{zhang2021plug})} \label{ssec:convergence}

    In this section, we illustrate that, contrary to our method, the DPIR algorithm is not guaranteed to converge and can even easily diverge.
    In Figure~\ref{fig:conv_curves}, we plot the convergence curves of both DPIR and our GS-PnP when deblurring the ``starfish'' image degraded with a motion kernel and $\nu=0.01$.
    In the original DPIR paper \cite{zhang2021plug},  only 8 iterations are used with decreasing~$\tau$ and~$\sigma$. More precisely, $\sigma$ decreases uniformly
    in log-scale from $49$ to the input noise level $\nu$, and $\tau$ is set  proportional to $\sigma^2$.
    In order to study the asymptotic behaviour of the method, we propose two strategies to run DPIR with $1000$ iterations:
    \begin{itemize}
        \item[(i)] Decreasing $\sigma$ from $49$ to $\nu$ over $1000$ iterations instead of $8$ (Figure~\ref{fig:conv_curves}, row 1).
        \item[(ii)] Decreasing $\sigma$ from $49$ to $\nu$ in $8$ iterations, and then keep the last values of  $\sigma$ and $\tau$ for the the remaining iterations (Figure~\ref{fig:conv_curves}, row 2).
    \end{itemize}
    As illustrated by the plot of $\sum_{i\leq k}||x_{i+1}-x_i||^2$ (third column), DPIR fails to converge with both strategies, even if the residual $||x_{k+1}-x_k||^2$ tends to decrease with the second strategy.
    This divergence also involves a loss of restoration performance in terms of PSNR (first column).
    On the other hand, as theoretically shown in this paper, the residual $\norm{x_{k+1} - x_k}^2$ with GS-PnP tends to~$0$ (reaches $\sim 10^{-13}$ before the activation of backtracking, versus $\sim 10^{-4}$ for DPIR) and its series converges.

    \begin{figure}[ht] \centering
    \raisebox{0.8cm}{\rotatebox[origin=t]{90}{ DPIR (i)}}
     \begin{subfigure}[b]{.3\linewidth}
        \centering
        \includegraphics[width=3cm]{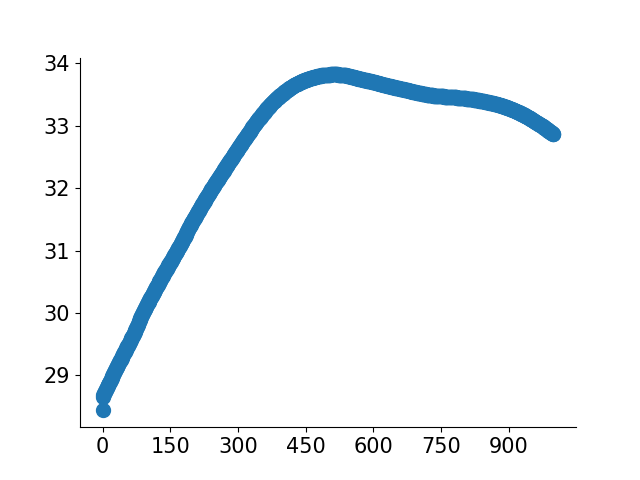}
    \end{subfigure}
    \begin{subfigure}[b]{.35\linewidth}
        \centering
        \includegraphics[width=3cm]{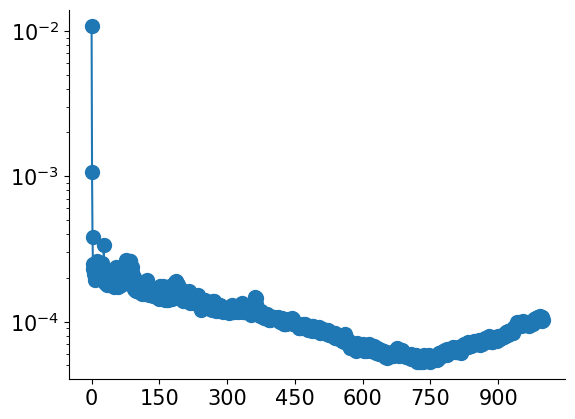}
    \end{subfigure}
    \begin{subfigure}[b]{.3\linewidth}
        \centering
        \includegraphics[width=3cm]{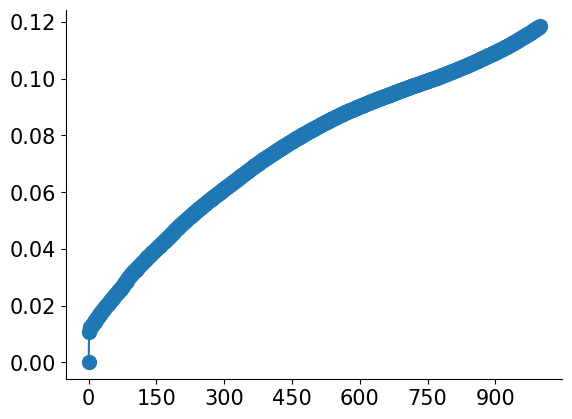}
    \end{subfigure} \\
    \raisebox{0.8cm}{\rotatebox[origin=t]{90}{ DPIR (ii)}}
    \begin{subfigure}[b]{.3\linewidth}
        \centering
        \includegraphics[width=3cm]{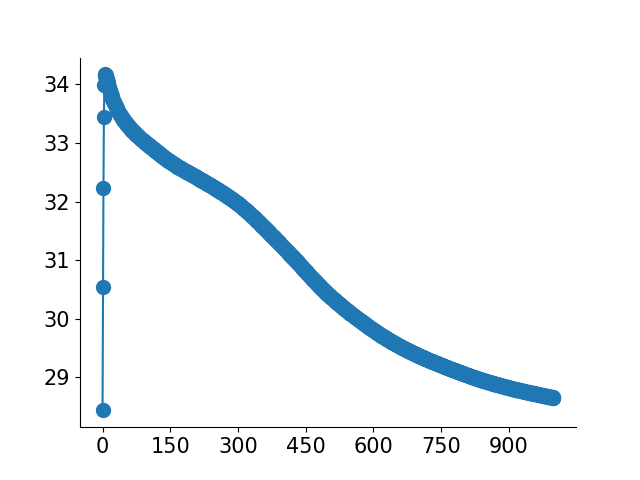}
    \end{subfigure}
    \begin{subfigure}[b]{.35\linewidth}
        \centering
        \includegraphics[width=3cm]{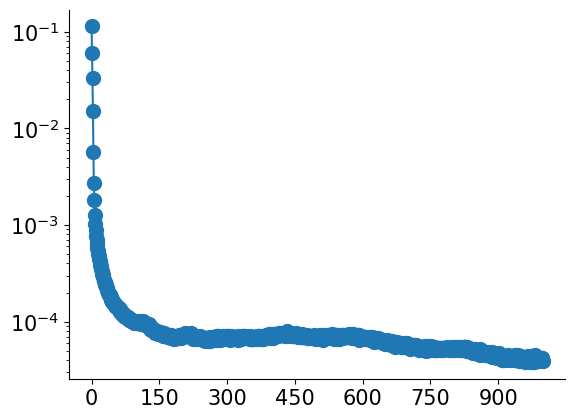}
    \end{subfigure}
    \begin{subfigure}[b]{.3\linewidth}
        \centering
        \includegraphics[width=3cm]{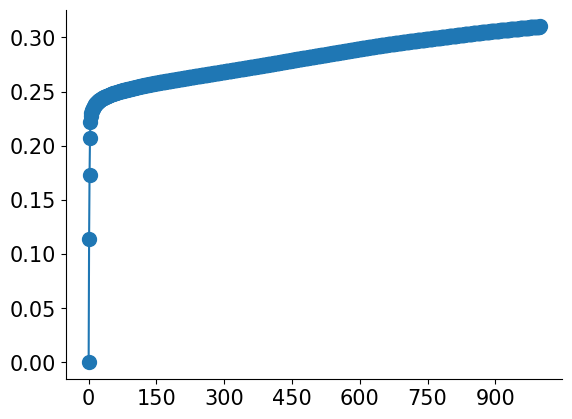}
    \end{subfigure} \\
    \raisebox{1cm}{\rotatebox[origin=t]{90}{ GS-PnP}}
    \begin{subfigure}[b]{.3\linewidth}
        \centering
        \includegraphics[width=3cm]{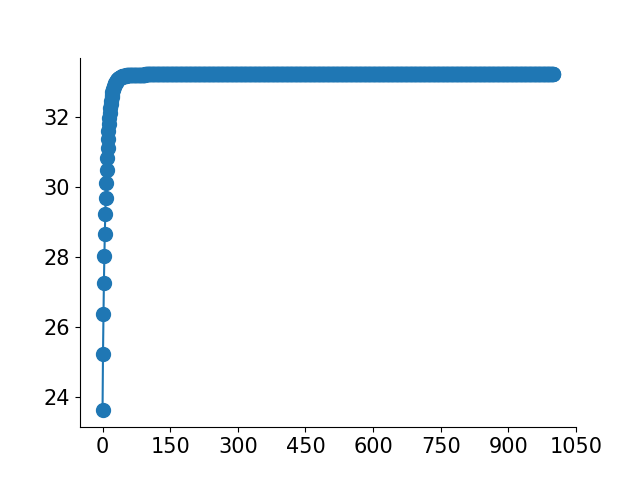}
        \caption*{PSNR($x_k$)}
    \end{subfigure}
    \begin{subfigure}[b]{.35\linewidth}
        \centering
        \includegraphics[width=3cm]{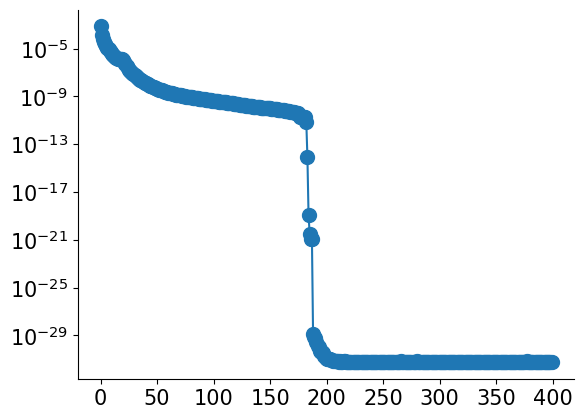}
        \caption*{$\norm{x_{k+1} - x_k}^2/\norm{x_0}^2$ (log scale) }
    \end{subfigure}
    \begin{subfigure}[b]{.3\linewidth}
        \centering
        \includegraphics[width=3cm]{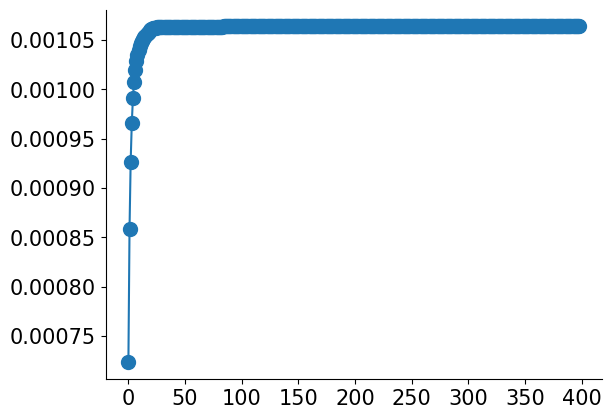}
        \caption*{$\sum_{i \leq k}\norm{x_{i+1} - x_i}^2/\norm{x_0}^2$ }
    \end{subfigure}
    \caption{Convergence of the DPIR algorithm versus convergence of GS-PnP when deblurring the ``starfish'' image.
    The two first rows display results obtained with DPIR with two different strategies used for decreasing $\sigma$: in the first row, $\sigma$ is decreased from $49$ to $\nu$ over $1000$ iterations; in the second row, $\sigma$ is decreased in $8$ iterations and then kept fixed for the remaining iterations. }
    \label{fig:conv_curves}
    \end{figure}

    \end{document}